\numberwithin{equation}{section}
\newtheorem{theorem}{Theorem}[section]
\newtheorem{lemma}[theorem]{Lemma}
\newtheorem{proposition}[theorem]{Proposition}
\newtheorem{assumption}[theorem]{Assumption}
\newtheorem{definition}[theorem]{Definition}
\newtheorem{corollary}[theorem]{Corollary}
\newcommand{\bR}{\mathbb{R}}
\newcommand{\bZ}{\mathbb{Z}}
\newcommand{\bP}{\mathbb{P}}
\newcommand{\calF}{\mathcal{F}}
\newcommand{\calI}{\mathcal{I}}
\newcommand{\calJ}{\mathcal{J}}
\newcommand{\calG}{\mathcal{G}}
\newcommand{\calN}{\mathcal{N}}
\title[Graph Neural Networks for (Mixed-Integer) Quadratic Programs]{Expressive Power of Graph Neural Networks for (Mixed-Integer) Quadratic Programs}
\author{Ziang Chen$^*$}
\address{(ZC) Department of Mathematics, Massachusetts Institute of Technology, Cambridge, MA 02139.}
\email{ziang@mit.edu}
\author{Xiaohan Chen$^*$}
\address{(XC) Decision Intelligence Lab, Damo Academy, Alibaba US, Bellevue, WA 98004.}
\email{xiaohan.chen@alibaba-inc.com}
\author{Jialin Liu$^*$}
\address{(JL) Department of Statistics and Data Science, University of Central Florida, Orlando, FL 32826.}
\email{jialin.liu@ucf.edu}
\author{Xinshang Wang}
\address{(XW) Decision Intelligence Lab, Damo Academy, Alibaba US, Bellevue, WA 98004.}
\email{xinshang.w@alibaba-inc.com}
\author{Wotao Yin}
\address{(WY) Decision Intelligence Lab, Damo Academy, Alibaba US, Bellevue, WA 98004.}
\email{wotao.yin@alibaba-inc.com}
\date{\today}
\thanks{$^*$ ZC, XC, and JL contribute equally. Corresponding author: Jialin Liu, jialin.liu@ucf.edu.}
\thanks{The work of ZC is supported in part by the National Science
Foundation via grant DMS-2509011.}
\begin{document}
\begin{abstract}
  Quadratic programming (QP) is the most widely applied category of problems in nonlinear programming. Many applications require real-time/fast solutions, though not necessarily with high precision. Existing methods either involve matrix decomposition or use the preconditioned conjugate gradient method. For relatively large instances, these methods cannot achieve the real-time requirement unless there is an effective preconditioner.
Recently, graph neural networks (GNNs) opened new possibilities for QP. Some promising empirical studies of applying GNNs for QP tasks show that GNNs can capture key characteristics of an optimization instance and provide adaptive guidance accordingly to crucial configurations during the solving process, or directly provide an approximate solution. 
However, the theoretical understanding of GNNs in this context remains limited. Specifically, it is unclear what GNNs can and cannot achieve for QP tasks in theory. This work addresses this gap in the context of \textit{linearly constrained QP} tasks. In the \textit{continuous} setting, we prove that message-passing GNNs can universally represent fundamental properties of convex quadratic programs, including feasibility, optimal objective values, and optimal solutions. In the more challenging \textit{mixed-integer} setting, while GNNs are not universal approximators, we identify a subclass of QP problems that GNNs can reliably represent. 
\end{abstract}

\maketitle

\section{Introduction}
\label{sec:intro}

\textbf{Quadratic programming (QP)} is an important type of optimization problem with applications 
\cites{vogelstein2015fast,markowitz52,rockafellar1987linear}. 
It aims to minimize a quadratic objective function while satisfying specified constraints. 
When all the constraints are linear, we call a QP problem a linearly constrained quadratic program (LCQP). When they also involve quadratic inequalities, we call the problem a quadratically constrained quadratic program (QCQP).  
When some variables are restricted to integers, the problem becomes a mixed-integer QP. This study focuses on LCQP and its mixed-integer variant, MI-LCQP.

In many applications, finding solutions quickly is prioritized over perfect precision. For instance, ride-hailing platforms like Uber or Lyft require quick driver-passenger matching to reduce wait times, even without optimal solutions. Similarly, financial trading algorithms must rapidly adjust portfolios to market changes, prioritizing speed over optimality.

Unfortunately, existing methods for QP often rely on computationally expensive techniques such as matrix decomposition and the preconditioned conjugate gradient method (PCG). For example, LU decomposition typically requires $\mathcal{O}(n^3)$ operations for a $n \times n$ matrix~\cite{golub2013matrix}, though advanced algorithms can achieve lower complexities. The PCG method requires $\mathcal{O}(n^2)$ operations per iteration, with slow convergence for ill-conditioned matrices~\cite{shewchuk1994introduction}. These challenges highlight the need for novel approaches to meet real-time application demands.

Machine learning brings new chances to QP. Recent research shows that deep neural networks (DNNs) can significantly improve the efficiency when solving QP. Based on DNNs' role, these studies can be categorized as:

\underline{\textbf{(Type I).}} DNNs generate adaptive configurations for QP solvers, tailored to specific QP instances, thereby accelerating the solving process.~\cites{bonami2018learning,bonami2022classifier,ichnowski2021accelerating,getzelman2021learning,jung2022learning,king2024metric}. This approach requires DNNs to capture in-depth features of QP instances and provide customized guidance to the solver. 

\underline{\textbf{(Type II).}} DNNs replace or warm-start a QP solver. Here, DNNs take in a QP and directly output an approximate solution. These solutions can be used as final outputs or as initial guesses to accelerate QP solvers~\cites{nowak2017note,chen2018approximating,karg2020efficient,wang2020combinatorial,wang2020learning,wang2021neural,qu2021adaptive,gao2021deep,bertsimas2022online,liu2022revocable,sambharya2023end,pei2023reinforcement,tan2024ensemble}.

\paragraph{\textbf{GNNs.}} Among the various types of DNNs, this paper focuses on {graph neural networks (GNNs)} ~\cite{scarselli2008graph}.
By conceptualizing QPs as graphs (Figure \ref{fig:qp-graph}), GNNs can be applied and efficiently handle these tasks~\cites{nowak2017note,wang2020learning,wang2021neural,qu2021adaptive,gao2021deep,tan2024ensemble,jung2022learning}.  
For instance, \cite{wang2021neural} use GNNs to solve Lawler's QAP \cite{lawler1963quadratic}, while \cites{wang2019learning,Yu2020Learning} apply GNNs to Koopman-Beckmann's QAP \cite{loiola2007survey}.
They exploit key strengths of GNNs: \textit{adaptability to varying graph sizes}, allowing the same model applied to various QPs, and \textit{permutation invariance}, ensuring consistent outputs regardless of node order.

\paragraph{\textbf{Expressive power}} Despite their notable advantages, GNNs face fundamental limitations. As \cite{xu2019powerful} pointed out, GNNs' expressive power is limited: they are not universal approximators for all graph-based functions. 
Here, \textit{expressive power}, a core concept in deep learning theory, measures the existence of neural networks under a given structure that can approximate (or represent) a broad class of functions, and \textit{universal approximation} guarantees that a model can approximate any functions within its domain.

The contrast between the successful empirical applications of GNNs and their theoretical limitations reveals a significant gap. 
However, in practice, GNNs do not need to approximate all possible functions but only specific, meaningful mappings relevant to QPs. This leads to the key question motivating this work: \textit{Can GNNs, despite their limitations for general graph-based mappings, exhibit sufficient expressive power to predict the key properties of QPs?}

To address this question, we focus on two types of functions according to the nature of the application. For Type I applications, we investigate whether GNNs can accurately map a QP to its critical features, focusing on the \textit{feasibility} and \textit{optimal objective value}. For Type II, we explore whether GNNs can map a QP to one of its \textit{optimal solutions}. 
Therefore, we ask:
\begin{equation}
    \label{eq:question}
    \begin{aligned}
        &\textit{Can GNNs accurately predict the feasibility, optimal} \\
        &\textit{objective value, and an optimal solution of a QP?}
    \end{aligned}
\end{equation}
The literature has explored the expressive powers of GNNs on general graph tasks \cites{xu2019powerful,azizian2020expressive,geerts2022expressiveness,zhang2023expressive,li2022expressive, sato2020survey}.
However, significant gaps remain in understanding how these results relate to QP. 
The most relevant works \cites{chen2022representing-lp,chen2022representing-milp} investigate the expressive power of GNNs for (mixed-integer) linear programs (MILPs), but their analysis highly depends on the linear structure and does not cover nonlinear programs like QP. 
A concurrent study by \cite{wu2024representing} investigates GNN representations for quadratically constrained quadratic programs (QCQPs) using a tripartite graph structure. While both studies address convex LCQPs, the methodologies and focus differ: \cite{wu2024representing} consider general QCQPs, including quadratic constraints, and use a tripartite graph representation, whereas our work focuses on (MI-)LCQPs with a bipartite graph construction. Our analysis also covers mixed-integer settings and proposes an alternative approach to handling quadratic constraints, yielding partially overlapping but distinct results.

\paragraph{\textbf{Contributions}} 
As several studies have empirically shown that incorporating GNNs can significantly enhance the performance of QP solvers, this paper aims to theoretically analyze the expressive power of GNNs in such tasks, identify potential areas for improvement, and highlight key considerations. 
Specifically, the contributions of this paper include:
\begin{itemize} 
    \item (GNN for LCQP). In the continuous setting, where all variables are allowed to take fractional numbers, we provide an affirmative answer to question \eqref{eq:question} with convexity and present a nonconvex counterexample.
    \item (GNN for MI-LCQP). In mixed-integer settings, where some variables must be integers, we provide counterexamples showing that GNNs cannot universally solve all tasks, giving a negative answer to question \eqref{eq:question}. 
    \item Despite this limitation, we identify specific, precisely defined subclasses of MI-LCQP 
    where GNNs succeed.
    Importantly, we also present criteria for determining whether an MI-LCQP belongs to this subclass, which can be efficiently verified numerically. 
\end{itemize}

\section{Preliminaries}
\label{sec:pre}

We focus on linearly constrained quadratic programming (LCQP), which is formulated as follows:
\begin{equation}\label{eq:LCQP}
	\min_{x\in\bR^n} ~~ \frac{1}{2}x^\top Q x + c^\top x,\quad \textup{s.t.} ~~ Ax\circ b,\ l\leq x\leq u,
\end{equation}
where $Q\in\bR^{n\times n}$, $c\in\bR^n$, $A\in\bR^{m\times n}$, $b\in\bR^m$, $l\in(\bR\cup\{-\infty\})^n$, $u\in(\bR\cup\{+\infty\})^n$, and $\circ\in\{\leq, = ,\geq\}^m$. In this paper, we always assume that $Q$ is symmetric.

\paragraph{\textbf{Basic concepts of LCQPs}} An $x$ satisfying all constraints of \eqref{eq:LCQP} is named a \textit{feasible solution}. The set of all feasible solutions, $X =: \{x \in \bR^n: Ax\circ b,\ l\leq x\leq u\}$, is referred to as the \textit{feasible set}. The LCQP is \textit{feasible} if this set is non-empty; otherwise, it is infeasible.  The value of $\frac{1}{2}x^\top Q x + c^\top x$ is the \textit{objective value}. Its infimum across $X$ is termed the \textit{optimal objective value}. If this infimum is $-\infty$ (the objective value could indefinitely decrease), the LCQP is named \textit{unbounded}. 
A feasible and bounded LCQP must yield an optimal solution~\cite{eaves1971quadratic}.

\begin{figure*}[t] 
    \centering
    \resizebox{\textwidth}{!}{%
        \begin{tikzpicture}

\definecolor{modernblue}{RGB}{52, 152, 255}   
\definecolor{modernred}{RGB}{255, 76, 60}     
\definecolor{moderngreen}{RGB}{50, 200, 50}  
\definecolor{moderngray}{RGB}{200,200,200} 

\newcommand{\xone}{{x_1}}
\newcommand{\xtwo}{{x_2}}
\newcommand{\xthr}{{x_3}}

    \node[draw, rounded corners, align=center, fill=modernblue] (x1) at (3.0, 0.5) {$w_1=$\\$(c_1,l_1,+\infty)$};
    \node[draw, rounded corners, align=center, fill=modernblue] (x2) at (5.25, 0.5) {$w_2=$\\$(c_2,l_2,+\infty)$};
    \node[draw, rounded corners, align=center, fill=modernblue] (x3) at (7.5, 0.5) {$w_3=$\\$(c_3,-\infty,u_3)$};

    \node[draw, rounded corners, align=center, fill=moderngreen] (c1) at (5, 2.2) {$v_1=$\\$(b_1, \leq)$};
    \node[draw, rounded corners, align=center, fill=moderngreen] (c2) at (7, 2.2) {$v_2=$\\$(b_2, \geq)$};

    \draw (c1) -- (x1) node[midway, left] {$A_{11}$};
    \draw (c1) -- (x2) node[midway, left] {$A_{12}$};
    \draw (c2) -- (x2) node[midway, right] {$A_{22}$};
    \draw (c2) -- (x3) node[midway, right] {$A_{23}$};

    \draw (x1) to[bend right] node[below,yshift=4pt] {$Q_{12}$} (x2);
    \draw (x1) to[bend right=60] node[below,yshift=7pt,xshift=20pt] {$Q_{21}$} (x2);
    \draw (x1) to[loop below, out=-110, in=-70, looseness=4] node[left] {$Q_{11}$} (x1);
    \draw (x2) to[loop below, out=-110, in=-70, looseness=4] node[right,xshift=3pt] {$Q_{22}$} (x2);
    \draw (x3) to[loop below, out=-110, in=-70, looseness=4] node[right,xshift=3pt] {$Q_{33}$} (x3);

        \node[align=left, rounded corners] (obj) at (-2.5,1) { 
            \setlength{\tabcolsep}{1pt} 
            \begin{tabular}{rrrrrrl}
            $\min$ & $c_1 ~ \xone$ & $+$ & $c_2 ~ \xtwo$ & $+$ & $c_3 ~ \xthr$ & $ ~ + ~ \frac{1}{2}
              [ \xone ~ \xtwo ~ \xthr ]
              
              \begingroup
            \setlength{\arraycolsep}{1pt} 
            \begin{bmatrix}
                Q_{11} & Q_{12} & \\
                Q_{21} & Q_{22} & \\
                & & Q_{33}
            \end{bmatrix} 
            \endgroup
            
            \begin{bmatrix}
                \xone \\ \xtwo \\ \xthr
            \end{bmatrix}$\\ [6pt]
            s.t. & $~ ~ \xone $ & $\geq l_1,$ & $ ~ ~ \xtwo$ & $ \geq l_2,$ & $~ ~ \xthr$ & $\leq u_3$ \\ [6pt]
            & $A_{11} ~ \xone$ & $+$ & $A_{12} ~ \xtwo$ & & & $\leq b_1$ \\ [6pt]
            & & & $A_{22} ~ \xtwo$ & $+$ & $A_{23} ~ \xthr$ & $\geq b_2$ \\ [6pt]
            \end{tabular} 
            \\
        };

    \begin{scope}[on background layer]
    \fill[moderngray] (0, 1.4) rectangle (2.1, 2.9);
    \fill[moderngray] (2.0, -0.6) rectangle (8.5, 0.1);
    \draw[moderngray,densely dashed, ultra thick] (1., 1.4) to[bend right] (2.0,-0.2);

    \fill[fill=moderngreen] (-7.5, -0.55) rectangle (-1.5, -0.15);
    \fill[fill=moderngreen] (-7.5,0.2) rectangle (-1.5,0.6);

    \fill[fill=modernblue] (-6.75, -0.6) rectangle (-6.25,2.5);
    \fill[fill=modernblue] (-4.8, -0.6) rectangle (-4.3,2.5);
    \fill[fill=modernblue] (-2.9, -0.6) rectangle (-2.4,2.5);
    \end{scope}

\end{tikzpicture}
    }
    \caption{LCQP and its graph representation: All the information from the LCQP is fully encoded within the node or edge attributes.}
    \label{fig:qp-graph}
\end{figure*}

\paragraph{\textbf{Graph representation}} We present a graph structure, termed the \textit{LCQP-graph}, $G_{\textup{LCQP}} = (V, W, A, Q, H_V, H_W)$, that encodes all the elements of a LCQP.
The graph contains two distinct types of nodes: Nodes in $V = \{1,2,\dots,m\}$, labeled as $i$, represent the $i$-th constraint and are called \textit{constraint nodes}; Nodes in $W = \{1,2,\dots,n\}$, labeled as $j$, represent the $j$-th variable and are known as \textit{variable nodes}. 
An edge connects $i\in V$ to $j\in W$ if $A_{ij}$ is nonzero, with $A_{ij}$ serving as the edge weight. Similarly, the edge between nodes $j,j'\in W$ exists if $Q_{j j'}\neq 0$, with $Q_{j j'}$ as the edge weight. Self loops ($j=j'$) are permitted.
Attributes (or features) $v_i= (b_i,\circ_i)$ are attached to the $i$-th constraint node for $i\in V$. The collection of all such attributes is denoted as $ H_V = (v_1,v_2,\dots,v_m)$.
Attributes $w_j = (c_j,\ell_j,u_j)$ are attached to the $j$-th variable node for $j\in W$ and their collection is denoted as $H_W = (w_1,w_2,\dots,w_n)$.

Such a representation, illustrated by Figure \ref{fig:qp-graph}, is a fundamental and ``minimal" approach in the sense that every entry in $(A,b,c,Q,l,u,\circ)$ is used exactly once. 
While this particular representation is only detailed in \cite{jung2022learning}, it forms the foundation of numerous related studies. 
For instance, removing nodes in $V$ and their associated edges reduces the graph into the assignment graph used in graph matching problems~\cites{nowak2017note,wang2020learning,wang2021neural,qu2021adaptive,gao2021deep,tan2024ensemble}. 
Removing edges associated with $Q$ simplifies the graph to a bipartite structure and reduces LCQP to LP~\cites{chen2022representing-lp,fan2023smart,liu2024learning,qian2024exploring}. By adding an extra node feature, an approach detailed in Section \ref{sec:mi-lcqp}, this graph can also express mixed-integer programs~\cites{gasse2019exact,chen2022representing-milp,nair2020solving,gupta2020hybrid,shen2021learning,gupta2022lookback,khalil2022mip,paulus2022learning,scavuzzo2022learning,liu2022learning,huang2023searching,wang2024digmilp}.

\paragraph{\textbf{GNNs for LCQPs}} Given the graph representation, we present \textit{message-passing graph neural networks (hereafter referred to simply as GNNs)} for LCQPs. They take in an LCQP-graph $G_{\textup{LCQP}}$ (including the node and edge attributes) and update node attributes sequentially across layers via a message-passing mechanism. Initially, node attributes are updated separately using embedding mappings $f_0^V,f_0^W$:
\[ 
\textup{$s_i^0 = f_0^V(v_i)$ for $i\in V$, and $t_j^0 = f_0^W(w_j)$ for $j\in W$.} 
\]
The architecture includes $L$ standard \textbf{message-passing} layers where each layer (where $1\leq l\leq L$) updates node attributes by locally aggregating neighbor information:
\[ 
\begin{aligned}
s_i^l =& f_l^V\left(s_i^{l-1},\sum_{j\in \calN_i^W} g_l^W(t_j^{l-1},A_{ij})\right), \\ 
t_j^l =& f_l^W\left(t_j^{l-1},\sum_{i\in \calN_j^V} g_l^V(s_i^{l-1},A_{ij}),\sum_{j'\in \calN_j^W}  g_l^Q(t_{j'}^{l-1}, Q_{j j'})\right),
\end{aligned}
\]
where $f_l^V, f_l^W, g_l^V, g_l^W, g_l^Q$ are trainable local updates in GNNs and $\calN_i^W = \{j\in W:A_{ij}\neq 0\}$, $\calN_j^V = \{j\in V : A_{ij}\neq 0\}$, and $\calN_j^W = \{j'\in W: Q_{jj'}\neq 0\}$ are the sets of neighbors. 
Finally, there are two types of output layers. For applications where the GNN maps LCQP-graphs to a real value, such as evaluating properties like feasibility of LCQP, a \textbf{graph-level output} layer is employed that computes a single real number encompassing the entire graph:
\[
y = r_1\left(\sum_{i\in V}s_{i}^L, \sum_{j\in W} t_{j}^L\right)\in \bR.
\]
Alternatively, if the GNN is required to map the LCQP-graph to a vector $y \in \bR^{n}$, assigning a real number to each variable node as its output (as is typical in applications where GNNs are used to predict solutions), then a \textbf{node-level output} should be utilized: 
\[
y_j = r_2\left(\sum_{i\in V}s_{i}^L, \sum_{j\in W} t_{j}^L, t_j^L\right), \quad \textup{for $j\in W$.}
\]
Here, $r_1$ and $r_2$ are trainable output functions.
In our theoretical analysis, we assume all the mappings $f_l^V,f_l^W\ (0\leq l\leq L)$, $g_l^V,f_l^W,g_l^Q\ (1\leq l\leq L)$, and $r_1,r_2$ to be continuous. 
In practice, these continuous mappings are usually parameterized by multilayer perceptrons (MLPs) and their parameters are learned from data.

\begin{definition}[Space of LCQP-graphs]
    The set of all LCQP-graphs, denoted as $\calG^{m,n}_{\textup{LCQP}}$\footnote{The space $\calG^{m,n}_{\textup{LCQP}}$ is equipped with the subspace topology induced from the product space $\big\{ (A,b,c,Q,l,u,\circ) : A\in \bR^{m\times n}, b\in\bR^m, c\in \bR^n, Q\in\bR^{n\times n},l\in (\bR\cup\{-\infty\})^n,u\in (\bR\cup\{+\infty\})^n,\circ\in \{\leq,=,\geq\}^m \big\}$. 
    All Euclidean spaces have standard Eudlidean topologies,  discrete spaces are equipped with the discrete topology, and their unions are disjoint unions.}, comprises graphs with $m$ constraints and $n$ variables, where $Q$ is symmetric.
\end{definition}

\begin{definition}[Spaces of GNNs]
    The collection of all GNNs, denoted as $\calF_{\textup{LCQP}}$ for graph-level outputs (or $\calF_{\textup{LCQP}}^W$ for node-level outputs), consists of all GNNs constructed using continuous mappings $f_l^V,f_l^W\ (0\leq l\leq L)$, $g_l^V,f_l^W,g_l^Q\ (1\leq l\leq L)$, and $r_1$ (or $r_2$).
\end{definition}

\begin{definition}[Target mappings] \label{def:target-maps}
    We define:
\begin{itemize}[leftmargin=1em]
    \item Feasibility mapping: $\Phi_{\textup{feas}}(G_{\textup{LCQP}}) = 1$ if 
    $G_{\textup{LCQP}}$ is feasible and $\Phi_{\textup{feas}}(G_{\textup{LCQP}}) = 0$ if it is infeasible.
    \item Optimal objective mapping: $\Phi_{\textup{obj}}(G_{\textup{LCQP}})\in\bR\cup\{\pm\infty\}$ computes the optimal objective value of 
    $G_{\textup{LCQP}}$. 
    $\Phi_{\textup{obj}}(G_{\textup{LCQP}}) = +\infty$ means the problem is infeasible and $\Phi_{\textup{obj}}(G_{\textup{LCQP}}) = -\infty$ means unboundedness.
    \item Optimal solution mapping: For a feasible and bounded LCQP problem (i.e., $\Phi_{\textup{obj}}(G_{\textup{LCQP}})\in\bR$), an optimal solution exists 
    though it might not be unique. However, the optimal solution with the smallest $\ell_2$-norm must be unique if $Q\succeq 0$, i.e., $Q$ is positive semi-definite,
    and we define it as $\Phi_{\textup{sol}}(G_{\textup{LCQP}})$.
\end{itemize}
\end{definition}

Given the definitions above, we can formally pose the question in \eqref{eq:question} as follows: Is there any $F \in \calF_{\textup{LCQP}}$ that well approximates $\Phi_{\textup{feas}}$ or $\Phi_{\textup{obj}}$? Similarly, is there any function $F_W \in \calF_{\textup{LCQP}}^W$ that well approximates $\Phi_{\textup{sol}}(G_{\textup{LCQP}})$?

\section{Universal approximation for convex LCQPs}
\label{sec:theory-lcqp}

This section presents our main theoretical results for the expressive power of GNNs for representing properties of LCQPs. In particular, we show that for any convex LCQP data distribution, there always is a GNN that can predict LCQP properties, in the sense of universally approximating target mappings in Definition~\ref{def:target-maps}, within a given error tolerance. 
Although it is known in the previous literature that there exists some continuous function that cannot be approximated by GNNs with arbitrarily small error, see e.g., \cites{xu2019powerful,azizian2020expressive,geerts2022expressiveness}, our results in this section indicate that approximating the target mappings of LCQPs do not suffer from this limitation.

\begin{assumption}\label{asp:prob-lcqp}
	$\bP$ is a Borel regular probability measure defined on the space of LCQP-graphs $\calG^{m,n}_{\textup{LCQP}}$.
\end{assumption}

The assumption of Borel regularity is generally satisfied for most data distributions in practice.

\begin{theorem}
    For any $\bP$
    satisfying Assumption~\ref{asp:prob-lcqp} and any $\epsilon>0$, there exists $F\in\calF_{\textup{LCQP}}$ such that $\mathbb{I}_{F(G_{\textup{LCQP}})>\frac{1}{2}}$ acts as a classifier for LCQP-feasibility, with an error of up to $\epsilon$:
    \begin{equation*}
    \bP\left[\mathbb{I}_{F(G_{\textup{LCQP}})>\frac{1}{2}}\neq\Phi_{\textup{feas}}(G_{\textup{LCQP}})\right] < \epsilon,
    \end{equation*}
    where $\mathbb{I}_{\cdot}$ is the indicator function:  $\mathbb{I}_{F(G_{\textup{LCQP}})>\frac{1}{2}} = 1$ if $F(G_{\textup{LCQP}})>\frac{1}{2}$;  $\mathbb{I}_{F(G_{\textup{LCQP}})>\frac{1}{2}} = 0$ otherwise.
\end{theorem}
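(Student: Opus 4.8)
The plan is to reduce the claim to the corresponding statement for linear programs, which is exactly \cite{chen2022representing-lp}*{Theorem 3.2}, and to transfer that result along a continuous ``forgetting'' map. The starting observation is that the feasible set $X=\{x\in\bR^n : Ax\circ b,\ l\leq x\leq u\}$ of the LCQP \eqref{eq:LCQP} involves neither $Q$ nor $c$; hence $\Phi_{\textup{feas}}(G_{\textup{LCQP}})$ depends only on the data $(A,b,l,u,\circ)$, i.e. on the ``LP part'' of the LCQP-graph. Accordingly, I would introduce the projection $\pi\colon\calG^{m,n}_{\textup{LCQP}}\to\calG^{m,n}_{\textup{LP}}$ that discards the edge weights $Q_{jj'}$ while keeping $A,b,c,l,u,\circ$, where $\calG^{m,n}_{\textup{LP}}$ denotes the corresponding space of bipartite LP-graphs. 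This map is continuous for the product topologies of the footnote, and by the observation above one has the factorization $\Phi_{\textup{feas}}=\Phi^{\textup{LP}}_{\textup{feas}}\circ\pi$, where $\Phi^{\textup{LP}}_{\textup{feas}}$ is the LP feasibility indicator.

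Next I would show that every LP-GNN is realizable inside $\calF_{\textup{LCQP}}$, so that the architectural gap between the two models is harmless. Given any GNN $\tilde F$ acting on LP-graphs, whose variable-update maps $\tilde f_l^W$ take only the two arguments $t_j^{l-1}$ and $\sum_{i\in V} A_{ij}g_l^V(s_i^{l-1})$, define an LCQP-GNN by choosing $g_l^Q\equiv 0$ and setting $f_l^W(a,b,c):=\tilde f_l^W(a,b)$, with all remaining maps $f_0^V,f_0^W,f_l^V,g_l^V,g_l^W,r_1$ copied verbatim. Since $g_l^Q\equiv 0$ is continuous, the third aggregation term $\sum_{j'\in W}Q_{jj'}g_l^Q(t_{j'}^{l-1})$ vanishes identically and is then ignored by $f_l^W$. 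Because the initial features $w_j=(c_j,\ell_j,u_j)$ and $v_i=(b_i,\circ_i)$ agree on both sides of $\pi$, all node features propagate identically, and the resulting $F\in\calF_{\textup{LCQP}}$ satisfies $F=\tilde F\circ\pi$ on $\calG^{m,n}_{\textup{LCQP}}$.

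With these two ingredients the conclusion follows by a change-of-variables argument. Let $\tilde\bP:=\pi_*\bP$ be the pushforward measure on $\calG^{m,n}_{\textup{LP}}$. Since the underlying spaces are Polish (finite disjoint unions of Euclidean spaces and discrete finite sets, with the positive semidefinite constraint cutting out a closed subset), every finite Borel measure on them is automatically Radon and hence regular, so $\tilde\bP$ satisfies the hypothesis of \cite{chen2022representing-lp}*{Theorem 3.2}. Applying that theorem to $\tilde\bP$ and $\epsilon$ yields an LP-GNN $\tilde F$ with $\tilde\bP[\mathbb{I}_{\tilde F>\frac{1}{2}}\neq\Phi^{\textup{LP}}_{\textup{feas}}]<\epsilon$. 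Lifting $\tilde F$ to $F\in\calF_{\textup{LCQP}}$ as above and combining $\Phi_{\textup{feas}}=\Phi^{\textup{LP}}_{\textup{feas}}\circ\pi$ with the identity $\bP[\mathbb{I}_{F>\frac{1}{2}}\neq\Phi_{\textup{feas}}]=\tilde\bP[\mathbb{I}_{\tilde F>\frac{1}{2}}\neq\Phi^{\textup{LP}}_{\textup{feas}}]$ gives the stated bound.

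I expect the only genuinely delicate point to be the measure-theoretic bookkeeping, namely confirming that $\pi_*\bP$ meets the Borel-regularity hypothesis so that the LP theorem applies; everything else is the routine verification that the LP model embeds into the LCQP model through the choice $g_l^Q\equiv 0$. I would dispose of the regularity issue once and for all by recording that the relevant spaces are Polish, which forces every finite Borel measure to be regular and makes the pushforward step automatic.
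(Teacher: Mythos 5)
Your proposal is correct and follows exactly the route the paper takes: the paper does not prove this theorem but simply invokes \cite{chen2022representing-lp}*{Theorem 3.2}, justified by the one-line observation that feasibility depends only on the linear constraints and not on $Q$ or $c$. Your write-up merely makes explicit the details the paper leaves implicit (the projection discarding $Q$, the realization of any LP-GNN inside $\calF_{\textup{LCQP}}$ via $g_l^Q\equiv 0$, and the pushforward of the measure), all of which check out.
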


This result suggests that a GNN is a universal classifier for LCQP feasibility: for any data distribution of LCQPs satisfying Assumption \ref{asp:prob-lcqp}, there exists a GNN that can classify LCQP feasibility with arbitrarily high accuracy. This is a natural extension of the feasibility classification for linear programs \cite{chen2022representing-lp}, as feasibility is solely determined by the constraints, independent of the objective function, and all LCQP constraints are linear.

However, using GNNs to predict the optimal objective value or an optimal solution is highly non-trivial due to the nonlinear term $x^\top Q x$. Fortunately, when restricting LCQPs to convex cases, GNNs can universally represent the optimal objective value and an optimal solution for these LCQPs.

\begin{theorem}\label{thm:obj-lcqp}
    For any $\bP$
    satisfying Assumption~\ref{asp:prob-lcqp} with $\bP[Q\succeq 0]=1$ ($Q$ is positive semidefinite almost surely), and for any $\epsilon>0$, there exists $F_1\in\calF_{\textup{LCQP}}$ such that
        \begin{equation}\label{eq1:obj-lcqp}
    \bP\left[\mathbb{I}_{F_1(G_{\textup{LCQP}})>\frac{1}{2}}\neq \mathbb{I}_{\Phi_{\textup{obj}}(G_{\textup{LCQP}}) \in \bR}\right] < \epsilon.
        \end{equation}
        Addtitionally, if $\bP[\Phi_{\textup{obj}}(G_{\textup{LCQP}}) \in \bR] = 1$, then for any $\epsilon,\delta>0$, there exists $F_2\in\calF_{\textup{LCQP}}$ such that
        \begin{equation}\label{eq2:obj-lcqp}
            \bP\left[|F_2(G_{\textup{LCQP}}) - \Phi_{\textup{obj}}(G_{\textup{LCQP}})| > \delta\right]<\epsilon.
        \end{equation}
\end{theorem}

This theorem indicates that GNNs can approximate the optimal objective value mapping $\Phi_{\textup{obj}}$ very well in two senses: (I) GNN can predict whether the optimal objective value is a real number or $\pm\infty$, i.e., whether the LCQP problem is feasible and bounded or not. (II) For a data distribution over feasible and bounded LCQP problems, GNN can approximate the optimal objective. 
Finally, we prove that GNN can approximate the optimal solution map $\Phi_{\textup{sol}}$.

\begin{theorem}\label{thm:sol-lcqp}
    For any $\bP$ satisfying Assumption~\ref{asp:prob-lcqp} with $\bP[Q\succeq 0] = \bP[\Phi_{\textup{obj}}(G_{\textup{LCQP}}) \in \bR] = 1$, and for any $\epsilon,\delta>0$, there exists $F_W\in \calF_{\textup{LCQP}}^W$ such that
    \begin{equation*}
        \bP\left[\|F_W(G_{\textup{LCQP}}) - \Phi_{\textup{sol}}(G_{\textup{LCQP}})\|>\delta\right] < \epsilon.
    \end{equation*}
\end{theorem}

The detailed proofs of Theorems \ref{thm:obj-lcqp} and \ref{thm:sol-lcqp} will be presented in Appendix~\ref{sec:pf-lcqp}. We briefly describe the main idea here. The Stone-Weierstrass theorem and its variants are a powerful tool for proving universal-approximation-type results. Recall that the classic version of the Stone-Weierstrass theorem states that under some assumptions, a function class $\calF$ can uniformly approximate every continuous function if and only if it \underline{\textbf{\textit{separates points}}}, i.e., for any $x\neq x'$, one has $F(x)\neq F(x')$ for some $F\in\calF$. Otherwise, we say $x$ and $x'$ are \underline{\textbf{\textit{indistinguishable}}} by any $F \in \calF$. Therefore, the key component in the proof is to establish some separation results in the sense that two LCQP-graphs with different optimal objective values (or different optimal solutions with the smallest $\ell_2$-norm) must be distinguished by some GNN in the class $\calF_{\textup{LCQP}}$ (or $\calF_{\textup{LCQP}}^W$). It is shown in \cites{xu2019powerful,azizian2020expressive,geerts2022expressiveness} that the \underline{\textbf{\textit{separation power}}}\footnote{Given two sets of functions, $\calF$ and $\calF'$, both defined over domain $X$, if $\calF$ separating points $x$ and $x'$ implies that $\calF'$ also separates $x$ and $x'$ for any $x,x'\in X$, then the separation power of $\calF'$ is considered to be stronger than or equal to that of $\calF$.} of GNNs is equivalent to the Weisfeiler-Lehman (WL) test \cite{weisfeiler1968reduction}, a classical algorithm for the graph isomorphism problem.
We show that, \textit{any two LCQP-graphs that are indistinguishable by the WL test, or equivalently by all GNNs, even if they are not isomorphic, they must have identical optimal objective value and identical optimal solution with the smallest $\ell_2$-norm.}

\paragraph{\textbf{An illustrative example}} We use the two LCQPs, \eqref{eq:example-1} and \eqref{eq:example-2}, each with 7 variables and 8 constraints, to illustrate our findings. All variable nodes share the attribute $w_j = (1, 0, 3)$ for $1 \leq j \leq 7$, which represents an objective coefficient of $c_j=1$, lower bound $l_j=0$, upper bound $u_j=3$. 
We refer to these nodes as ``red nodes." The first seven constraint nodes $v_i$ (for $1 \leq i \leq 7$) are assigned the same attribute, $v_i = (0, =)$, which we label as ``blue nodes". The eighth constraint node $v_8$ is unique, with the attribute $v_8 = (6, =)$, and is called the ``brown node." 
Any red node is connected to a blue node with weight $A_{ij} = 1$ (solid lines), another blue node with weight $A_{ij} = -1$ (dashed lines), the brown node with weight $A_{ij} = 1$ (green lines), and all seven red nodes with $Q_{jj^\prime} = 1$ (brown curves).

\begin{figure}[htb!]
    \centering
    \begin{minipage}{0.48\textwidth}
    \begin{small}
    \begin{equation}\label{eq:example-1}
	    \begin{split}
		    \min_{x\in\bR^7} &~~ \frac{1}{2} x^\top \mathbf{1}\mathbf{1}^\top x + \mathbf{1}^\top x,\\
		    \text{s.t.} &~~ x_1-x_2=0,\ x_2-x_1=0,\\
            & ~~ x_3-x_4=0,\ x_4-x_5=0,\\
            & ~~ x_5-x_6=0,\ x_6-x_7=0,\ x_7-x_3=0, \\
            &~~ x_1+x_2+x_3+x_4+x_5+x_6+x_7 = 6 \\
		    &~~ 0\leq x_j\leq 3,\ x_j\in \mathbb{Z},\ \forall~j\in\{1,2,\dots,7\}. 
	    \end{split}
    \end{equation}
    \end{small}
    \end{minipage}
	\begin{minipage}{0.5\textwidth}
		\centering
		\begin{tikzpicture}[
			constraintb/.style={circle, draw = blue, scale = 0.8},
            constraintb1/.style={circle, draw = orange, scale = 0.8},
			variablex/.style={circle, draw = red, scale = 0.8},
			]
			
			\draw (-1,2.1) node[constraintb] (v1) {$v_1$};
			\draw (-1,1.4) node[constraintb] (v2) {$v_2$};
			\draw (-1,0.7) node[constraintb] (v3) {$v_3$};
			\draw (-1,0) node[constraintb] (v4) {$v_4$};
			\draw (-1,-0.7) node[constraintb] (v5) {$v_5$};
			\draw (-1,-1.4) node[constraintb] (v6) {$v_6$};
            \draw (-1,-2.1) node[constraintb] (v7) {$v_7$};
            \draw (-2,0.3) node[constraintb1] (v8) {$v_8$};
			\draw (1,2.1) node[variablex] (w1) {$w_1$};
			\draw (1,1.4) node[variablex] (w2) {$w_2$};
			\draw (1,0.7) node[variablex] (w3) {$w_3$};
			\draw (1,0) node[variablex] (w4) {$w_4$};
			\draw (1,-0.7) node[variablex] (w5) {$w_5$};
			\draw (1,-1.4) node[variablex] (w6) {$w_6$};
            \draw (1,-2.1) node[variablex] (w7) {$w_7$};
			
			\draw[-] (v1.east) -- (w1.west);
			\draw[-] (v2.east) -- (w2.west);
			\draw[-] (v3.east) -- (w3.west);
			\draw[-] (v4.east) -- (w4.west);
			\draw[-] (v5.east) -- (w5.west);
			\draw[-] (v6.east) -- (w6.west);
            \draw[-] (v7.east) -- (w7.west);
			\draw[dashed] (v1.east) -- (w2.west);
			\draw[dashed] (v2.east) -- (w1.west);
            \draw[dashed] (v3.east) -- (w4.west);
            \draw[dashed] (v4.east) -- (w5.west);
            \draw[dashed] (v5.east) -- (w6.west);
            \draw[dashed] (v6.east) -- (w7.west);
            \draw[dashed] (v7.east) -- (w3.west);
            \draw[green] (v8.east) -- (w1.west);
            \draw[green] (v8.east) -- (w2.west);
            \draw[green] (v8.east) -- (w3.west);
            \draw[green] (v8.east) -- (w4.west);
            \draw[green] (v8.east) -- (w5.west);
            \draw[green] (v8.east) -- (w6.west);
            \draw[green] (v8.east) -- (w7.west);

            \draw[brown] (w1) to[loop above, out=0, in=0, looseness=2.5] (w2);
            \draw[brown] (w1) to[loop above, out=0, in=0, looseness=2.5] (w3);
            \draw[brown] (w1) to[loop above, out=0, in=0, looseness=2.5] (w4);
            \draw[brown] (w1) to[loop above, out=0, in=0, looseness=2.5] (w5);
            \draw[brown] (w1) to[loop above, out=0, in=0, looseness=2.5] (w6);
            \draw[brown] (w1) to[loop above, out=0, in=0, looseness=2.5] (w7);
            \draw[brown] (w2) to[loop above, out=0, in=0, looseness=2.5] (w3);
            \draw[brown] (w2) to[loop above, out=0, in=0, looseness=2.5] (w4);
            \draw[brown] (w2) to[loop above, out=0, in=0, looseness=2.5] (w5);
            \draw[brown] (w2) to[loop above, out=0, in=0, looseness=2.5] (w6);
            \draw[brown] (w2) to[loop above, out=0, in=0, looseness=2.5] (w7);
            \draw[brown] (w3) to[loop above, out=0, in=0, looseness=2.5] (w4);
            \draw[brown] (w3) to[loop above, out=0, in=0, looseness=2.5] (w5);
            \draw[brown] (w3) to[loop above, out=0, in=0, looseness=2.5] (w6);
            \draw[brown] (w3) to[loop above, out=0, in=0, looseness=2.5] (w7);
            \draw[brown] (w4) to[loop above, out=0, in=0, looseness=2.5] (w5);
            \draw[brown] (w4) to[loop above, out=0, in=0, looseness=2.5] (w6);
            \draw[brown] (w4) to[loop above, out=0, in=0, looseness=2.5] (w7);
            \draw[brown] (w5) to[loop above, out=0, in=0, looseness=2.5] (w6);
            \draw[brown] (w5) to[loop above, out=0, in=0, looseness=2.5] (w7);
            \draw[brown] (w6) to[loop above, out=0, in=0, looseness=2.5] (w7);
            \draw[brown] (w1) to[loop above, out=-30, in=30, looseness=4] (w1);
            \draw[brown] (w2) to[loop above, out=-30, in=30, looseness=4] (w2);
            \draw[brown] (w3) to[loop above, out=-30, in=30, looseness=4] (w3);
            \draw[brown] (w4) to[loop above, out=-30, in=30, looseness=4] (w4);
            \draw[brown] (w5) to[loop above, out=-30, in=30, looseness=4] (w5);
            \draw[brown] (w6) to[loop above, out=-30, in=30, looseness=4] (w6);
            \draw[brown] (w7) to[loop above, out=-30, in=30, looseness=4] (w7);
		\end{tikzpicture}
	\end{minipage} 
    \begin{minipage}{0.48\textwidth}
    \begin{small}
    \begin{equation}\label{eq:example-2}
	    \begin{split}
		    \min_{x\in\bR^7} &~~ \frac{1}{2}x^\top \mathbf{1}\mathbf{1}^\top x + \mathbf{1}^\top x,\\
		    \text{s.t.} &~~ x_1-x_2=0,\ x_2-x_3=0,\ x_3-x_1=0,\\
            & ~~ x_4-x_5=0,\ x_5-x_6=0,\\
            & ~~ x_6-x_7=0,\ x_7-x_4=0, \\
            &~~ x_1+x_2+x_3+x_4+x_5+x_6+x_7 = 6 \\
	      &~~ 0\leq x_j\leq 3,\ x_j\in \mathbb{Z},\ \forall~j\in\{1,2,\dots,7\}.
	    \end{split}
    \end{equation}    
    \end{small}
    \end{minipage}
	\begin{minipage}{0.5\textwidth}
		\centering
		\begin{tikzpicture}[
			constraintb/.style={circle, draw = blue, scale = 0.8},
            constraintb1/.style={circle, draw = orange, scale = 0.8},
			variablex/.style={circle, draw = red, scale = 0.8},
			]
			
			\draw (-1,2.1) node[constraintb] (v1) {$v_1$};
			\draw (-1,1.4) node[constraintb] (v2) {$v_2$};
			\draw (-1,0.7) node[constraintb] (v3) {$v_3$};
			\draw (-1,0) node[constraintb] (v4) {$v_4$};
			\draw (-1,-0.7) node[constraintb] (v5) {$v_5$};
			\draw (-1,-1.4) node[constraintb] (v6) {$v_6$};
            \draw (-1,-2.1) node[constraintb] (v7) {$v_7$};
            \draw (-2,0.3) node[constraintb1] (v8) {$v_8$};
			\draw (1,2.1) node[variablex] (w1) {$w_1$};
			\draw (1,1.4) node[variablex] (w2) {$w_2$};
			\draw (1,0.7) node[variablex] (w3) {$w_3$};
			\draw (1,0) node[variablex] (w4) {$w_4$};
			\draw (1,-0.7) node[variablex] (w5) {$w_5$};
			\draw (1,-1.4) node[variablex] (w6) {$w_6$};
            \draw (1,-2.1) node[variablex] (w7) {$w_7$};
			
			\draw[-] (v1.east) -- (w1.west);
			\draw[-] (v2.east) -- (w2.west);
			\draw[-] (v3.east) -- (w3.west);
			\draw[-] (v4.east) -- (w4.west);
			\draw[-] (v5.east) -- (w5.west);
			\draw[-] (v6.east) -- (w6.west);
            \draw[-] (v7.east) -- (w7.west);
			\draw[dashed] (v1.east) -- (w2.west);
			\draw[dashed] (v2.east) -- (w3.west);
            \draw[dashed] (v3.east) -- (w1.west);
            \draw[dashed] (v4.east) -- (w5.west);
            \draw[dashed] (v5.east) -- (w6.west);
            \draw[dashed] (v6.east) -- (w7.west);
            \draw[dashed] (v7.east) -- (w4.west);
            \draw[green] (v8.east) -- (w1.west);
            \draw[green] (v8.east) -- (w2.west);
            \draw[green] (v8.east) -- (w3.west);
            \draw[green] (v8.east) -- (w4.west);
            \draw[green] (v8.east) -- (w5.west);
            \draw[green] (v8.east) -- (w6.west);
            \draw[green] (v8.east) -- (w7.west);

            \draw[brown] (w1) to[loop above, out=0, in=0, looseness=2.5] (w2);
            \draw[brown] (w1) to[loop above, out=0, in=0, looseness=2.5] (w3);
            \draw[brown] (w1) to[loop above, out=0, in=0, looseness=2.5] (w4);
            \draw[brown] (w1) to[loop above, out=0, in=0, looseness=2.5] (w5);
            \draw[brown] (w1) to[loop above, out=0, in=0, looseness=2.5] (w6);
            \draw[brown] (w1) to[loop above, out=0, in=0, looseness=2.5] (w7);
            \draw[brown] (w2) to[loop above, out=0, in=0, looseness=2.5] (w3);
            \draw[brown] (w2) to[loop above, out=0, in=0, looseness=2.5] (w4);
            \draw[brown] (w2) to[loop above, out=0, in=0, looseness=2.5] (w5);
            \draw[brown] (w2) to[loop above, out=0, in=0, looseness=2.5] (w6);
            \draw[brown] (w2) to[loop above, out=0, in=0, looseness=2.5] (w7);
            \draw[brown] (w3) to[loop above, out=0, in=0, looseness=2.5] (w4);
            \draw[brown] (w3) to[loop above, out=0, in=0, looseness=2.5] (w5);
            \draw[brown] (w3) to[loop above, out=0, in=0, looseness=2.5] (w6);
            \draw[brown] (w3) to[loop above, out=0, in=0, looseness=2.5] (w7);
            \draw[brown] (w4) to[loop above, out=0, in=0, looseness=2.5] (w5);
            \draw[brown] (w4) to[loop above, out=0, in=0, looseness=2.5] (w6);
            \draw[brown] (w4) to[loop above, out=0, in=0, looseness=2.5] (w7);
            \draw[brown] (w5) to[loop above, out=0, in=0, looseness=2.5] (w6);
            \draw[brown] (w5) to[loop above, out=0, in=0, looseness=2.5] (w7);
            \draw[brown] (w6) to[loop above, out=0, in=0, looseness=2.5] (w7);
            \draw[brown] (w1) to[loop above, out=-30, in=30, looseness=4] (w1);
            \draw[brown] (w2) to[loop above, out=-30, in=30, looseness=4] (w2);
            \draw[brown] (w3) to[loop above, out=-30, in=30, looseness=4] (w3);
            \draw[brown] (w4) to[loop above, out=-30, in=30, looseness=4] (w4);
            \draw[brown] (w5) to[loop above, out=-30, in=30, looseness=4] (w5);
            \draw[brown] (w6) to[loop above, out=-30, in=30, looseness=4] (w6);
            \draw[brown] (w7) to[loop above, out=-30, in=30, looseness=4] (w7);
		\end{tikzpicture}
	\end{minipage}
    \caption{An illustrative example for Theorems \ref{thm:obj-lcqp} and \ref{thm:sol-lcqp}}
\end{figure}

\textit{(Observation I).} The two LCQPs are not equivalent, and their graph representations are not isomorphic. Both LCQPs are feasible and bounded, with identical optimal objective values, as $\mathbf{1}^\top x = 6$ results in $\frac{1}{2} x^\top \mathbf{1}\mathbf{1}^\top x + \mathbf{1}^\top x = 24$. However, they are not equivalent because their solution sets differ. Specifically, ${(3,3,0,0,0,0,0)}$ is a solution to \eqref{eq:example-1} but not to \eqref{eq:example-2}, while ${(2,2,2,0,0,0,0)}$ is a solution to \eqref{eq:example-2} but not to \eqref{eq:example-1}. Furthermore, their graph representations differ: in the first graph, the 7 variables form two groups—one with 2 nodes and the other with 5—where nodes in each group are connected cyclically. In the second graph, the variables form two groups, but one with 3 nodes and the other with 4.

\textit{(Observation II).} The two graphs cannot be distinguished by any GNNs, even after multiple rounds of message passing. Initially, both graphs are indistinguishable as they share identical node attributes: seven red nodes, seven blue nodes, and one brown node. During message passing, each red node receives identical updates due to symmetric connections: one blue node with weight $A_{ij}=1$, another with $A_{ij}=-1$, the brown node with $A_{ij}=1$, and all seven red nodes with $Q_{jj'}=1$. As a result, the red node's attribute is updated as (an informal but illustrative
equation):
\[
\begin{aligned}
t_j^l = f_l^w \Big( \textup{red node}, g_l^V(\textup{blue node},1) + g_l^V(\textup{blue node},-1) \\ 
+ g_l^V(\textup{brown node},1), ~7\cdot g_l^Q(\textup{red node},1) \Big). 
\end{aligned}
\]
After the update, all red nodes $t_j^l (1 \leq j\leq 7)$ in both graphs retain identical attributes and are still indistinguishable. The same applies to the blue and brown nodes. Therefore, regardless of how many message-passing rounds occur, both graphs will still have seven red nodes, seven blue nodes, and one brown node. This holds for any parameterized mappings used in GNNs ($f_l^V$, $f_l^W$, $g_l^V$, $g_l^W$, and $g_l^Q$), meaning no GNN can differentiate between the two instances. 

\textit{(Observation III).} They share the common optimal solution with the smallest $\ell_2$-norm: $(\frac{6}{7},\frac{6}{7},\frac{6}{7},\frac{6}{7},\frac{6}{7},\frac{6}{7},\frac{6}{7})$, which follows from $\mathbf{1}^\top x = 6$ and the Cauchy-Schwarz inequality.

Note that these observations are not mere coincidences; we have established this conclusion in general for LCQPs (see Definition~\ref{def:WL_equiv_class}, Theorem~\ref{thm:lcqp-sameWL2obj}, and Theorem~\ref{thm:lcqp-sameWL2sol}).

Additionally, similar results can be extended to QCQPs under certain assumptions, though it is beyond the main focus of this paper. The details are deferred to Appendix~\ref{sec:qcqp}.

\paragraph{\textbf{A nonconvex counterexample}} 
At the end of this section, we comment that Theorems \ref{thm:obj-lcqp} and \ref{thm:sol-lcqp} do not hold if the convexity assumption ($\bP[Q\succeq 0] = 1$) is removed. In particular, 
consider a convex LCQP
$$\min~~ \frac{1}{2} \begin{pmatrix}x_1 & x_2\end{pmatrix} \begin{pmatrix}1 & 0 \\ 0 & 1\end{pmatrix} \begin{pmatrix}x_1 \\ x_2\end{pmatrix}, \quad\text{s.t.}~ -1\leq x_1,x_2\leq 1,
$$
and a nonconvex LCQP
$$\min~~ \frac{1}{2} \begin{pmatrix}x_1 & x_2\end{pmatrix} \begin{pmatrix}0 & 1 \\ 1 & 0\end{pmatrix} \begin{pmatrix}x_1 \\ x_2\end{pmatrix}, \quad\text{s.t.}~ -1\leq x_1,x_2\leq 1.
$$
Following the aforementioned argument, one can verify that these two LCQP instances are indistinguishable by any GNNs. On the other hand, the first convex LCQP has an optimal objective being $0$ and the optimal solution being $(0,0)$, while the second nonconvex LCQP has an optimal objective being $-1$ and optimal solutions being $(1,-1)$ and $(-1,1)$.
This indicates that the conclusions in Theorems \ref{thm:obj-lcqp} and \ref{thm:sol-lcqp} do not hold true for the data distribution that is supported on aforementioned instances with equal probability, since a GNN making accurate prediction on one instance must fail on the other.

\section{The capacity of GNNs for MI-LCQPs}
\label{sec:mi-lcqp}

In this section, we turn to mixed-integer linearly constrained quadratic program (MI-LCQP), which is almost the same as LCQP \eqref{eq:LCQP} except that some entries of $x$ are constrained to be integers: $x_j\in \bZ,\ \forall~j\in I$, where $I \subset \{1,2,\dots,n\}$ collects the indices of all integer variables.

\textbf{MI-LCQP-graph} is modified from the LCQP-graph 
by adding a new entry to the feature of each variable node $j\in W$. The new feature is $w_j = (c_j,l_j,u_j,\delta_I(j))$ where $\delta_I(j)=1$ if $j\in I$ and $\delta_I(j)=0$ otherwise. We use $\calG_{\textup{MI-LCQP}}^{m,n}$ to denote the collection of all MI-LCQP-graphs.

\textbf{GNNs for MI-LCQP-graphs} are constructed using the same framework as those for LCQP-graphs, differing only in the input feature $w_j$ as defined above. We use $\calF_{\textup{MI-LCQP}}$ and $\calF_{\textup{MI-LCQP}}^W$ to denote the GNN classes for MI-LCQP-graphs with graph-level and node-level output, respectively.

\textbf{Target mappings} for MI-LCQPs also similar to those in Definition~\ref{def:target-maps}. In particular, 
$\Phi_{\textup{feas}}$ and
$\Phi_{\textup{obj}}$ 
are defined as in Definition~\ref{def:target-maps}, while the optimal solution mapping $\Phi_{\textup{sol}}$ can only be defined on a subset of the class of feasible and bounded MI-LCQPs, as discussed in Appendix~\ref{sec:pf-mi-lcqp}.

\subsection{GNNs cannot universally represent MI-LCQPs}
\label{sec:counter-example}

In this subsection, we answer the question \eqref{eq:question} for MI-LCQP. When integer variables are introduced, the situation changes. Particularly, we present some counter-examples illustrating the fundamental limitation of GNNs in this context.

\begin{proposition}
\label{lem:counter_ex_feas}
    There exist two MI-LCQP problems, with one being feasible and the other being infeasible, such that their graphs are indistinguishable by any GNN in $\calF_{\textup{MI-CLQP}}$.
\end{proposition}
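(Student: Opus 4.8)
The plan is to exploit the equivalence between the separation power of message-passing GNNs and one-dimensional Weisfeiler--Lehman (WL) color refinement: any $F\in\calF_{\textup{MI-LCQP}}$ assigns equal outputs to two graphs that WL fails to separate, because the graph-level readout depends on the node features only through the sums $\sum_{i}s_i^L$ and $\sum_{j}t_j^L$, which coincide once the two graphs share the same stable WL coloring. It therefore suffices to exhibit a single pair of MI-LCQP-graphs in a common $\calG_{\textup{MI-LCQP}}^{m,n}$ that is WL-indistinguishable yet differs in feasibility, and the problem is thereby reduced to building such a pair.

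For the construction I would encode graph $2$-colorability. Fix $m=n=6$, make every variable binary and integer ($l_j=0$, $u_j=1$, $I=\{1,\dots,6\}$), set $Q=0$ (symmetric and positive semidefinite) and $c=0$, and for each edge of a chosen $2$-regular graph on the index set $\{1,\dots,6\}$ introduce one equality constraint $x_i+x_j=1$ joining its endpoints $i,j$. Since $x_i,x_j\in\{0,1\}$, each such constraint forces $x_i\neq x_j$, so the whole system is feasible exactly when the chosen graph is bipartite. Taking the $6$-cycle $C_6$ (bipartite) for the first instance and the disjoint union of two triangles $C_3\sqcup C_3$ (odd cycles, not bipartite) for the second yields one feasible and one infeasible MI-LCQP, both lying in $\calG_{\textup{MI-LCQP}}^{6,6}$.

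The crux is checking WL-indistinguishability. With $Q=0$ there are no variable--variable edges, so each MI-LCQP-graph is just the bipartite incidence graph of the chosen auxiliary graph, carrying unit $A$-weights, constraint features all equal to $(1,=)$, and variable features all equal to $(0,0,1,1)$. The incidence graph of $C_6$ is a single $12$-cycle, while that of $C_3\sqcup C_3$ is a disjoint union of two $6$-cycles; both are $2$-regular, with every node adjacent to exactly two opposite-type nodes through unit-weight edges. I would then show by induction on the refinement round that the WL color is constant across all constraint nodes and constant across all variable nodes in both graphs at once, since at every step each node collects an identical multiset of neighbor colors. Hence the two stable colorings agree, and the established GNN-vs-WL separation bound forces $F(G_1)=F(G_2)$ for all $F\in\calF_{\textup{MI-LCQP}}$.

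I expect the genuine content to sit in this indistinguishability step, not in the feasibility check: one must verify that refinement never separates a single even cycle from two shorter even cycles, i.e.\ that the two MI-LCQP-graphs are fractionally isomorphic despite being non-isomorphic, which is exactly why a regular auxiliary graph is chosen. This is also the point at which the lift from the MILP counter-example of \cite{chen2022representing-milp} is justified: introducing the integrality indicator $\delta_I(j)$ (uniformly $1$ here) and the quadratic structure ($Q=0$ here) alters neither the feasible set nor the neighbor multisets seen by WL, so the MILP indistinguishability transfers verbatim to the MI-LCQP setting.
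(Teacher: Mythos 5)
Your proposal is correct and follows essentially the same route as the paper, which simply imports the $6$-cycle versus two-triangles counter-example from \cite{chen2022representing-milp}*{Lemma 3.2} (feasibility via $2$-colorability, indistinguishability via the layer-by-layer induction showing all constraint nodes and all variable nodes retain constant features) and observes that the quadratic term and the integrality indicator change neither the feasible region nor the neighbor multisets. Your explicit reconstruction of that example and the WL induction matches the argument the paper uses verbatim for its other counter-examples in Appendix~\ref{sec:pf-counter-example}.
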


\begin{proposition}\label{lem:counter_ex_obj}
    There exist two feasible MI-LCQP problems, with different optimal objective values, such that their graphs are indistinguishable by any GNN in $\calF_{\textup{MI-CLQP}}$.
\end{proposition}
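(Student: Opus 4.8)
The plan is to exhibit two explicit MI-LCQP instances on $n=6$ binary variables and $m=1$ linear constraint whose graphs agree node-for-node in all local data, yet whose optimal objective values differ. I would take both problems to have bounds $l_j=0$, $u_j=1$ with every variable integer (so $x_j\in\{0,1\}$), the single linear constraint $\sum_{j=1}^{6}x_j=3$ (encoded by constraint-node feature $(3,=)$ and unit edge weights $A_{1j}=1$ to every variable), and zero linear objective $c=0$. The two instances differ only in the quadratic matrix $Q$: I let $Q=2I-\mathrm{Adj}(\calC)$ be the combinatorial Laplacian of a $2$-regular graph $\calC$ on the six variable nodes, taking the six-cycle $C_6$ for the first problem and the disjoint union of two triangles $C_3\sqcup C_3$ for the second. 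Each Laplacian is symmetric and positive semidefinite, so both instances are legitimate convex MI-LCQPs, and the feasible set (binary vectors with exactly three ones) is nonempty in both, so both are feasible.

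Next I would compute the two optima from $\tfrac12 x^\top Q x=\tfrac12\sum_{\{j,j'\}\in E(\calC)}(x_j-x_{j'})^2$, which for binary $x$ equals half the number of edges of $\calC$ whose endpoints receive different values (the cut). Subject to exactly three ones, the cut on $C_6$ is minimized by three consecutive ones, giving cut $2$ and optimal value $1$; on $C_3\sqcup C_3$ one may place all three ones inside a single triangle, giving cut $0$ and optimal value $0$. Hence the two feasible instances have different optimal objective values, $1\neq 0$.

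It remains to show the two MI-LCQP-graphs are inseparable by any GNN in $\calF_{\textup{MI-LCQP}}$, which is the heart of the argument. I would prove by induction on the layer index $l$ that, for every choice of continuous maps defining a GNN in the class, the constraint feature $s_1^l$ is the same in both graphs and all six variable features $t_j^l$ equal one common vector $\tau^l$ (again identical across the two graphs). The base case holds because the two graphs carry identical node features. For the inductive step, each variable node aggregates the single constraint message $g_l^V(s_1^{l-1},1)$ together with, over its $Q$-neighborhood, the fixed multiset $\{(\tau^{l-1},2),(\tau^{l-1},-1),(\tau^{l-1},-1)\}$ arising from its self-loop (weight $2$) and its two neighbors (weight $-1$) in the $2$-regular graph; since this local data is the same for every variable and in both graphs, all $t_j^l$ coincide. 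Symmetrically, the constraint aggregates six identical variable messages, so $s_1^l$ agrees across the graphs. Therefore all layer-$L$ features, and hence the graph-level readout $r_1\big(\sum_i s_i^L,\sum_j t_j^L\big)$, coincide, so no GNN in $\calF_{\textup{MI-LCQP}}$ distinguishes the two instances.

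The main obstacle is precisely this indistinguishability step: one must pick a coupling that is globally different (forcing different optima) yet locally homogeneous enough that the message-passing color refinement never breaks the symmetry among variable nodes. The pair $C_6$ versus $C_3\sqcup C_3$ — the canonical non-isomorphic $2$-regular graphs indistinguishable by the Weisfeiler--Lehman test — supplies exactly this: both are $2$-regular with identical edge weights, so the weighted bipartite refinement underlying $\calF_{\textup{MI-LCQP}}$ keeps all variable nodes in one color class throughout, while the cardinality constraint $\sum_j x_j=3$ is what makes the global cut structure — invisible to the GNN — control the optimal value.
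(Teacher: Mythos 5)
Your proposal is correct, and it reaches the conclusion by a genuinely different construction than the paper's. The paper keeps the quadratic part trivial ($Q=I$, so each variable node carries only a self-loop of weight $1$) and encodes the $C_6$ versus $C_3\sqcup C_3$ dichotomy in the constraint matrix $A$, via six covering constraints $x_i+x_j\ge 1$; the objective on binary vectors is then $\tfrac32\sum_j x_j$, so the gap $\tfrac92$ versus $6$ comes from the different minimum vertex cover sizes of the two $2$-regular graphs. You instead collapse the constraints to a single cardinality constraint $\sum_j x_j=3$ and move the cycle structure into $Q$ itself as a graph Laplacian, so that the objective becomes a cut function and the gap $1$ versus $0$ comes from the different minimum cuts over $3$-subsets. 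Both arguments rest on the same engine --- $C_6$ and $C_3\sqcup C_3$ are the canonical WL-indistinguishable pair of $2$-regular graphs, and your layer-by-layer induction showing all variable features stay equal is the same style of argument as the paper's --- but your version has a distinct merit: it demonstrates that the failure of GNN separation for MI-LCQPs can originate entirely in the quadratic coupling $Q$ (the part that is genuinely new relative to MILP), rather than being inherited from the known MILP constraint-side counterexamples. The computations check out: both Laplacians are symmetric PSD, both feasible sets are nonempty, the minimum cut of a $3$-subset in $C_6$ is $2$ while a whole triangle in $C_3\sqcup C_3$ gives cut $0$, and your induction correctly handles the self-loop weight $2$ and the two neighbor weights $-1$ in the $Q$-aggregation.
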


\begin{proposition}\label{lem:counter_ex_sol}
    There exist two feasible MI-LCQP problems with the same optimal objectives but disjoint optimal solution sets, such that their graphs are indistinguishable by any GNN in $\calF_{\textup{MI-CLQP}}^W$.
\end{proposition}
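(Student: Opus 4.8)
The plan is to exploit the established equivalence between the separation power of message-passing GNNs and $1$-WL color refinement (\cite{xu2019powerful}, as used in \cite{chen2022representing-lp}): it suffices to construct two MI-LCQP-graphs $G_1,G_2\in\calG_{\textup{MI-LCQP}}^{m,n}$ on which $1$-WL produces identical colorings, so that every $F_W\in\calF_{\textup{MI-LCQP}}^W$ is forced to return the \emph{same} output vector on $G_1$ and $G_2$, while simultaneously arranging that both problems are feasible and bounded with equal optimal value but disjoint optimal solution sets. Once such a pair exists, any node-level GNN must output a single vector $y^\ast = F_W(G_1) = F_W(G_2)$ that would have to lie in both disjoint solution sets, which is impossible; this is exactly the asserted limitation, and the formal statement reduces to exhibiting the pair.

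First I would fix $n=8$ binary variable nodes with identical features ($c_j=0$, $l_j=0$, $u_j=1$, $\delta_I(j)=1$) and encode the quadratic coupling through a $2$-regular $Q$-graph. For $G_1$ I take the single cycle $1\!-\!2\!-\!\cdots\!-\!8\!-\!1$, and for $G_2$ the disjoint union of the $4$-cycles $1\!-\!3\!-\!5\!-\!7\!-\!1$ and $2\!-\!4\!-\!6\!-\!8\!-\!2$; in both cases I set $Q = dI + qM$ with edge weight $q>0$ and uniform self-loops $d\ge 2q$ (so $Q\succeq 0$, using $\lambda_{\min}(M)=-2$ for both $C_8$ and two $C_4$'s), and I add one symmetric cardinality constraint $\sum_{j} x_j = 4$. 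Because both graphs are $2$-regular with identical node features and share a single identical constraint node adjacent to all variables, color refinement never splits the one variable-node color; hence $G_1$ and $G_2$ are indistinguishable by $1$-WL and therefore by every GNN in $\calF_{\textup{MI-LCQP}}^W$, which then returns one common constant vector on both graphs.

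Next I would reduce the objective on the feasible set. On binary $x$ with $\sum_j x_j = 4$ one computes $\tfrac12 x^\top Q x = 2d + q\,e(S)$, where $S$ is the support of $x$ and $e(S)$ counts the $Q$-edges internal to $S$; thus minimizing the objective is equivalent to choosing a size-$4$ subset with the fewest internal edges. Both $C_8$ and the two $4$-cycles are triangle-free with independence number $4$, so both attain $e(S)=0$ and the identical optimal value $2d$. The optimal solution sets are exactly the size-$4$ independent sets: for $G_1$ these are the two alternating sets $\{1,3,5,7\}$ and $\{2,4,6,8\}$, whereas for $G_2$ they are the four sets $\{1,2,5,6\}$, $\{1,4,5,8\}$, $\{2,3,6,7\}$, $\{3,4,7,8\}$ obtained by choosing an antipodal pair from each $4$-cycle. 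Comparing the two lists shows the solution sets are disjoint.

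The main obstacle is the tension among three requirements: the two graphs must be non-isomorphic (so their optimizers can genuinely differ) yet $1$-WL-indistinguishable, and they must share the optimal \emph{value} while having disjoint optimal \emph{solution} sets. Non-isomorphic $1$-WL-indistinguishable graphs typically differ in some global invariant that the optimization can detect, which usually forces different optimal values --- for instance $C_6$ versus two triangles differ in independence number and hence give different optimal objectives (this is precisely the mechanism behind Proposition~\ref{lem:counter_ex_obj}). The resolution, and the crux of the argument, is to select a pair with the \emph{same} relevant optimization invariant (here both triangle-free with independence number $4$) so the values coincide, and then to place the two $4$-cycles of $G_2$ on the odd- and even-indexed vertices so that its optimizers avoid the alternating optimizers of $G_1$, forcing disjointness. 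The remaining work --- verifying the independence numbers, enumerating the size-$4$ independent sets, and checking their disjointness --- is routine combinatorics, while the indistinguishability is immediate from $2$-regularity and uniform node features.
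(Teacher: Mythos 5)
Your construction is correct, and it takes a genuinely different route from the paper's. The paper's counterexample (seven variables, eight constraints) places the distinguishing structure entirely in the linear constraint matrix $A$: the first seven constraints are equalities $x_i - x_{\sigma(i)}=0$ whose cycle structure partitions the variables into blocks of sizes $2+5$ versus $3+4$, the quadratic term is the rank-one $Q=\mathbf{1}\mathbf{1}^\top$ (so the objective depends only on $\sum_j x_j$), and the cardinality constraint $\sum_j x_j = 6$ with bounds $0\le x_j\le 3$ collapses each feasible set to a distinct singleton, $\{(3,3,0,0,0,0,0)\}$ versus $\{(2,2,2,0,0,0,0)\}$. This is essentially the MILP-style counterexample mechanism imported into the MI-LCQP setting. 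You instead encode the distinguishing structure in the quadratic coupling itself: $Q = dI + qM$ with $M$ the adjacency matrix of $C_8$ versus two copies of $C_4$, a single cardinality constraint $\sum_j x_j = 4$ on binary variables, and the identity $\tfrac12 x^\top Q x = 2d + q\,e(S)$ turning the optimization into a minimum-internal-edge (equivalently, maximum independent set) problem; both graphs have independence number $4$, giving equal optimal value $2d$, while the two alternating $4$-sets of $C_8$ are disjoint from the four antipodal-pair unions of the two $C_4$'s. Your verification of $Q\succeq 0$ via $\lambda_{\min}(M)=-2$ and $d\ge 2q$, and of WL-indistinguishability via $2$-regularity, uniform self-loops, and a single all-adjacent constraint node, are all sound. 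What your approach buys is a counterexample that is intrinsically quadratic --- the GNN's failure stems from its inability to resolve the topology of the $Q$-graph rather than of the constraint graph --- whereas the paper's construction has the advantage of singleton feasible sets, which makes the disjointness of the optimal solution sets immediate without enumerating independent sets. Either construction establishes the proposition.
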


It is indicated that for some MI-LCQP data distribution, it is \textit{impossible} to train a GNN to predict MI-LCQP properties, \textit{regardless of the GNN's size}. Particularly, one can choose the uniform distribution over pairs of instances satisfying Propositions~\ref{lem:counter_ex_feas}, \ref{lem:counter_ex_obj}, and \ref{lem:counter_ex_sol}: any GNN making good approximation on one instance must fail on the other. 

The detailed proofs are provided in Appendix~\ref{sec:pf-counter-example}. Here we present a pair of MI-LCQPs that prove Proposition~\ref{lem:counter_ex_sol}.

\paragraph{\textbf{A counterexample}} We modify the two LCQPs in \eqref{eq:example-1} and \eqref{eq:example-2} into two MI-LCQPs by introducing integer constraints on all variables: specifically,  $x_j \in \bZ$ for $i \in \{1,\cdots,7\}$. Consequently, the node attributes $w_j$ are updated to $w_j = (1, 0, 3, 1)$, where the last entry, $1$, indicates the integral constraint $\delta_I(j) = 1$. All other components of the graphs remain unchanged, as described in Section \ref{sec:theory-lcqp}.
With this modification, the node attributes $w_j$ remain identical for all $j \in W$. Since all other graph components are unchanged, \textit{the same argument from Section \ref{sec:theory-lcqp} applies: any GNNs cannot distinguish the two MI-LCQPs}.

However, the mixed-integer setting differs from the continuous one: the two problems no longer share the same solution $(\frac{6}{7},\frac{6}{7},\frac{6}{7},\frac{6}{7},\frac{6}{7},\frac{6}{7},\frac{6}{7})$. Instead, they have \textbf{\textit{disjoint}} optimal solution sets.
The first instance has an \textbf{\textit{unique}} feasible (and thus optimal) solution ${(3,3,0,0,0,0,0)}$, while in the second instance, it is ${(2,2,2,0,0,0,0)}$.

\subsection{GNNs can represent particular types of MI-LCQPs}
\label{sec:solvable}

We have shown a fundamental limitation of GNNs to represent MI-LCQP in general. A natural question arise: \emph{Whether we can identify a subset of $\calG_{\textup{MI-LCQP}}$ on which it is possible to train reliable GNNs}. To address this, we need to gain a better understanding for the separation power of GNNs, or equivalently, of the WL test, according to the discussion following Theorem~\ref{thm:sol-lcqp}. We state in Algorithm~\ref{alg:WL-mi-lcqp} the WL test for MI-LCQP-graphs associated to $\calF_{\textup{MI-LCQP}}$ or $\calF_{\textup{MI-LCQP}}^W$.

\begin{algorithm}
	\caption{The WL test
    ({Ref. to App. \ref{sec:characterization-GNN-solvable} for an example})}\label{alg:WL-mi-lcqp}
    \begin{footnotesize}
	\begin{algorithmic}[1]
		\Require $G = (V,W,A,Q, H_V,H_W)$, and iteration limit $L$.
		\State Initialization: $C_i^{0,V} = \text{hash}(v_i)$, and  $C_j^{0,W} = \text{hash}(w_j)$.
		\For{$l=1,2,\cdots,L$}
		\State $C_i^{l,V} = \text{hash} \left(C_i^{l-1,V}, \{\!\!\{(C_j^{l-1, W}, A_{ij}) \}\!\!\}_{j\in\calN_i^W} \right)$.
        \State $C_j^{l,W} = \text{hash} \left(C_j^{l-1,W}, \{\!\!\{(C_i^{l-1, V}, A_{ij} )\}\!\!\}_{i\in\calN_j^V},\{\!\!\{ (C_{j'}^{l-1, W}, Q_{jj'})\}\!\!\}_{j'\in\calN_j^W}\right).$
		\EndFor
		\State \textbf{return} All vertex colors $\{\!\!\{ C_i^{L,V} \}\!\!\}_{i=0}^m, \{\!\!\{ C_j^{L,W} \}\!\!\}_{j=0}^n$.
	\end{algorithmic}
    \end{footnotesize}
\end{algorithm}

Here, $C_i^{l,V}$ and $C_j^{l,W}$ are the colors of node $i\in V$ and node $j\in W$ at the $l$-th iteration. The ``hash" function is any injective (collision-free) mapping, independent of $i,j$. In practice, standard built-in hash functions can be used.

The WL test mimics GNNs' iterative updates, replacing learnable mappings with hash functions. While hash functions cannot directly map a graph to desired outcomes, they effectively distinguish nodes and differentiate graphs.

Initially, each vertex is labeled a color by a hash function according to its attributes ($v_i$ or $w_j$). At the $l$-th iteration, two vertices are of the same color if and only if at the $(l-1)$-th iteration, they have the same color and the same information aggregation from neighbors.
This is a \emph{color refinement} procedure.
One can have a \textbf{\underline{\textit{partition}}} of the vertex set $V\cup W$ at each iteration based on vertices' colors: two vertices are classified in the same class if and only if they are of the same color. Such a partition is strictly refined in the first $\mathcal{O}(m+n)$ iterations and will \textit{remain stable or unchanged} afterward, see e.g., 
\cite{berkholz2017tight}.

\paragraph{\textbf{Revisit the counterexample in Section \ref{sec:counter-example}}} As analyzed earlier, applying the WL test to the two MI-LCQPs results in all variable nodes $W$ having identical colors (red nodes), meaning they belong to the same class in the WL test's final stable partition. Nodes in the same class of this partition will always share identical attributes across all layers of any GNNs, and vice versa, because the color refinement process in Algorithm~\ref{alg:WL-mi-lcqp} mirrors GNNs' update mechanisms. However, while the desired GNN output is $(3,3,0,0,0,0,0)$, GNNs cannot differentiate between these variables, resulting in outputs where $y_1=\cdots=y_7$. This contradiction is the core reason for GNNs' failure on these counterexamples.

To address this issue, we propose to: (I) Restrict the focus to MI-LCQPs where each node in the graph representation has a unique color (i.e., every class in the partition contains exactly one node), ensuring GNNs can separate all nodes. or (II) Allow partitions with classes containing multiple nodes but impose additional conditions within each class to ensure the desired output does not rely on distinguishing between these nodes. Therefore, we define as follows.

\begin{definition}[GNN-friendly MI-LCQPs]\label{def:GNN-analyzable}
For a MI-LCQP problem $G_{\text{MI-LCQP}}\in\calG_{\textup{MI-LCQP}}^{m,n}$, apply the WL test to obtain the final stable partition $(\calI,\calJ)$, where $\calI=\{I_1,I_2,\dots,I_s\}$ partitions $V=\{1,2,\dots,m\}$ and $\calJ=\{J_1,J_2,\dots,J_t\}$ partitions $W = \{1,2,\dots,n\}$. Then, 
    \begin{itemize}[leftmargin=1em]
        \item $G_{\text{MI-LCQP}}$ is called ``GNN-solvable" if $t = n$ and $|J_1| = |J_2| = \cdots = |J_n| = 1$, i.e., all vertices in $W$ have different colors. 
    We use $\calG_{\textup{solvable}}^{m,n}\subset \calG_{\textup{MI-LCQP}}^{m,n}$ to denote the collection of all GNN-solvable MI-LCQPs.
    \item $G_{\text{MI-LCQP}}$ is called ``GNN-analyzable" if: (1) For any $p\in\{1,2,\dots,s\}$ and $q\in\{1,2,\dots,t\}$, $A_{ij}$ is constant across all $i\in I_p,j\in J_q$. (2) For any $q,q'\in\{1,2,\dots,t\}$, $Q_{jj'}$ is constant across $j\in J_q,j'\in J_{q'}$. 
    The set of all such MI-LCQPs is denoted by $\calG_{\textup{analyzable}}^{m,n}\subset \calG_{\textup{MI-LCQP}}^{m,n}$.
    \end{itemize}
\end{definition}

GNN-solvability extends ``unfoldability" in the context of MILP \cite{chen2022representing-milp}, and GNN-analyzability extends ``MP-tractability" \cite{chen2024rethinking}.
With such definitions, we can establish GNNs' expressive power for MI-LCQPs.

\begin{assumption}\label{asp:prob-mi-lcqp}
	$\bP$ is a Borel regular probability measure defined on the space of MI-LCQP-graphs  $\calG^{m,n}_{\textup{MI-LCQP}}$.
\end{assumption}

\begin{theorem}\label{thm:feas-mi-lcqp}
    For any $\bP$ 
    satisfying Assumption~\ref{asp:prob-mi-lcqp} and $\bP[G_{\textup{MI-LCQP}}\in \calG_{\textup{analyzable}}^{m,n}] = 1$, and for any $\epsilon>0$, there exists $F\in\calF_{\textup{MI-LCQP}}$ such that
    \begin{equation}\label{eq:feas-milcqp}
        \bP\big[\mathbb{I}_{F(G_{\textup{MI-LCQP}})>\frac{1}{2}}\neq\Phi_{\textup{feas}}(G_{\textup{MI-LCQP}})\big] < \epsilon,
    \end{equation}
    and there exists $F_1\in\calF_{\textup{MI-LCQP}}$ such that
        \begin{equation}\label{eq1:obj-milcqp}
            \bP\big[\mathbb{I}_{F_1(G_{\textup{MI-LCQP}})>\frac{1}{2}}\neq \mathbb{I}_{\Phi_{\textup{obj}}(G_{\textup{MI-LCQP}}) \in \bR}\big] < \epsilon.
            \end{equation}
            Additionally, if $\bP[\Phi_{\textup{obj}}(G_{\textup{MI-LCQP}}) \in \bR] = 1$, for any $\epsilon,\delta>0$, there exists $F_2\in\calF_{\textup{MI-LCQP}}$ such that
        \begin{equation}\label{eq2:obj-milcqp}
            \bP\left[|F_2(G_{\textup{MI-LCQP}}) - \Phi_{\textup{obj}}(G_{\textup{MI-LCQP}})| > \delta\right]<\epsilon.
        \end{equation}
\end{theorem}

To extend these results to predicting optimal solutions, we need to assume the MI-LCQPs have an optimal solution. We denote $\calG_{\textup{sol}}^{m,n}$ as the set of MI-LCQPs with an optimal solution. The assumption is expressed as $G_{\textup{MI-LCQP}} \in \calG_{\textup{sol}}^{m,n}$.

\begin{theorem}\label{thm:sol-mi-lcqp}
    For any $\bP$ satisfying Assumption~\ref{asp:prob-mi-lcqp} and $\bP[G_{\textup{MI-LCQP}} \in \calG_{\textup{sol}}^{m,n}\cap \calG_{\textup{solvable}}^{m,n}] = 1$, and for any $\epsilon,\delta>0$, there exists $F_W\in \calF_{\textup{MI-LCQP}}^W$ such that
    \begin{equation*}
        \bP\left[\|F_W(G_{\textup{MI-LCQP}}) - \Phi_{\textup{sol}}(G_{\textup{MI-LCQP}})\|>\delta\right] < \epsilon.
    \end{equation*}
\end{theorem}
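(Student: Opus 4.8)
The plan is to follow the Stone--Weierstrass strategy outlined after Theorem~\ref{thm:sol-lcqp}, adapted to the node-level (permutation-equivariant) setting and to the tie-breaking rule that defines $\Phi_{\textup{sol}}$ on unfoldable instances. The whole argument reduces to a single separation statement: if two graphs $G,\tilde G\in \calG_{\textup{sol}}^{m,n}\cap\calG_{\textup{unfold}}^{m,n}$ are indistinguishable by the WL test of Algorithm~\ref{alg:WL-mi-lcqp} (they produce identical stable color multisets), and if a variable node $j$ of $G$ and a variable node $\tilde j$ of $\tilde G$ receive the same stable color, then $\Phi_{\textup{sol}}(G)_j=\Phi_{\textup{sol}}(\tilde G)_{\tilde j}$. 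Granting this, the node-level equivalence of GNN separation power and WL refinement (established in \cite{xu2019powerful} and developed in \cites{azizian2020expressive,geerts2022expressiveness,chen2022representing-lp}), together with a Lusin-type argument that uses the Borel regularity of $\bP$ to pass to a compact set of measure at least $1-\epsilon$ on which $\Phi_{\textup{sol}}$ is continuous, yields a GNN $F_W\in\calF_{\textup{MI-LCQP}}^W$ with $\bP[\|F_W-\Phi_{\textup{sol}}\|>\delta]<\epsilon$.

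The core is proving the separation statement, and this is where unfoldability is essential. First I would observe that, because $G$ is unfoldable, all $n$ variable nodes carry pairwise distinct stable colors; consequently the stable color of a variable node encodes an injective, color-indexed record of its neighborhood rather than a mere multiset. Concretely, the $Q$-aggregation term $\{\{(C_{j'}^W,Q_{jj'}):j'\in\calN_j^W\}\}$ has no color collisions, so $Q_{jj'}$ is a function of the ordered pair of variable colors $(C_j^W,C_{j'}^W)$; likewise each constraint's stable color $C_i^V$ determines its full row, since $\{\{(C_j^W,A_{ij}):j\in\calN_i^W\}\}$ is collision-free in the variable coordinate, and it also records $(b_i,\circ_i)$.

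Using the matching color multisets I would then build a color-preserving bijection $\pi$ on the variable nodes and a color-preserving bijection $\sigma$ on the constraint nodes between $G$ and $\tilde G$. The observations above force $Q_{jj'}=\tilde Q_{\pi(j)\pi(j')}$, $A_{ij}=\tilde A_{\sigma(i)\pi(j)}$, $b_i=\tilde b_{\sigma(i)}$, $\circ_i=\tilde\circ_{\sigma(i)}$, and, from the node features embedded in the colors, $c_j=\tilde c_{\pi(j)}$, $l_j=\tilde l_{\pi(j)}$, $u_j=\tilde u_{\pi(j)}$, $\delta_I(j)=\delta_I(\pi(j))$. Hence $(\sigma,\pi)$ is a genuine isomorphism of MI-LCQP-graphs: the two instances are the same optimization problem up to relabeling the variables by $\pi$. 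Since unfoldability is exactly the condition under which $\Phi_{\textup{sol}}$ is well defined and permutation-equivariant, via the total order on the optimal solution set following \cite{chen2022representing-milp}*{Appendix C}, the optimal solution sets correspond under $\pi$ and their unique minimal elements match entrywise, giving $\Phi_{\textup{sol}}(G)_j=\Phi_{\textup{sol}}(\tilde G)_{\pi(j)}$, which is the claim.

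The main obstacle I anticipate is this separation step rather than the Stone--Weierstrass wrapper. The delicate points are: (i) justifying that distinct variable colors turn the WL aggregation multisets into collision-free records, so that edge weights and rows are recovered \emph{exactly} and not merely up to multiset equality, which is all that foldable instances guarantee; and (ii) handling the non-convexity of MI-LCQP, where several optimal solutions, even several of minimal $\ell_2$-norm, may coexist, so that matching optimal \emph{sets} is not enough and one must invoke the permutation-equivariant tie-breaking order to pin down a single solution consistently across $\pi$. I would also track the possibility that the constraint side remains folded (some $|I_p|>1$): this is harmless, because only the multiset of constraint rows, not their labeling, determines the feasible region, so any color-respecting choice of $\sigma$ completes the isomorphism.
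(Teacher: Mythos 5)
Your proposal follows essentially the same route as the paper: you show that on unfoldable instances WL-indistinguishability forces the aggregation multisets to be collision-free in the variable coordinate, hence the two graphs are genuinely isomorphic (with the constraint side possibly still folded, which is indeed harmless), and you then invoke the permutation-equivariant total-order tie-breaking of \cite{chen2022representing-milp}*{Appendix C} and the standard WL-to-GNN Stone--Weierstrass/Lusin wrapper. This is exactly the content of the paper's Theorem~\ref{thm:mi-lcqp-sameWL2sol} (proved via Proposition~\ref{prop:unfoldable2MPtractable} and the block analysis of Theorem~\ref{thm:mi-lcqp-sameWL2feasobj}) combined with its reduction to the argument of Theorem 3.6 in \cite{chen2022representing-lp}.
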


Theorems~\ref{thm:feas-mi-lcqp} and \ref{thm:sol-mi-lcqp} indicate the subsets of MI-LCQPs where GNNs can succeed: For \textit{GNN-analyzable} MI-LCQPs, GNNs can approximate their feasibility and optimal objective; For \textit{GNN-solvable} ones, GNNs can approximate their optimal solutions. Proofs are available in Appendix~\ref{sec:pf-mi-lcqp}.

\subsection{Discussions of ``friendly" MI-LCQPs}
\label{sec:practical-discussions}

To better illustrate the practical implications of Theorems~\ref{thm:feas-mi-lcqp} and \ref{thm:sol-mi-lcqp}, we make some discussions of GNN-analyzability and GNN-solvability here.

\paragraph{\textbf{Analyzability vs Solvability}} 
While all GNN-solvable MI-LCQPs must be GNN-analyzable, not all GNN-analyzable problems are necessarily GNN-solvable. 
Related discussions, proofs and examples are detailed in Appendix \ref{sec:characterization-GNN-solvable}.

Since solvability is a stronger condition than analyzability, when a problem is GNN-solvable, GNNs can approximate not only the optimal solution (Theorem \ref{thm:sol-mi-lcqp}) but also the feasibility and the optimal objective (Theorem \ref{thm:feas-mi-lcqp}). In this sense, learning the solution mapping is a stricter requirement than learning the feasibility and optimal objective.

\paragraph{\textbf{Frequency of friendly instances}} In practice, the frequency of GNN-analyzable and GNN-solvable instances largely depends on the \textbf{\textit{the level of symmetry}} in the dataset. Here, two variables labeled with the same color by the WL test are said to be symmetric.
When there is symmetry in a MI-LCQP, it becomes GNN-insolvable; and higher symmetry increases the risk of being GNN-inanalyzable. 
Examples provided by \eqref{eq:example-1} and \eqref{eq:example-2} admit strong symmetry across all variables, making them neither GNN-analyzable nor GNN-solvable. 
Fortunately, \textit{GNN-solvable and GNN-analyzable instances make up the majority of the MI-LCQP set}, under specific distributional assumptions (see e.g. Proposition~\ref{prop:generic-GNN-solvable}).
This explains the observed success of GNNs for QPs in the existing literature and addresses the gap between theoretical limitations and empirical success discussed in Section \ref{sec:intro}. The dataset used in Section \ref{sec:numerical-exps} consists entirely of GNN-solvable and GNN-analyzable instances.
However, in some challenging, artificially collected or created datasets, such as QPlib~\cite{furini2019qplib}, 
it's possible to be GNN-insolvable or even GNN-inanalyzable (see in Appendix \ref{sec:frequency-GNN-analyze-solvable} for details).

\paragraph{\textbf{Numerical verification}} 
While GNN-analyzability and GNN-solvability depend on the dataset, they can be efficiently verified in practice. One may first apply Algorithm \ref{alg:WL-mi-lcqp}, which requires at most $\mathcal{O}(m+n)$ iterations. The single-iteration complexity is bounded by the number of edges in the graph \cite{shervashidze2011weisfeiler}, which, in our context, is the number of nonzeros in matrices $A$ and $Q$: $\text{nnz}(A)+\text{nnz}(Q)$. Therefore, Algorithm \ref{alg:WL-mi-lcqp}'s complexity is $\mathcal{O}((m+n)\cdot (\text{nnz}(A)+\text{nnz}(Q)))$. Afterward, both conditions can be directly verified using Definition \ref{def:GNN-analyzable}.

Therefore, these criteria provide practitioners with a practical tool to evaluate datasets before applying GNNs. Additionally, if challenges arise during GNN training on MI-LCQPs, verifying them can help identify potential issues.

\section{Numerical experiments}
\label{sec:numerical-exps}

We present numerical experiments in this section.\footnote{Codes are available at \url{https://github.com/liujl11git/GNN-QP}.}

\paragraph{\textbf{Numerical validation of GNNs' expressive power}}
We train GNNs to fit $\Phi_{\textup{obj}}$ or $\Phi_{\textup{sol}}$ for LCQP or MI-LCQP instances.\footnote{Since LCQP and MI-LCQP are linearly constrained, predicting feasibility falls to the case of LP and MILP in \cites{chen2022representing-lp,chen2022representing-milp}. Hence we omit the feasibility experiments.} For both LCQP and MI-LCQP, we randomly generate 100 instances, each of which contains 10 constraints and 50 variables. The generated MI-LCQPs are all GNN-solvable and GNN-analyzable with probability one. 
Details on the data generation and training schemes can be found in Appendix~\ref{sec:app-exp}.
We train four GNNs with four different embedding sizes and record their relative errors\footnote{The relative error of a GNN $F_W$ on a single problem instance $G$ is defined as $\|F_W(G) - \Phi(G)\|_2 / \max(\|\Phi(G)\|_2, 1)$, where $\Phi$ could be either $\Phi_{\textup{obj}}$ or $\Phi_{\textup{sol}}$.} averaged on all instances during training. The results are reported in Figure~\ref{fig:lcqp-100}. 
We can see that GNNs can fit $\Phi_{\textup{obj}}$ and $\Phi_{\textup{sol}}$ well for both LCQP and MI-LCQP.
These results validate Theorems~\ref{thm:obj-lcqp},\ref{thm:sol-lcqp},\ref{thm:feas-mi-lcqp} and \ref{thm:sol-mi-lcqp}.
We also observe that a larger embedding size increases the capacity of a GNN, resulting in not only lower final errors but also faster convergence.

\begin{figure*}[t]
  \centering
    \centering
    \begin{subfigure}[b]{0.48\textwidth}
      \centering
      \includegraphics[width=\textwidth]{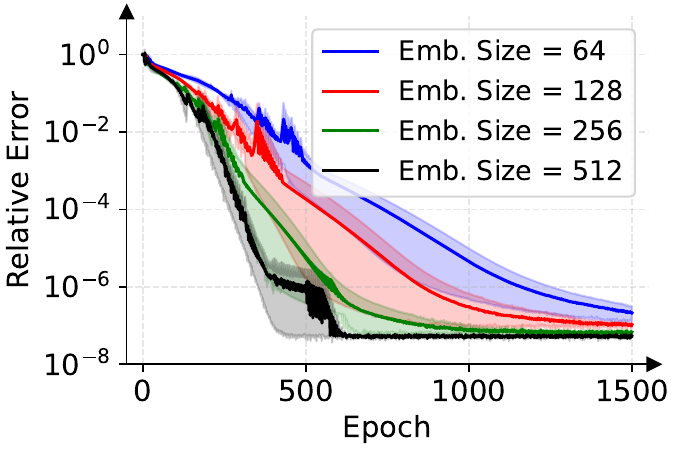}
      \caption{Fit $\Phi_{\textup{obj}}$ for LCQP}
      \label{fig:lcqp-100-obj}
    \end{subfigure}
    \centering
    \begin{subfigure}[b]{0.48\textwidth}
      \centering
      \includegraphics[width=\textwidth]{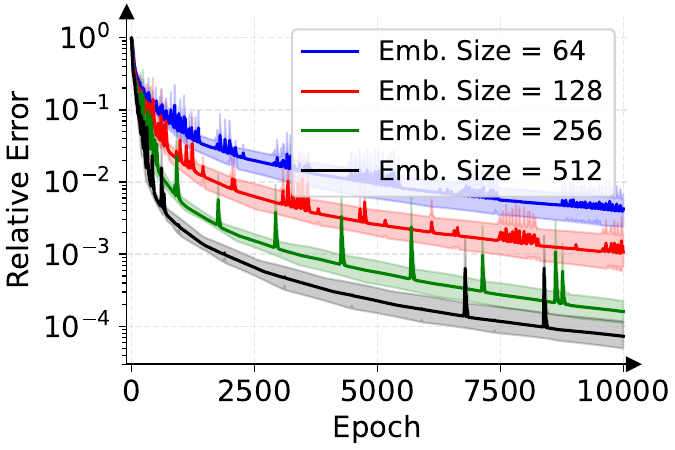}
      \caption{Fit $\Phi_{\textup{sol}}$ for LCQP}
      \label{fig:lcqp-100-sol}
    \end{subfigure}
    \begin{subfigure}[b]{0.48\textwidth}
      \centering
      \includegraphics[width=\textwidth]{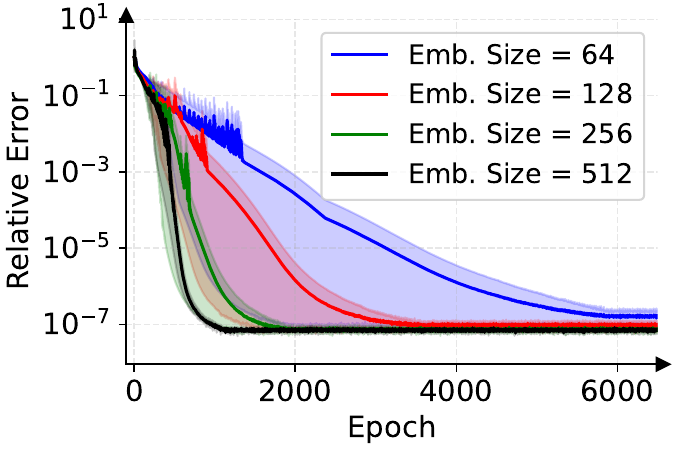}
      \caption{Fit $\Phi_{\textup{obj}}$ for MI-LCQP}
      \label{fig:milcqp-100-obj}
    \end{subfigure}
    \begin{subfigure}[b]{0.48\textwidth}
      \centering
      \includegraphics[width=\textwidth]{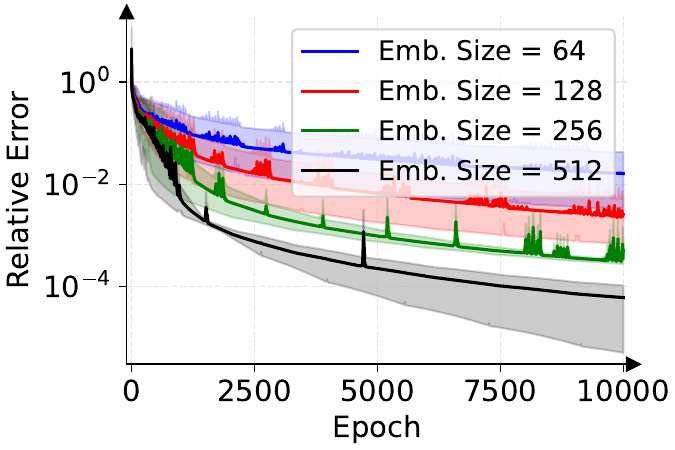}
      \caption{Fit $\Phi_{\textup{sol}}$ for MI-LCQP}
      \label{fig:milcqp-100-sol}
    \end{subfigure}
    \caption{Relative errors when training GNNs to fit $\Phi_{\textup{obj}}$ and $\Phi_{\textup{sol}}$ for LCQP (\ref{fig:lcqp-100-obj}-\ref{fig:lcqp-100-sol}) and MI-LCQP (\ref{fig:milcqp-100-obj}-\ref{fig:milcqp-100-sol}) with various embedding sizes. 
    }
    \label{fig:lcqp-100}
\end{figure*}

\paragraph{\textbf{Validation on a larger scale}} To further validate the theorems, we expand the number of problem instances to 500 and 2,500.
The results, reported in Figure~\ref{fig:exp-emb}, show that GNN achieves near-zero fitting errors when it has a large enough embedding size and thus enough capacity for approximation, which directly validate Theorems~\ref{thm:obj-lcqp},\ref{thm:sol-lcqp},\ref{thm:feas-mi-lcqp} and \ref{thm:sol-mi-lcqp}.

\begin{figure*}[!htbp]
  \centering
    \centering
    \begin{subfigure}[b]{0.48\textwidth}
      \centering
      \includegraphics[width=\textwidth]{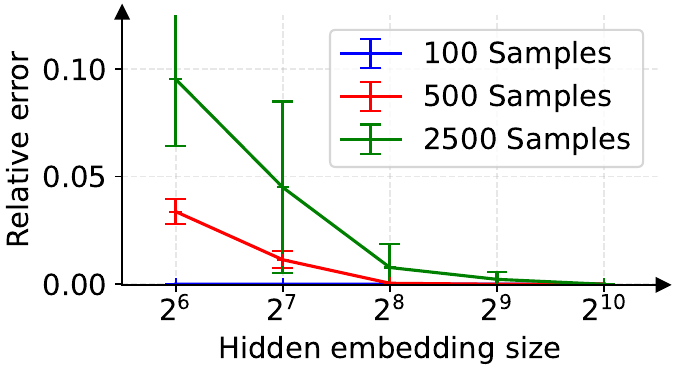}
      \caption{Fit $\Phi_{\textup{obj}}$ for LCQP}
      \label{fig:lcqp-obj-emb}
    \end{subfigure}
    \centering
    \begin{subfigure}[b]{0.48\textwidth}
      \centering
      \includegraphics[width=\textwidth]{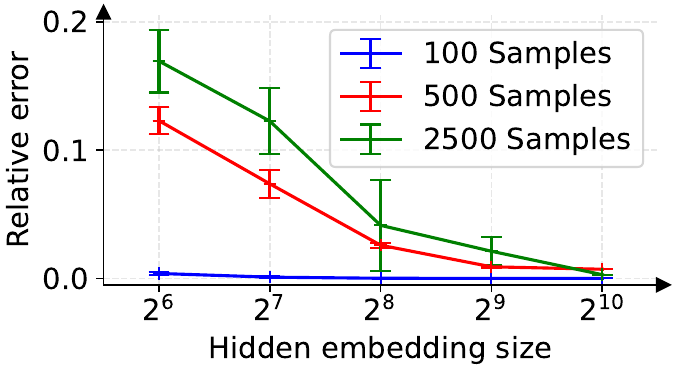}
      \caption{Fit $\Phi_{\textup{sol}}$ for LCQP}
      \label{fig:lcqp-sol-emb}
    \end{subfigure}
    \begin{subfigure}[b]{0.48\textwidth}
      \centering
      \includegraphics[width=\textwidth]{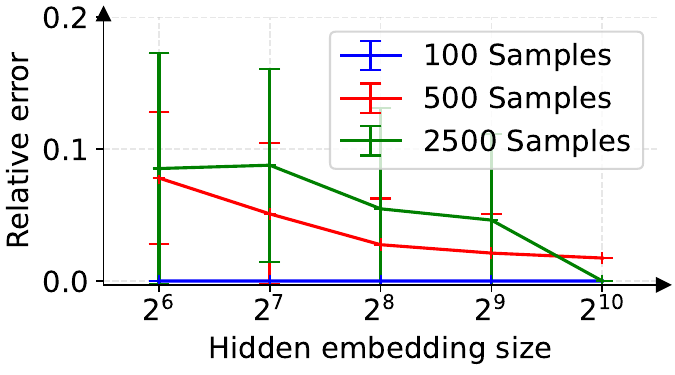}
      \caption{Fit $\Phi_{\textup{obj}}$ for MI-LCQP}
      \label{fig:milcqp-obj-emb}
    \end{subfigure}
    \begin{subfigure}[b]{0.48\textwidth}
      \centering
      \includegraphics[width=\textwidth]{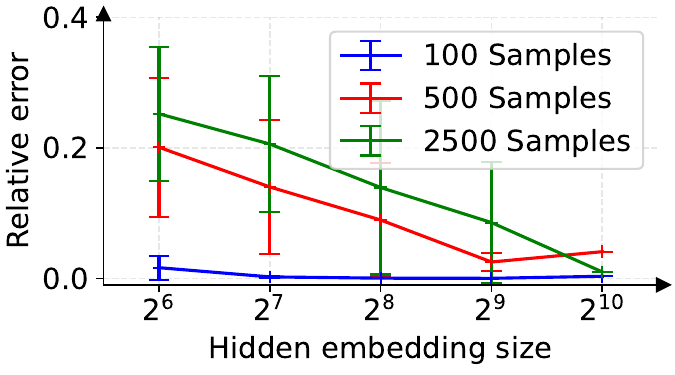}
      \caption{Fit $\Phi_{\textup{sol}}$ for MI-LCQP}
      \label{fig:milcqp-sol-emb}
    \end{subfigure}
    \caption{Validation on a larger scale. 
    The figures illustrate the relative errors 
    for various combinations of embedding sizes and numbers of training samples. We can achieve near zero errors when the GNN is large enough.}
    \label{fig:exp-emb}
\end{figure*}

\paragraph{\textbf{Various types of LCQP}} Besides the generic LCQP formulation~\eqref{eq:LCQP}, we also extend the numerical experiments to other types of optimization problems, namely portfolio optimization and support vector machine (SVM) following \cite{jung2022learning}. The results 
are reported in Figure~\ref{fig:exp-portfolio-svm}. We can observe similar fitting behaviors as those in the generic LCQP experiments where the expressive power of GNNs increase as they become larger, evidenced by the fitting errors decreasing to near zero when the embedding size increases.

\begin{figure*}[t]
  \centering
    \centering
    \begin{subfigure}[b]{0.48\textwidth}
      \centering
      \includegraphics[width=\textwidth]{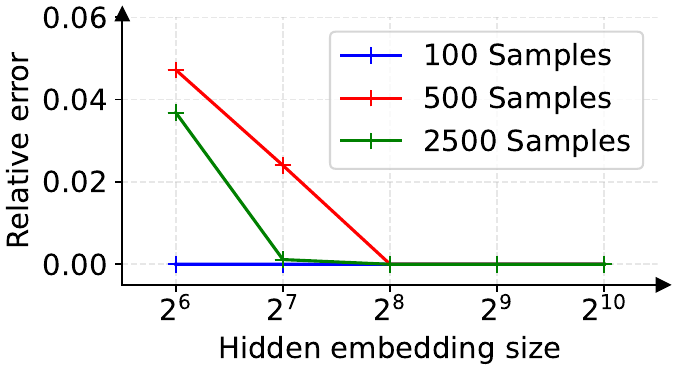}
      \caption{Fit $\Phi_{\textup{obj}}$ for portfolio}
      \label{fig:portfolio-obj-emb}
    \end{subfigure}
    \centering
    \begin{subfigure}[b]{0.48\textwidth}
      \centering
      \includegraphics[width=\textwidth]{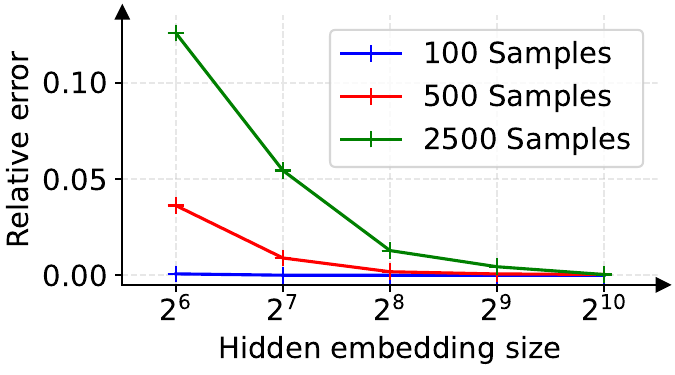}
      \caption{Fit $\Phi_{\textup{sol}}$ for portfolio}
      \label{fig:portfolio-sol-emb}
    \end{subfigure}
    \begin{subfigure}[b]{0.48\textwidth}
      \centering
      \includegraphics[width=\textwidth]{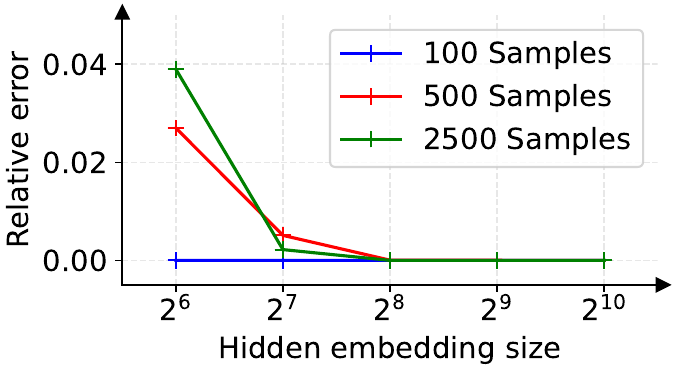}
      \caption{Fit $\Phi_{\textup{obj}}$ for SVM}
      \label{fig:svm-obj-emb}
    \end{subfigure}
    \begin{subfigure}[b]{0.48\textwidth}
      \centering
      \includegraphics[width=\textwidth]{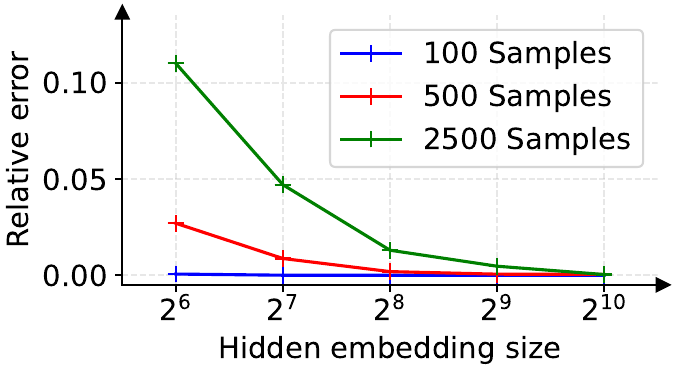}
      \caption{Fit $\Phi_{\textup{sol}}$ for SVM}
      \label{fig:svm-sol-emb}
    \end{subfigure}
    \caption{Various types of LCQP: portfolio optimization and SVM optimization problems.
    The figures illustrate the best relative errors achieved during training for various combinations of embedding sizes and numbers of training samples. 
    }
    \label{fig:exp-portfolio-svm}
\end{figure*}

\paragraph{\textbf{Real dataset}} To further examine the universal approximation results on more realistic QP problems. 
For LCQP, we train GNNs on the Maros and Meszaros Problem Set~\cite{maros1999repository}, which contains 138 challenging convex LCQPs. 
The results are shown in Figure~\ref{fig:app-mm}. We observe that while the broad range of numbers of instances in the Maros Meszaros test set caused numerical difficulties for training, GNNs can still be trained to fit the objectives and solutions to some extent. And we can observe similar tendency as in the synthesized experiments that the expressive power increases as the model capacity enlarges when we increase the embedding size.
Details are deferred to Appendix~\ref{sec:app-exp}.

\begin{figure*}[t]
  \centering
    \centering
    \begin{subfigure}[b]{0.48\textwidth}
      \centering
      \includegraphics[width=\textwidth]{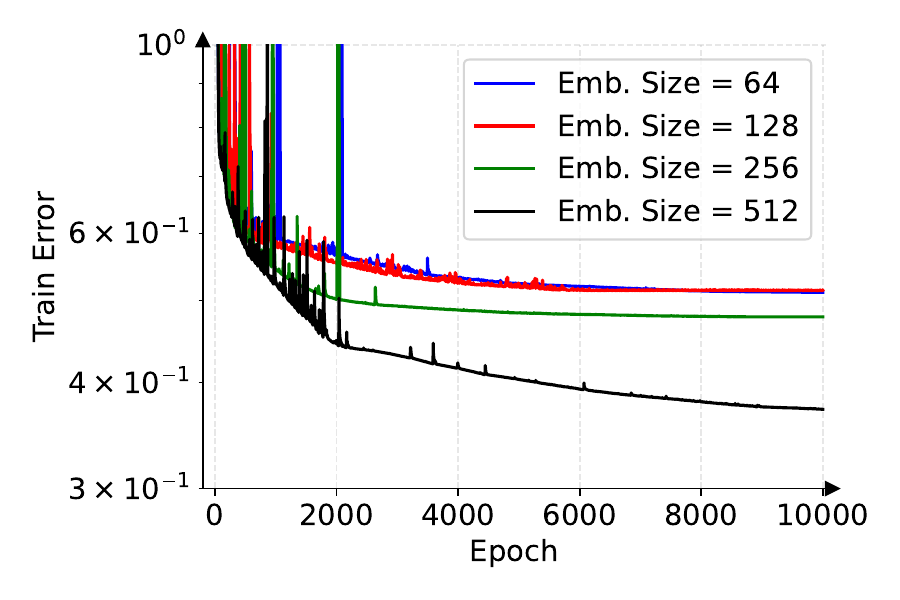}
      \caption{Fit $\Phi_{\textup{obj}}$ on Maros Meszaros}
      \label{fig:app-mm-obj}
    \end{subfigure}
    \centering
    \begin{subfigure}[b]{0.48\textwidth}
      \centering
      \includegraphics[width=\textwidth]{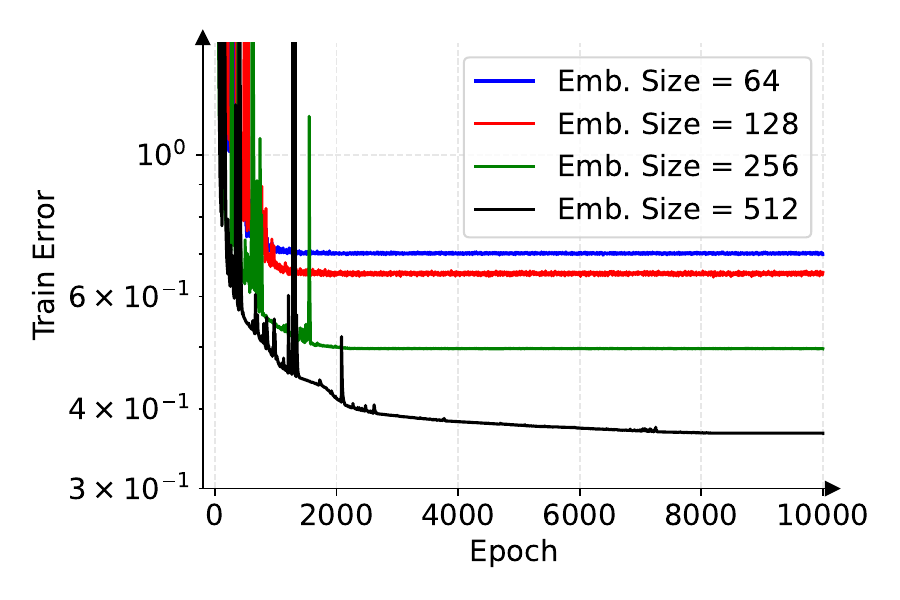}
      \caption{Fit $\Phi_{\textup{sol}}$ on Maros Meszaros}
      \label{fig:app-mm-solu}
    \end{subfigure}
    \begin{subfigure}[b]{0.48\textwidth}
      \centering
      \includegraphics[width=\textwidth]{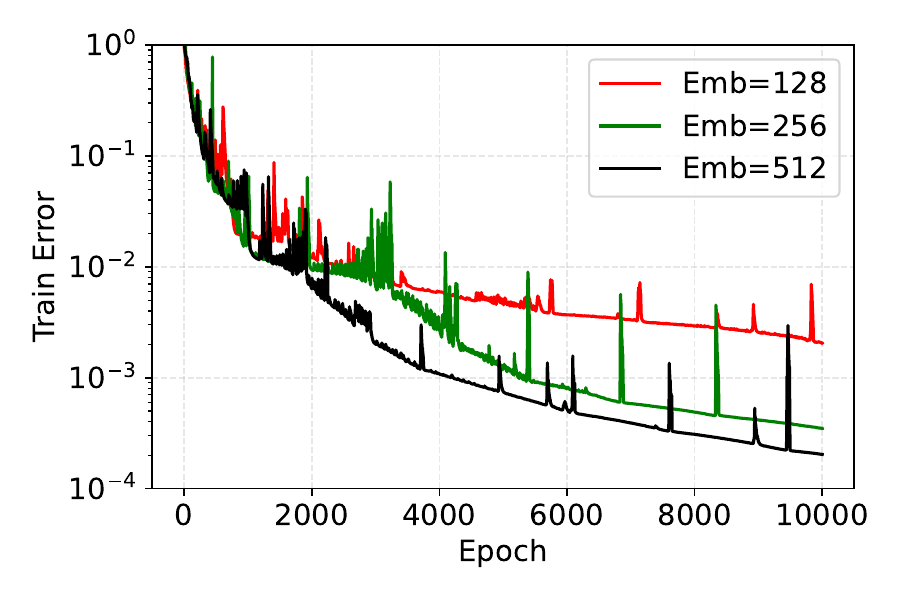}
      \caption{Fit $\Phi_{\textup{obj}}$ on QPLib}
      \label{fig:app-qplib-obj}
    \end{subfigure}
    \centering
    \begin{subfigure}[b]{0.48\textwidth}
      \includegraphics[width=\textwidth]{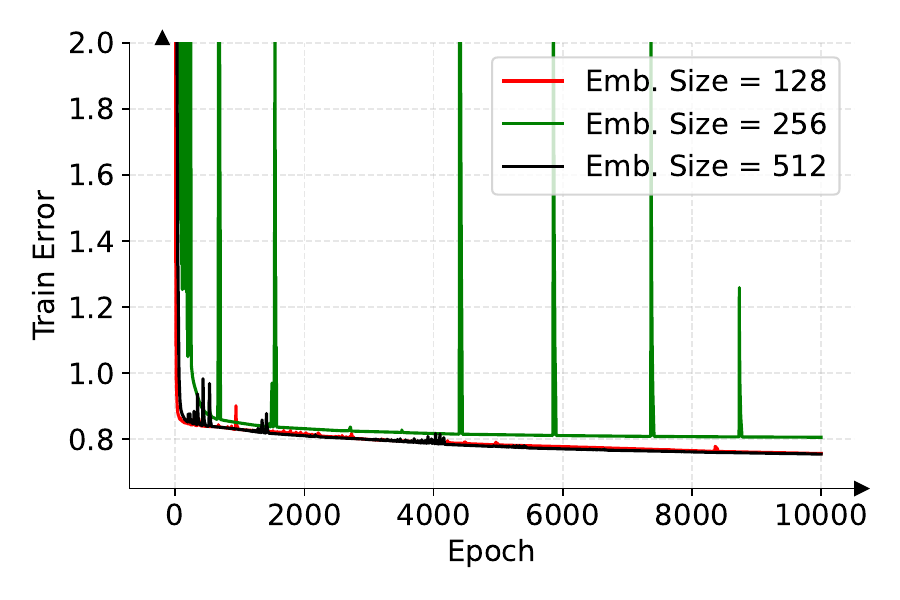}
      \caption{Fit $\Phi_{\textup{sol}}$ on QPLib}
      \label{fig:app-qplib-solu}
      \centering
    \end{subfigure}
    \caption{Training errors on real datasets: Convex LCQP (\ref{fig:app-mm-obj}-\ref{fig:app-mm-solu}) and MI-LCQP (\ref{fig:app-qplib-obj}-\ref{fig:app-qplib-solu}).
    }
    \label{fig:app-mm}
\end{figure*}

For MI-LCQP, we perform GNN training on 73 GNN-solvable instances from QPLib~\cite{furini2019qplib} that provide the optimal solutions and objectives. We train GNNs of embedding sizes 128, 256, and 512 to fit the objectives and solutions. The training errors we achieved are shown in Figures~\ref{fig:app-qplib-obj}-\ref{fig:app-qplib-solu}. GNNs can fit the objective values well and demonstrate the ability to fit solutions. The results show the model capacity improves as the model size increases.

\paragraph{\textbf{Computation complexity}}
GNNs are superior over QP solvers in terms of running time, especially when we fully exploit parallel computing with GPU acceleration. To show this, we measure the average running time using OSQP~\cite{osqp} and a trained GNN with different batch sizes over the 1,000 synthetic LCQP problems generated in the experiment above. 
We applied OSQP to solve all instances to a relative error of $10^{-3}$, which is slightly less accurate than the trained GNN (with an average relative error of $6.31\times 10^{-4}$). The average solving times and standard deviations are shown in Table~\ref{tab:times}. The sufficiently accelerated computation validates GNNs' capacity as a real-time QP solver or fast warm-start, numerically supporting the rationality of our theoretical study of GNNs for QPs.

\begin{table*}[!htbp]
\centering
\caption{Average solving times (milliseconds) of GNN and OSQP. 
}
\label{tab:times}
\begin{tabular}{cccccc}
\toprule
\begin{tabular}[c]{@{}c@{}}Batch Size\end{tabular} & -     & 1     & 10    & 100   & 1,000 \\
\midrule
OSQP                                               & 4.48±3.62  & -     & -     & -     & -     \\
GNN                                                & -     & 53.62±16.72 & 5.37±1.87  & 0.504±0.142  & 0.089±0.002  \\
\bottomrule
\end{tabular}
\end{table*}

\paragraph{\textbf{Generalization}} Besides investigating GNNs' expressive capacity, we also explore their generalization ability and observed positive results. As generalization is beyond this work's scope, the results are provided in Appendix~\ref{sec:app-exp}.

\section{Conclusions and future directions}
\label{sec:conclude}

This paper establishes theoretical foundations for using GNNs to represent the feasibility, optimal objective value, and optimal solution of LCQPs and MI-LCQPs. We prove GNNs can universally approximate these properties for LCQPs and show this is in general not true for MI-LCQPs, except for specific subclasses we identify. Future research directions include studying the training dynamics, generalization behavior, and size requirements of GNNs.
In particular, the algorithm-unrolling framework by \cite{yang2024efficient} provides a way to estimate GNN sizes, as it explicitly defines the number of parameters per layer, and prior work (e.g., \cite{li2024power}) shows algorithm-unrolling is a structured GNN. These connections suggest that GNN complexity bounds for (MI-)LCQPs may be derived from the algorithm-unrolling literature.
Another compelling direction for future work is to unify LP, LCQP, QCQP, and extensions on higher-order polynomial optimization via hypergraph GNNs.

\bibliographystyle{amsxport}
\bibliography{references}

@inproceedings{chen2022representing-lp,
  title={On Representing Linear Programs by Graph Neural Networks},
  author={Chen, Ziang and Liu, Jialin and Wang, Xinshang and Lu, Jianfeng and Yin, Wotao},
  booktitle={{The Eleventh International Conference on Learning Representations}},
  year={2023}
}

@inproceedings{chen2022representing-milp,
  title={On Representing Mixed-Integer Linear Programs by Graph Neural Networks},
  author={Chen, Ziang and Liu, Jialin and Wang, Xinshang and Lu, Jianfeng and Yin, Wotao},
  booktitle={{The Eleventh International Conference on Learning Representations}},
  year={2023}
}

@article{gasse2019exact,
  title={Exact combinatorial optimization with graph convolutional neural networks},
  author={Gasse, Maxime and Ch{\'e}telat, Didier and Ferroni, Nicola and Charlin, Laurent and Lodi, Andrea},
  journal={Advances in neural information processing systems},
  volume={32},
  year={2019}
}

@inproceedings{liu2022revocable,
  title={Revocable Deep Reinforcement Learning with Affinity Regularization for Outlier-Robust Graph Matching},
  author={Liu, Chang and Jiang, Zetian and Wang, Runzhong and Huang, Lingxiao and Lu, Pinyan and Yan, Junchi},
  booktitle={The Eleventh International Conference on Learning Representations},
  year={2022}
}

@article{eaves1971quadratic,
  title={On quadratic programming},
  author={Eaves, B Curtis},
  journal={Management Science},
  volume={17},
  number={11},
  pages={698--711},
  year={1971},
  publisher={INFORMS}
}

@article{weisfeiler1968reduction,
  title={The reduction of a graph to canonical form and the algebra which appears therein},
  author={Weisfeiler, Boris and Leman, Andrei},
  journal={NTI, Series},
  volume={2},
  number={9},
  pages={12--16},
  year={1968}
}

@inproceedings{xu2019powerful,
  title={How Powerful are Graph Neural Networks?},
  author={Xu, Keyulu and Hu, Weihua and Leskovec, Jure and Jegelka, Stefanie},
  booktitle={International Conference on Learning Representations},
  year={2019}
}

@inproceedings{azizian2020expressive,
  title={Expressive power of invariant and equivariant graph neural networks},
  author={Azizian, Waiss and Lelarge, Marc},
  booktitle={International Conference on Learning Representations},
  year={2021}
}

@inproceedings{
geerts2022expressiveness,
title={Expressiveness and Approximation Properties of Graph Neural Networks},
author={Floris Geerts and Juan L Reutter},
booktitle={International Conference on Learning Representations},
year={2022},
}

@article{scarselli2008graph,
  title={The graph neural network model},
  author={Scarselli, Franco and Gori, Marco and Tsoi, Ah Chung and Hagenbuchner, Markus and Monfardini, Gabriele},
  journal={IEEE transactions on neural networks},
  volume={20},
  number={1},
  pages={61--80},
  year={2008},
  publisher={IEEE}
}

@article{bonami2022classifier,
  title={A classifier to decide on the linearization of mixed-integer quadratic problems in CPLEX},
  author={Bonami, Pierre and Lodi, Andrea and Zarpellon, Giulia},
  journal={Operations research},
  volume={70},
  number={6},
  pages={3303--3320},
  year={2022},
  publisher={INFORMS}
}

@article{nowak2017note,
  title={A note on learning algorithms for quadratic assignment with graph neural networks},
  author={Nowak, Alex and Villar, Soledad and Bandeira, Afonso S and Bruna, Joan},
  journal={stat},
  volume={1050},
  pages={22},
  year={2017}
}

@article{ichnowski2021accelerating,
  title={Accelerating quadratic optimization with reinforcement learning},
  author={Ichnowski, Jeffrey and Jain, Paras and Stellato, Bartolomeo and Banjac, Goran and Luo, Michael and Borrelli, Francesco and Gonzalez, Joseph E and Stoica, Ion and Goldberg, Ken},
  journal={Advances in Neural Information Processing Systems},
  volume={34},
  pages={21043--21055},
  year={2021}
}

@inproceedings{qu2021adaptive,
  title={Adaptive Edge Attention for Graph Matching with Outliers.},
  author={Qu, Jingwei and Ling, Haibin and Zhang, Chenrui and Lyu, Xiaoqing and Tang, Zhi},
  booktitle={IJCAI},
  pages={966--972},
  year={2021}
}

@inproceedings{gao2021deep,
  title={Deep graph matching under quadratic constraint},
  author={Gao, Quankai and Wang, Fudong and Xue, Nan and Yu, Jin-Gang and Xia, Gui-Song},
  booktitle={Proceedings of the IEEE/CVF Conference on Computer Vision and Pattern Recognition},
  pages={5069--5078},
  year={2021}
}

@inproceedings{sambharya2023end,
  title={End-to-end learning to warm-start for real-time quadratic optimization},
  author={Sambharya, Rajiv and Hall, Georgina and Amos, Brandon and Stellato, Bartolomeo},
  booktitle={Learning for Dynamics and Control Conference},
  pages={220--234},
  year={2023},
  organization={PMLR}
}

@article{tan2024ensemble,
  title={Ensemble Quadratic Assignment Network for Graph Matching},
  author={Tan, Haoru and Wang, Chuang and Wu, Sitong and Zhang, Xu-Yao and Yin, Fei and Liu, Cheng-Lin},
  journal={International Journal of Computer Vision},
  pages={1--23},
  year={2024},
  publisher={Springer}
}

@inproceedings{bonami2018learning,
  title={Learning a classification of mixed-integer quadratic programming problems},
  author={Bonami, Pierre and Lodi, Andrea and Zarpellon, Giulia},
  booktitle={Integration of Constraint Programming, Artificial Intelligence, and Operations Research: 15th International Conference, CPAIOR 2018, Delft, The Netherlands, June 26--29, 2018, Proceedings 15},
  pages={595--604},
  year={2018},
  organization={Springer}
}

@inproceedings{wang2020learning,
  title={Learning combinatorial solver for graph matching},
  author={Wang, Tao and Liu, He and Li, Yidong and Jin, Yi and Hou, Xiaohui and Ling, Haibin},
  booktitle={Proceedings of the IEEE/CVF conference on computer vision and pattern recognition},
  pages={7568--7577},
  year={2020}
}

@article{jung2022learning,
  title={Learning context-aware adaptive solvers to accelerate quadratic programming},
  author={Jung, Haewon and Park, Junyoung and Park, Jinkyoo},
  journal={arXiv preprint arXiv:2211.12443},
  year={2022}
}

@inproceedings{getzelman2021learning,
  title={Learning to switch optimizers for quadratic programming},
  author={Getzelman, Grant and Balaprakash, Prasanna},
  booktitle={Asian Conference on Machine Learning},
  pages={1553--1568},
  year={2021},
  organization={PMLR}
}

@article{king2024metric,
  title={Metric Learning to Accelerate Convergence of Operator Splitting Methods for Differentiable Parametric Programming},
  author={King, Ethan and Kotary, James and Fioretto, Ferdinando and Drgona, Jan},
  journal={arXiv preprint arXiv:2404.00882},
  year={2024}
}

@article{wang2021neural,
  title={Neural graph matching network: Learning lawler’s quadratic assignment problem with extension to hypergraph and multiple-graph matching},
  author={Wang, Runzhong and Yan, Junchi and Yang, Xiaokang},
  journal={IEEE Transactions on Pattern Analysis and Machine Intelligence},
  volume={44},
  number={9},
  pages={5261--5279},
  year={2021},
  publisher={IEEE}
}

@article{bertsimas2022online,
  title={Online mixed-integer optimization in milliseconds},
  author={Bertsimas, Dimitris and Stellato, Bartolomeo},
  journal={INFORMS Journal on Computing},
  volume={34},
  number={4},
  pages={2229--2248},
  year={2022},
  publisher={INFORMS}
}

@inproceedings{pei2023reinforcement,
  title={Reinforcement Learning-Guided Quadratically Constrained Quadratic Programming for Enhanced Convergence and Optimality},
  author={Pei, Chaoying and Xu, Zhi and You, Sixiong and Sun, Jeffrey and Dai, Ran},
  booktitle={2023 62nd IEEE Conference on Decision and Control (CDC)},
  pages={7293--7298},
  year={2023},
  organization={IEEE}
}

@article{wang2020combinatorial,
  title={Combinatorial learning of robust deep graph matching: an embedding based approach},
  author={Wang, Runzhong and Yan, Junchi and Yang, Xiaokang},
  journal={IEEE Transactions on Pattern Analysis and Machine Intelligence},
  volume={45},
  number={6},
  pages={6984--7000},
  year={2020},
  publisher={IEEE}
}

@article{karg2020efficient,
  title={Efficient representation and approximation of model predictive control laws via deep learning},
  author={Karg, Benjamin and Lucia, Sergio},
  journal={IEEE Transactions on Cybernetics},
  volume={50},
  number={9},
  pages={3866--3878},
  year={2020},
  publisher={IEEE}
}

@inproceedings{chen2018approximating,
  title={Approximating explicit model predictive control using constrained neural networks},
  author={Chen, Steven and Saulnier, Kelsey and Atanasov, Nikolay and Lee, Daniel D and Kumar, Vijay and Pappas, George J and Morari, Manfred},
  booktitle={2018 Annual American control conference (ACC)},
  pages={1520--1527},
  year={2018},
  organization={IEEE}
}

@article{vogelstein2015fast,
  title={Fast approximate quadratic programming for graph matching},
  author={Vogelstein, Joshua T and Conroy, John M and Lyzinski, Vince and Podrazik, Louis J and Kratzer, Steven G and Harley, Eric T and Fishkind, Donniell E and Vogelstein, R Jacob and Priebe, Carey E},
  journal={PLOS one},
  volume={10},
  number={4},
  pages={e0121002},
  year={2015},
  publisher={Public Library of Science San Francisco, CA USA}
}

@article{markowitz52,
 author = {Harry Markowitz},
 journal = {The Journal of Finance},
 number = {1},
 pages = {77--91},
 publisher = {[American Finance Association, Wiley]},
 title = {Portfolio Selection},
 urldate = {2024-05-18},
 volume = {7},
 year = {1952}
}

@article{rockafellar1987linear,
  title={Linear-quadratic programming and optimal control},
  author={Rockafellar, R Tyrell},
  journal={SIAM Journal on Control and Optimization},
  volume={25},
  number={3},
  pages={781--814},
  year={1987},
  publisher={SIAM}
}

@book{golub2013matrix,
  title={Matrix computations},
  author={Golub, Gene H and Van Loan, Charles F},
  year={2013},
  publisher={JHU press}
}

@techreport{shewchuk1994introduction,
author = {Shewchuk, Jonathan R},
title = {An Introduction to the Conjugate Gradient Method Without the Agonizing Pain},
year = {1994},
institution = {Carnegie Mellon University},
}

@article{gupta2020hybrid,
  title={Hybrid models for learning to branch},
  author={Gupta, Prateek and Gasse, Maxime and Khalil, Elias and Mudigonda, Pawan and Lodi, Andrea and Bengio, Yoshua},
  journal={Advances in neural information processing systems},
  volume={33},
  pages={18087--18097},
  year={2020}
}

@article{gupta2022lookback,
  title={Lookback for Learning to Branch},
  author={Gupta, Prateek and Khalil, Elias B and Chet{\'e}lat, Didier and Gasse, Maxime and Bengio, Yoshua and Lodi, Andrea and Kumar, M Pawan},
  journal={arXiv preprint arXiv:2206.14987},
  year={2022}
}

@article{nair2020solving,
  title={Solving Mixed Integer Programs Using Neural Networks},
  author={Vinod Nair and Sergey Bartunov and Felix Gimeno and Ingrid von Glehn and Pawel Lichocki and Ivan Lobov and Brendan O'Donoghue and Nicolas Sonnerat and Christian Tjandraatmadja and Pengming Wang and Ravichandra Addanki and Tharindi Hapuarachchi and Thomas Keck and James Keeling and Pushmeet Kohli and Ira Ktena and Yujia Li and Oriol Vinyals and Yori Zwols},
  journal={ArXiv},
  year={2020},
  volume={abs/2012.13349}
}

@inproceedings{shen2021learning,
  title={Learning primal heuristics for mixed integer programs},
  author={Shen, Yunzhuang and Sun, Yuan and Eberhard, Andrew and Li, Xiaodong},
  booktitle={2021 International Joint Conference on Neural Networks (IJCNN)},
  pages={1--8},
  year={2021},
  organization={IEEE}
}

@inproceedings{khalil2022mip,
  title={Mip-gnn: A data-driven framework for guiding combinatorial solvers},
  author={Khalil, Elias B and Morris, Christopher and Lodi, Andrea},
  booktitle={Proceedings of the AAAI Conference on Artificial Intelligence},
  volume={36},
  pages={10219--10227},
  year={2022}
}

@inproceedings{liu2022learning,
  title={Learning to search in local branching},
  author={Liu, Defeng and Fischetti, Matteo and Lodi, Andrea},
  booktitle={Proceedings of the AAAI Conference on Artificial Intelligence},
  volume={36},
  pages={3796--3803},
  year={2022}
}

@inproceedings{paulus2022learning,
  title={Learning to Cut by Looking Ahead: Cutting Plane Selection via Imitation Learning},
  author={Paulus, Max B and Zarpellon, Giulia and Krause, Andreas and Charlin, Laurent and Maddison, Chris},
  booktitle={International Conference on Machine Learning},
  pages={17584--17600},
  year={2022},
  organization={PMLR}
}

@article{scavuzzo2022learning,
  title={Learning to branch with Tree MDPs},
  author={Scavuzzo, Lara and Chen, Feng Yang and Ch{\'e}telat, Didier and Gasse, Maxime and Lodi, Andrea and Yorke-Smith, Neil and Aardal, Karen},
  journal={arXiv preprint arXiv:2205.11107},
  year={2022}
}

@inproceedings{fan2023smart,
  title={Smart initial basis selection for linear programs},
  author={Fan, Zhenan and Wang, Xinglu and Yakovenko, Oleksandr and Sivas, Abdullah Ali and Ren, Owen and Zhang, Yong and Zhou, Zirui},
  booktitle={International Conference on Machine Learning},
  pages={9650--9664},
  year={2023},
  organization={PMLR}
}

@inproceedings{liu2024learning,
  title={Learning to Pivot as a Smart Expert},
  author={Liu, Tianhao and Pu, Shanwen and Ge, Dongdong and Ye, Yinyu},
  booktitle={Proceedings of the AAAI Conference on Artificial Intelligence},
  volume={38},
  pages={8073--8081},
  year={2024}
}

@inproceedings{qian2024exploring,
  title={Exploring the power of graph neural networks in solving linear optimization problems},
  author={Qian, Chendi and Ch{\'e}telat, Didier and Morris, Christopher},
  booktitle={International Conference on Artificial Intelligence and Statistics},
  pages={1432--1440},
  year={2024},
  organization={PMLR}
}

@inproceedings{huang2023searching,
  title={Searching large neighborhoods for integer linear programs with contrastive learning},
  author={Huang, Taoan and Ferber, Aaron M and Tian, Yuandong and Dilkina, Bistra and Steiner, Benoit},
  booktitle={International Conference on Machine Learning},
  pages={13869--13890},
  year={2023},
  organization={PMLR}
}

@article{chen2024rethinking,
  title={Rethinking the Capacity of Graph Neural Networks for Branching Strategy},
  author={Chen, Ziang and Liu, Jialin and Chen, Xiaohan and Wang, Xinshang and Yin, Wotao},
  journal={arXiv preprint arXiv:2402.07099},
  year={2024}
}

@article{zhang2023expressive,
  title={The expressive power of graph neural networks: A survey},
  author={Zhang, Bingxu and Fan, Changjun and Liu, Shixuan and Huang, Kuihua and Zhao, Xiang and Huang, Jincai and Liu, Zhong},
  journal={arXiv preprint arXiv:2308.08235},
  year={2023}
}

@article{li2022expressive,
  title={The expressive power of graph neural networks},
  author={Li, Pan and Leskovec, Jure},
  journal={Graph Neural Networks: Foundations, Frontiers, and Applications},
  pages={63--98},
  year={2022},
  publisher={Springer}
}

@article{sato2020survey,
  title={A survey on the expressive power of graph neural networks},
  author={Sato, Ryoma},
  journal={arXiv preprint arXiv:2003.04078},
  year={2020}
}

@inproceedings{abadi2016tensorflow,
  title={{TensorFlow}: a system for {Large-Scale} machine learning},
  author={Mart{\'\i}n Abadi and Paul Barham and Jianmin Chen and Zhifeng Chen and Andy Davis and Jeffrey Dean and Matthieu Devin and Sanjay Ghemawat and Geoffrey Irving and Michael Isard and Manjunath Kudlur and Josh Levenberg and Rajat Monga and Sherry Moore and Derek G. Murray and Benoit Steiner and Paul Tucker and Vijay Vasudevan and Pete Warden and Martin Wicke and Yuan Yu and Xiaoqiang Zheng},
  booktitle={12th USENIX symposium on operating systems design and implementation (OSDI 16)},
  pages={265--283},
  year={2016}
}

@article{kingma2014adam,
  title={Adam: A method for stochastic optimization},
  author={Kingma, Diederik P and Ba, Jimmy},
  journal={arXiv preprint arXiv:1412.6980},
  year={2014}
}

@article{berkholz2017tight,
  title={Tight lower and upper bounds for the complexity of canonical colour refinement},
  author={Berkholz, Christoph and Bonsma, Paul and Grohe, Martin},
  journal={Theory of Computing Systems},
  volume={60},
  pages={581--614},
  year={2017},
  publisher={Springer}
}

@article{shervashidze2011weisfeiler,
  title={Weisfeiler-lehman graph kernels.},
  author={Shervashidze, Nino and Schweitzer, Pascal and Van Leeuwen, Erik Jan and Mehlhorn, Kurt and Borgwardt, Karsten M},
  journal={Journal of Machine Learning Research},
  volume={12},
  number={9},
  year={2011}
}

@article{osqp,
  author  = {Stellato, B. and Banjac, G. and Goulart, P. and Bemporad, A. and Boyd, S.},
  title   = {{OSQP}: an operator splitting solver for quadratic programs},
  journal = {Mathematical Programming Computation},
  volume  = {12},
  number  = {4},
  pages   = {637--672},
  year    = {2020},
  doi     = {10.1007/s12532-020-00179-2},
  url     = {https://doi.org/10.1007/s12532-020-00179-2},
}

@inproceedings{sato2021random,
  title={Random features strengthen graph neural networks},
  author={Sato, Ryoma and Yamada, Makoto and Kashima, Hisashi},
  booktitle={Proceedings of the 2021 SIAM international conference on data mining (SDM)},
  pages={333--341},
  year={2021},
  organization={SIAM}
}

@inproceedings{morris2019weisfeiler,
  title={Weisfeiler and leman go neural: Higher-order graph neural networks},
  author={Morris, Christopher and Ritzert, Martin and Fey, Matthias and Hamilton, William L and Lenssen, Jan Eric and Rattan, Gaurav and Grohe, Martin},
  booktitle={Proceedings of the AAAI conference on artificial intelligence},
  volume={33},
  pages={4602--4609},
  year={2019}
}

@article{velivckovic2017graph,
  title={Graph attention networks},
  author={Veli{\v{c}}kovi{\'c}, Petar and Cucurull, Guillem and Casanova, Arantxa and Romero, Adriana and Lio, Pietro and Bengio, Yoshua},
  journal={arXiv preprint arXiv:1710.10903},
  year={2017}
}

@article{maros1999repository,
  title={A repository of convex quadratic programming problems},
  author={Maros, Istvan and M{\'e}sz{\'a}ros, Csaba},
  journal={Optimization methods and software},
  volume={11},
  number={1-4},
  pages={671--681},
  year={1999},
  publisher={Taylor \& Francis}
}

@inproceedings{schwan2023piqp,
  title={PIQP: A Proximal Interior-Point Quadratic Programming Solver},
  author={Schwan, Roland and Jiang, Yuning and Kuhn, Daniel and Jones, Colin N},
  booktitle={2023 62nd IEEE Conference on Decision and Control (CDC)},
  pages={1088--1093},
  year={2023},
  organization={IEEE}
}

@misc{qpbenchmark2024,
  title = {{qpbenchmark: Benchmark for quadratic programming solvers available in Python}},
  author = {Caron, Stéphane and Zaki, Akram and Otta, Pavel and Arnström, Daniel and Carpentier, Justin and Yang, Fengyu and Leziart, Pierre-Alexandre},
  url = {https://github.com/qpsolvers/qpbenchmark},
  license = {Apache-2.0},
  version = {2.3.0},
  year = {2024}
}

@inproceedings{wang2019learning,
  title={Learning combinatorial embedding networks for deep graph matching},
  author={Wang, Runzhong and Yan, Junchi and Yang, Xiaokang},
  booktitle={Proceedings of the IEEE/CVF international conference on computer vision},
  pages={3056--3065},
  year={2019}
}

@inproceedings{
Yu2020Learning,
title={Learning deep graph matching with channel-independent embedding and Hungarian attention},
author={Tianshu Yu and Runzhong Wang and Junchi Yan and Baoxin Li},
booktitle={International Conference on Learning Representations},
year={2020}
}

@article{lawler1963quadratic,
  title={The quadratic assignment problem},
  author={Lawler, Eugene L},
  journal={Management science},
  volume={9},
  number={4},
  pages={586--599},
  year={1963},
  publisher={INFORMS}
}

@article{loiola2007survey,
  title={A survey for the quadratic assignment problem},
  author={Loiola, Eliane Maria and De Abreu, Nair Maria Maia and Boaventura-Netto, Paulo Oswaldo and Hahn, Peter and Querido, Tania},
  journal={European journal of operational research},
  volume={176},
  number={2},
  pages={657--690},
  year={2007},
  publisher={Elsevier}
}

@article{furini2019qplib,
  title={QPLIB: a library of quadratic programming instances},
  author={Furini, Fabio and Traversi, Emiliano and Belotti, Pietro and Frangioni, Antonio and Gleixner, Ambros and Gould, Nick and Liberti, Leo and Lodi, Andrea and Misener, Ruth and Mittelmann, Hans and Sahinidis, Nikolaos V and Vigerske, Stefan and Wiegele, Angelika},
  journal={Mathematical Programming Computation},
  volume={11},
  pages={237--265},
  year={2019},
  publisher={Springer}
}

@article{wu2024representing,
  title={On Representing Convex Quadratically Constrained Quadratic Programs via Graph Neural Networks},
  author={Wu, Chenyang and Chen, Qian and Wang, Akang and Ding, Tian and Sun, Ruoyu and Yang, Wenguo and Shi, Qingjiang},
  journal={arXiv preprint arXiv:2411.13805},
  year={2024}
}

@article{feng2022powerful,
  title={How powerful are k-hop message passing graph neural networks},
  author={Feng, Jiarui and Chen, Yixin and Li, Fuhai and Sarkar, Anindya and Zhang, Muhan},
  journal={Advances in Neural Information Processing Systems},
  volume={35},
  pages={4776--4790},
  year={2022}
}

@article{yang2024efficient,
  title={An Efficient Unsupervised Framework for Convex Quadratic Programs via Deep Unrolling},
  author={Yang, Linxin and Li, Bingheng and Ding, Tian and Wu, Jianghua and Wang, Akang and Wang, Yuyi and Tang, Jiliang and Sun, Ruoyu and Luo, Xiaodong},
  journal={arXiv preprint arXiv:2412.01051},
  year={2024}
}

@inproceedings{li2024power,
  title={On the Power of Small-size Graph Neural Networks for Linear Programming},
  author={Li, Qian and Ding, Tian and Yang, Linxin and Ouyang, Minghui and Shi, Qingjiang and Sun, Ruoyu},
  booktitle={The Thirty-eighth Annual Conference on Neural Information Processing Systems},
  year={2024}
}

@article{chen2025expressive,
  title={On the Expressive Power of Subgraph Graph Neural Networks for Graphs with Bounded Cycles},
  author={Chen, Ziang and Zhang, Qiao and Wang, Runzhong},
  journal={arXiv preprint arXiv:2502.03703},
  year={2025}
}

@article{
wang2024digmilp,
title={{DIG}-{MILP}: a Deep Instance Generator for Mixed-Integer Linear Programming with Feasibility Guarantee},
author={Haoyu Peter Wang and Jialin Liu and Xiaohan Chen and Xinshang Wang and Pan Li and Wotao Yin},
journal={Transactions on Machine Learning Research},
year={2024},
}

\appendix

\section{Proofs for Section~\ref{sec:theory-lcqp}}
\label{sec:pf-lcqp}

In this appendix, we present the proofs for theorems in Section~\ref{sec:theory-lcqp}. The proofs will based on Weisfeiler-Lehman (WL) test and its separation power to distinguish LCQP problems with different properties (different feasibility, different optimal objective, or different optimal solution with the smallest $\ell_2$ norm). 

The Weisfeiler-Lehman (WL) test \cite{weisfeiler1968reduction} is a classical algorithm for the graph isomorphism problem. In particular, it implements color refinement on vertices by applying a hash function on the previous vertex color and aggregation of colors from neighbors, and identifies two graphs as isomorphic if their final color multisets are the same. It is worth noting that WL test may incorrectly identify two non-isomorphic graphs as isomorphic. We slightly modify the standard WL test, see Algorithm~\ref{alg:WL-lcqp}.

\begin{algorithm}[htb!]
	\caption{The linear-form WL test}\label{alg:WL-lcqp}
	\begin{algorithmic}[1]
		\Require A LCQP-graph $G = (V,W,A,Q, H_V,H_W)$ and iteration limit $L>0$.
		\State Initialize with $C_i^{0,V} = \text{hash}(v_i)$ and $C_j^{0,W} = \text{hash}(w_j)$.
		\For{$l=1,2,\cdots,L$}
		\State Refine the colors
            \begin{align*}
                C_i^{l,V} & = \text{hash} \left(C_i^{l-1,V}, \sum_{j=1}^n A_{ij}\text{hash}\left(C_j^{l-1, W}\right)\right), \\
                C_j^{l,W} & = \text{hash} \left(C_j^{l-1,W}, \sum_{i=1}^m A_{ij} \text{hash}\left(C_i^{l-1, V} \right), \sum_{j'=1}^n Q_{jj'}\text{hash}\left(C_{j'}^{l-1, W}\right)\right).
            \end{align*}
		\EndFor
		\State \textbf{return} The multisets containing all colors $\{\!\!\{ C_i^{L,V} \}\!\!\}_{i=0}^m, \{\!\!\{ C_j^{L,W} \}\!\!\}_{j=0}^n$.
	\end{algorithmic}
\end{algorithm}

Please note that, Algorithm \ref{alg:WL-lcqp} is a special case of Algorithm \ref{alg:WL-mi-lcqp} and, as such, has weaker separation power. However, it is sufficient for distinguishing LCQPs with different key properties. Notably, proving universal approximation under a weaker WL test leads to an even \textit{stronger} conclusion: if a weaker WL test (or equivalently, a less powerful GNN) can separate LCQPs, it automatically implies that these LCQPs can also be separated under a stronger WL test. 

We define two equivalence relations as follows. Intuitively, LCQP-graphs in the same equivalence class will be identified as isomorphic by WL test, though they may be actually non-isomorphic.

\begin{definition}\label{def:WL_equiv_class}
    For two LCQP-graphs $G_{\textup{LCQP}},\hat{G}_{\textup{LCQP}}\in\calG_{\textup{LCQP}}^{m,n}$, let $\{\!\!\{ C_i^{L,V} \}\!\!\}_{i=0}^m, \{\!\!\{ C_j^{L,W} \}\!\!\}_{j=0}^n$ and $\{\!\!\{ \hat{C}_i^{L,V} \}\!\!\}_{i=0}^m, \{\!\!\{ \hat{C}_j^{L,W} \}\!\!\}_{j=0}^n$ be color multisets output by Algorithm~\ref{alg:WL-lcqp} on $G_{\textup{LCQP}}$ and $\hat{G}_{\textup{LCQP}}$.

    \begin{enumerate}
        \item We say $G_{\textup{LCQP}}\sim\hat{G}_{\textup{LCQP}}$ if $\{\!\!\{ C_i^{L,V} \}\!\!\}_{i=0}^m = \{\!\!\{ \hat{C}_i^{L,V} \}\!\!\}_{i=0}^m$ and $\{\!\!\{ C_j^{L,W} \}\!\!\}_{j=0}^n = \{\!\!\{ \hat{C}_j^{L,W} \}\!\!\}_{j=0}^n$ hold for all $L\in\mathbb{N}$ and all hash functions.
        \item We say $G_{\textup{LCQP}}\ensuremath{\stackrel{W}{\sim}}\hat{G}_{\textup{LCQP}}$ if $\{\!\!\{ C_i^{L,V} \}\!\!\}_{i=0}^m = \{\!\!\{ \hat{C}_i^{L,V} \}\!\!\}_{i=0}^m$ and $C_j^{L,W} = \hat{C}_j^{L,W},\ \forall~j\in\{1,2,\dots,n\}$, for all $L\in\mathbb{N}$ and all hash functions.
    \end{enumerate}
\end{definition}

Our main finding leading to the results in Section~\ref{sec:theory-lcqp} is that, for LCQP-graphs in the same equivalence class, even if they are non-isomorphic, their optimal objective values and optimal solutions must be the same (up to a permutation perhaps).

\begin{theorem}\label{thm:lcqp-sameWL2obj}
    For any $G_{\textup{LCQP}},\hat{G}_{\textup{LCQP}}\in \calG_{\textup{LCQP}}^{m,n}$ with $Q,\hat{Q}\succeq 0$, if $G_{\textup{LCQP}}\sim\hat{G}_{\textup{LCQP}}$, then $\Phi_{\textup{obj}}(G_{\textup{LCQP}}) = \Phi_{\textup{obj}}(\hat{G}_{\textup{LCQP}})$.
\end{theorem}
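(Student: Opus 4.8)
The plan is to reduce each LCQP to a smaller ``folded'' program built on the stable partition produced by the WL test, to prove that this folded program has the same optimal value as the original, and finally to argue that $G_{\textup{LCQP}}\sim \hat G_{\textup{LCQP}}$ forces the two folded programs to coincide. Concretely, let $\calI=\{I_1,\dots,I_s\}$ and $\calJ=\{J_1,\dots,J_t\}$ be the stable partition of $V\cup W$ generated by Algorithm~\ref{alg:WL-lcqp} on $G_{\textup{LCQP}}$, let $\mathcal{S}=\{x\in\bR^n: x_j=x_{j'}\text{ whenever } j,j' \text{ lie in the same } J_q\}$, and let $P$ be the orthogonal projection onto $\mathcal{S}$ (class-wise averaging). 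The first claim is that $\Phi_{\textup{obj}}(G_{\textup{LCQP}})=\inf\{\tfrac12 x^\top Qx + c^\top x : x \text{ feasible},\ x\in\mathcal{S}\}$; that is, restricting to class-constant vectors changes neither the infimum nor its finiteness, in any of the cases $\Phi_{\textup{obj}}\in\bR\cup\{\pm\infty\}$.

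To prove this claim I would show that for any feasible $x$ the projection $\bar x=Px$ is again feasible with no larger objective. Since the stable partition is \emph{equitable} for the weighted graph (the fixed-point property of color refinement, see \cite{chen2022representing-lp}*{Appendix A}), the sums $\sum_{j\in J_q}A_{ij}$ and $\sum_{i\in I_p}A_{ij}$ are constant on the relevant classes, and the features $(c_j,\ell_j,u_j)$, $(b_i,\circ_i)$ are constant on each class. A direct computation gives $(A\bar x)_i=\tfrac{1}{|I_p|}\sum_{i'\in I_p}(Ax)_{i'}$ for $i\in I_p$, so each folded constraint value is an average of the original constraint values over its class; feasibility of $x$ with the constancy of $b_i,\circ_i$ on $I_p$ yields $(A\bar x)_i\circ_i b_i$, while the box constraints survive because $\bar x_j$ is a convex average of coordinates sharing the same bounds. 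The linear part of the objective is preserved exactly, $c^\top\bar x=c^\top x$, since $c$ is class-constant. For the quadratic part, equitability with respect to $Q$ forces $PQP=QP$, and using the symmetry of $Q$ together with $|J_q|\sigma_{qq'}=|J_{q'}|\sigma_{q'q}$ for the class-to-class weight sums $\sigma_{qq'}$ one gets that $P$ and $Q$ commute, so $Q\mathcal{S}\subseteq\mathcal{S}$. Writing $x=\bar x+(x-\bar x)$ with $x-\bar x\in\mathcal{S}^\perp$, the cross term vanishes and $x^\top Qx=\bar x^\top Q\bar x+(x-\bar x)^\top Q(x-\bar x)\ge \bar x^\top Q\bar x$ because $Q\succeq 0$. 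Hence $f(\bar x)\le f(x)$, proving the claim.

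On $\mathcal{S}$ the program is parametrized by one value $y_q$ per class, and substituting $x_j=y_q$ turns it into a genuine LCQP in $\bR^t$ whose data are exactly the WL-quotient quantities: the class sizes $|I_p|,|J_q|$, the shared features $c_{(q)},\ell_{(q)},u_{(q)},b_{(p)},\circ_{(p)}$, the aggregated coefficients $\sum_{j\in J_q}A_{ij}$, and the class-to-class sums $\sum_{j\in J_q,\,j'\in J_{q'}}Q_{jj'}$. The final step is to show that $G_{\textup{LCQP}}\sim\hat G_{\textup{LCQP}}$ makes all of this quotient data agree after a relabeling of classes, so the two folded LCQPs are literally the same optimization problem and hence share the same optimal value; combined with the first claim applied to both graphs, this yields $\Phi_{\textup{obj}}(G_{\textup{LCQP}})=\Phi_{\textup{obj}}(\hat G_{\textup{LCQP}})$.

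I expect the main obstacle to be this last translation: extracting concrete, matching quotient data from the abstract equivalence $\sim$, which is defined through equality of color multisets for \emph{all} iteration counts and \emph{all} hash functions. The argument should follow the color-refinement machinery of \cite{chen2022representing-lp}*{Appendix A} --- equality of the multisets for all hash functions encodes equality of the entire iterated neighbor-aggregation structure, which pins down a color-preserving bijection between the stable classes respecting sizes, features, and weighted adjacencies --- but adapting it to also carry the edge weights of $Q$ (not merely those of $A$) and verifying that it transports precisely the quantities entering the folded objective is the delicate bookkeeping. A secondary point to verify is that the reduction holds uniformly across the feasible-and-bounded, infeasible ($+\infty$), and unbounded ($-\infty$) cases, which it does because folding preserves both feasibility (by lifting any $y\in\mathcal{S}$ back to an $x$) and unboundedness.
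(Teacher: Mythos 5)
Your proposal is correct and follows essentially the same route as the paper: class-average a feasible point over the stable WL partition, check that the averaged point remains feasible with no larger objective, and observe that the equivalence $\sim$ forces the quotient data of the two instances to agree, so the argument transports feasible points from one problem to the other. The one place where you genuinely diverge is the proof of the central quadratic inequality $\tfrac12 x^\top Q x \ge \tfrac12 \bar x^\top Q \bar x$. The paper's Lemma~\ref{lem:xMx_hatxMhatx} obtains it by exhibiting $\bar x$ as a minimizer of the convex subproblem $\min_z \tfrac12 z^\top Q z$ subject to fixed class sums, via the first-order optimality conditions with an explicit multiplier. You instead argue that equitability gives $QP=PQP$, hence by symmetry $PQ=QP$, so $Q$ preserves $\mathcal{S}$, the cross term in $x=\bar x+(x-\bar x)$ vanishes, and positive semidefiniteness on $\mathcal{S}^\perp$ finishes the job. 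Both arguments are sound and use exactly the same hypothesis (constancy of $\sum_{j'\in J_{q'}}Q_{jj'}$ over $j\in J_q$); yours is arguably more transparent, as it isolates the structural fact that $P$ and $Q$ commute and avoids Lagrangian bookkeeping, while the paper's version generalizes more directly to settings where one wants to compare against a different matrix $\hat Q$ with the same block sums (which it then does in the displayed chain of equalities). The "delicate bookkeeping" you flag --- extracting class-constant features and matching aggregated $A$- and $Q$-sums from the multiset equalities for all hash functions --- is precisely how the paper proceeds, following the color-refinement analysis of \cite{chen2022representing-lp}, and your treatment of the $\pm\infty$ cases likewise matches the paper's.
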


\begin{theorem}\label{thm:lcqp-sameWL2sol}
    For any $G_{\textup{LCQP}},\hat{G}_{\textup{LCQP}}\in \calG_{\textup{LCQP}}^{m,n}$ with $Q,\hat{Q}\succeq 0$ that are feasible and bounded, if $G_{\textup{LCQP}}\sim\hat{G}_{\textup{LCQP}}$, then there exists some permutation $\sigma_W\in S_n$ such that $\Phi_{\textup{sol}}(G_{\textup{LCQP}}) =\sigma_W( \Phi_{\textup{sol}}(\hat{G}_{\textup{LCQP}}))$. Furthermore, if $G_{\textup{LCQP}}\ensuremath{\stackrel{W}{\sim}}\hat{G}_{\textup{LCQP}}$, then $\Phi_{\textup{sol}}(G_{\textup{LCQP}}) = \Phi_{\textup{sol}}(\hat{G}_{\textup{LCQP}})$.
\end{theorem}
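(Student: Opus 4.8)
The plan is to deduce both statements from a single-graph structural claim: for any feasible and bounded $G_{\textup{LCQP}}\in\calG_{\textup{LCQP}}^{m,n}$, the minimal-norm optimal solution $\Phi_{\textup{sol}}(G_{\textup{LCQP}})$ is \emph{constant on each variable color class} of the stable WL partition, i.e. $\Phi_{\textup{sol}}(G_{\textup{LCQP}})_j$ depends only on the final color $C_j^{L,W}$. Granting this, I would pass to the disjoint union $\tilde G = G_{\textup{LCQP}}\sqcup \hat G_{\textup{LCQP}}$, whose constraint matrix and quadratic matrix are block-diagonal; this is again an LCQP-graph with positive semidefinite quadratic term, and since objective and constraints decouple across the two blocks, its optimal set is the product of the two optimal sets, so $\Phi_{\textup{sol}}(\tilde G)=(\Phi_{\textup{sol}}(G_{\textup{LCQP}}),\Phi_{\textup{sol}}(\hat G_{\textup{LCQP}}))$ because the squared $\ell_2$-norm splits additively. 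As there are no edges between the two components, the WL colors computed on $\tilde G$ restrict to the colors computed on each graph in isolation; hence $G_{\textup{LCQP}}\stackrel{W}{\sim}\hat G_{\textup{LCQP}}$ forces node $j$ of the first block and node $j$ of the second to share a color, and the single-graph claim applied to $\tilde G$ gives $\Phi_{\textup{sol}}(G_{\textup{LCQP}})_j=\Phi_{\textup{sol}}(\hat G_{\textup{LCQP}})_j$. For the weaker relation $\sim$, equality of the variable color multisets yields a single permutation $\sigma_W\in S_n$ matching colors at the stable iteration (hence at every iteration), which plays the same role and produces $\Phi_{\textup{sol}}(G_{\textup{LCQP}})=\sigma_W(\Phi_{\textup{sol}}(\hat G_{\textup{LCQP}}))$.

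To prove the single-graph claim I would first record the \emph{equitable-partition} consequences of stability. Writing $\calI=\{I_1,\dots,I_s\}$ and $\calJ=\{J_1,\dots,J_t\}$ for the stable partition and testing the refinement rule against one-hot hash functions, one gets that within each block the class-sums are constant: $\sum_{j\in J_q}A_{ij}=\alpha_{pq}$ for all $i\in I_p$, $\sum_{i\in I_p}A_{ij}=\beta_{pq}$ for all $j\in J_q$, and $\sum_{j'\in J_{q'}}Q_{jj'}=\gamma_{qq'}$ for all $j\in J_q$; moreover the features $(c_j,l_j,u_j)$ and $(b_i,\circ_i)$ are constant on classes. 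Double counting $\sum_{i\in I_p,\, j\in J_q}A_{ij}$ gives the identity $|I_p|\,\alpha_{pq}=|J_q|\,\beta_{pq}$.

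Next I introduce the orthogonal projection $P$ onto the subspace $U$ of vectors constant on every $J_q$, and set $\bar x=Px^{*}$ for $x^{*}=\Phi_{\textup{sol}}(G_{\textup{LCQP}})$, writing $\bar x^{(q)}$ for its common value on $J_q$. The goal is to show $\bar x$ is feasible and optimal with $\|\bar x\|\le\|x^{*}\|$. Feasibility of the bound constraints is immediate, since $l_j,u_j$ are class-constant and an average of points in $[l^{(q)},u^{(q)}]$ stays there. For a linear constraint in class $I_p$, the first equitable identity shows $(A\bar x)_i=\sum_q\alpha_{pq}\bar x^{(q)}$ is constant over $i\in I_p$, and the second identity together with $|I_p|\alpha_{pq}=|J_q|\beta_{pq}$ shows this common value equals the average $\frac{1}{|I_p|}\sum_{i\in I_p}(Ax^{*})_i$ of the original left-hand sides; averaging preserves each of $\le,=,\ge$, so $\bar x$ remains feasible. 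For the objective, the linear term is unchanged because $c_j$ is class-constant and $P$ preserves class-sums, giving $c^\top\bar x=c^\top x^{*}$. The quadratic term is where the structure of $Q$ enters: the equitable identity for $Q$ says $Q\mathbf{1}_{J_q}$ is class-constant, i.e. $QU\subseteq U$; since $Q$ is symmetric this gives $QU^\perp\subseteq U^\perp$, hence $P$ commutes with $Q$. Decomposing $x^{*}=\bar x+(I-P)x^{*}$ into orthogonal $U$- and $U^\perp$-parts and using $Q\succeq 0$ then yields $\bar x^\top Q\bar x\le (x^{*})^\top Q x^{*}$. Thus $\bar x$ is feasible with objective value no larger than the optimum, so it is optimal; combined with $\|\bar x\|\le\|x^{*}\|$ and uniqueness of the minimal-norm optimal solution one gets $\|\bar x\|=\|x^{*}\|$, which forces $x^{*}\in U$, i.e. $x^{*}$ is class-constant, proving the claim.

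The main obstacle is the quadratic term: everything linear (feasibility and $c^\top x$) follows from the row/column balance of $A$, but controlling $\bar x^\top Q\bar x$ requires exactly the invariance $QU\subseteq U$, which is why the stable partition must refine the $Q$-neighborhood structure and why $Q\succeq 0$ is essential for the inequality $\bar x^\top Q\bar x\le (x^{*})^\top Q x^{*}$. A secondary point to handle carefully is the passage to the disjoint union—verifying that WL coloring is computed componentwise and that the minimal-norm solution of the union is the concatenation of the two minimal-norm solutions—together with selecting, in the multiset case, a single permutation $\sigma_W$ that simultaneously matches colors at all iterations.
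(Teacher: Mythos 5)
Your argument is correct, and its core engine is the same as the paper's: average the minimal-norm optimal solution over the variable color classes of the stable WL partition, check that the equitable-partition sums make the average feasible, that the linear part of the objective is preserved and the quadratic part does not increase, and then invoke uniqueness of the minimal-$\ell_2$-norm optimizer to force class-constancy. Two of your technical choices genuinely diverge from the paper, though. First, the paper proves the key inequality $\frac{1}{2}x^\top Q x \ge \frac{1}{2}\hat x^\top Q\hat x$ (Lemma~\ref{lem:xMx_hatxMhatx}) by writing down a constrained convex subproblem and verifying the KKT conditions at the class-average; you instead observe that the stable-partition condition $\sum_{j'\in J_{q'}}Q_{jj'}=\gamma_{qq'}$ means $Q$ preserves the class-constant subspace $U$, hence (by symmetry) commutes with the orthogonal projection $P$ onto $U$, and the inequality falls out of the orthogonal decomposition $x^*=Px^*+(I-P)x^*$ together with $Q\succeq 0$. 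This is cleaner, makes the role of positive semidefiniteness transparent, and yields $\|Px^*\|\le\|x^*\|$ with equality iff $x^*\in U$ for free. Second, the paper (following Lemmas B.2--B.4 and Corollary B.7 of the cited LP work) compares the two graphs directly, transporting a feasible point of one problem to the class-average viewed as a feasible point of the other via the cross-graph identities $\sum_{j\in J_q}A_{ij}=\sum_{j\in J_q}\hat A_{ij}$, etc.; you instead prove a single-graph class-constancy statement and transfer it through the disjoint union $G\sqcup\hat G$. That route works --- the union is again an LCQP-graph (of size $2m,2n$, which is fine since your claim is size-uniform), WL colors restrict componentwise, and the minimal-norm optimizer of the decoupled problem is the concatenation of the two minimal-norm optimizers --- but it does require you to spell out those three facts, which your sketch correctly flags as the points needing care. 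Neither difference affects correctness; your version is arguably more modular, while the paper's version reuses the LP machinery verbatim at the cost of restating the two-graph identities.
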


We need the following lemma to prove Theorem~\ref{thm:lcqp-sameWL2obj} and Theorem~\ref{thm:lcqp-sameWL2sol}.

\begin{lemma}\label{lem:xMx_hatxMhatx}
    Suppose that $M\in\bR^{n\times n}$ is a symmetric and positive semidefinite matrix and that $\mathcal{J} = \{J_1,J_2,\dots,J_t\}$ is a partition of $\{1,2,\dots,n\}$ satisfying that for any $q,q'\in\{1,2,\dots,t\}$, $\sum_{j'\in J_{q'}} M_{jj'}$ is a constant over $j\in J_q$. For any $x\in \bR^n$, it holds that
    \begin{equation}\label{eq:xMx_hatxMhatx}
        \frac{1}{2}x^\top M x \geq \frac{1}{2} \hat{x}^\top M \hat{x},
    \end{equation}
    where $\hat{x}\in\bR^n$ is defined via $\hat{x}_j = y_q = \frac{1}{|J_q|}\sum_{j'\in J_q}x_{j'}$ for $j\in J_q$.
\end{lemma}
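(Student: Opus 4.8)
The plan is to recognize the map $x\mapsto\hat{x}$ as an orthogonal projection and to reduce the inequality to the positive semidefiniteness of $M$ on a complementary subspace. Let $P\in\bR^{n\times n}$ be the block-averaging operator defined by $P_{jk}=1/|J_q|$ when $j,k$ lie in the same block $J_q$ and $P_{jk}=0$ otherwise, so that $\hat{x}=Px$. One checks immediately that $P^\top=P$ and $P^2=P$; hence $P$ is the orthogonal projection onto the subspace $U$ of vectors that are constant on each block $J_q$, and $I-P$ is the orthogonal projection onto $U^\perp$. In this notation the claimed inequality is exactly $x^\top M x\geq x^\top PMP\,x$ for all $x$.

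The crux of the argument, and the step I expect to carry the real content, is translating the hypothesis on $M$ into a commutation relation. For $k\in J_{q'}$ one computes $(MP)_{jk}=\frac{1}{|J_{q'}|}\sum_{j'\in J_{q'}}M_{jj'}$, which by hypothesis is constant as $j$ ranges over any fixed block $J_q$. Averaging this quantity over $j\in J_q$ therefore leaves it unchanged, which says precisely that $PMP=MP$. Taking transposes and using that $M$ and $P$ are symmetric yields $PMP=PM$ as well; in particular $PM=PMP=MP$, so $M$ commutes with $P$. This is the only place the structural assumption on $M$ enters, and getting the index bookkeeping right here is the main obstacle.

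Granting the commutation, I would finish by an orthogonal decomposition. Write $x=\hat{x}+(x-\hat{x})$ with $\hat{x}=Px\in U$ and $x-\hat{x}=(I-P)x\in U^\perp$. Expanding the quadratic form gives
\[
x^\top M x = \hat{x}^\top M\hat{x} + 2\,\hat{x}^\top M(I-P)x + \big((I-P)x\big)^\top M\,(I-P)x .
\]
The cross term equals $x^\top(PM-PMP)x$, which vanishes by the relation $PM=PMP$ established above. Hence
\[
x^\top M x - \hat{x}^\top M\hat{x} = \big((I-P)x\big)^\top M\,(I-P)x \geq 0,
\]
the last inequality being immediate from $M\succeq 0$. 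Dividing by $2$ yields \eqref{eq:xMx_hatxMhatx}. I do not anticipate any difficulty in the decomposition step; essentially everything hinges on the commutation identity of the second paragraph.
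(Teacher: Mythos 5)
Your proof is correct, and it takes a genuinely different route from the one in the paper. The paper's argument is variational: it fixes $x$, sets up the convex program $\min_z \frac{1}{2}z^\top M z$ subject to $\sum_{j\in J_q} z_j = \sum_{j\in J_q} x_j$ for each $q$, and uses the hypothesis on $M$ to verify that $\hat{x}$ satisfies the first-order (Lagrangian stationarity) conditions, hence is a global minimizer by convexity; the inequality follows because $x$ is feasible for the same program. You instead identify $\hat{x}=Px$ with $P$ the block-averaging orthogonal projection, translate the hypothesis into the commutation relation $PM=MP=PMP$ (your index computation for $(MP)_{jk}$ and the averaging step are both correct), and then expand the quadratic form along the orthogonal decomposition $x=Px+(I-P)x$, where the cross term dies by commutation and the remainder is nonnegative by $M\succeq 0$. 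Your route is more elementary in that it avoids invoking KKT/Lagrangian optimality, and it isolates a cleaner structural fact --- $M$ commutes with $P$, i.e., $M$ preserves the subspace of block-constant vectors and its orthogonal complement --- which makes the nonnegativity of the gap $x^\top Mx-\hat{x}^\top M\hat{x}=\bigl((I-P)x\bigr)^\top M (I-P)x$ completely transparent. The paper's route buys the slightly more informative statement that $\hat{x}$ minimizes the quadratic form over the entire affine set of vectors sharing the block sums of $x$, though this also follows from your argument since $Pz=Px$ for any such $z$. Both proofs use the hypothesis on $M$ in exactly one place and in essentially the same way (constancy of $\sum_{j'\in J_{q'}}M_{jj'}$ over $j\in J_q$), so the substitution is harmless.
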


\begin{proof}
    Fix $x\in\bR^n$ and consider the problem
    \begin{equation}\label{eq:quadratic_subproblem}
        \min_{z\in \bR^n}~~\frac{1}{2} z^\top M z,\quad\textup{s.t.}~~ \sum_{j\in J_q} z_j = \sum_{j\in J_q} x_j,\ q=1,2,\dots,t,
    \end{equation}
    which is a convex program. The Lagrangian is given by
    \begin{equation*}
        \mathcal{L}(z,\lambda) = \frac{1}{2} z^\top M z - \sum_{q=1}^t \lambda_q \left(\sum_{j\in J_q} z_j - \sum_{j\in J_q} x_j\right).
    \end{equation*}
    It can be computed that
    \begin{equation*}
        \frac{\partial}{\partial z_j}\mathcal{L}(z,\lambda) = \sum_{j'=1}^n M_{jj'}z_{j'} - \lambda_q ,\quad j\in J_q,
    \end{equation*}
    and
    \begin{equation*}
        \frac{\partial}{\partial\lambda_q} \mathcal{L}(z,\lambda) = \sum_{j\in J_q} x_j - \sum_{j\in J_q} z_j,
    \end{equation*}
    It is clear that
    \begin{equation*}
        \frac{\partial}{\partial\lambda_q} \mathcal{L}(\hat{x},\lambda)=\sum_{j\in J_q} x_j - \sum_{j\in J_q} \hat{x}_j = 0,
    \end{equation*}
    by the definition of $\hat{x}$. Furthermore, consider any fixed $q\in\{1,2,\dots,t\}$ and we have for any $j\in J_q$ that
    \begin{equation*}
        \frac{\partial}{\partial z_j}\mathcal{L}(\hat{x},\lambda) = \sum_{q'=1}^t y_{q'}\sum_{j'\in J_{q'}} M_{jj'} - \lambda_q = 0, 
    \end{equation*}
    if $\lambda_q = \sum_{q'=1}^t y_{q'}\sum_{j'\in J_{q'}} M_{jj'}$ that is independent in $j\in q$ since $\sum_{j'\in J_{q'}} M_{jj'}$ is constant over $j\in J_q$ for any $q'\in\{1,2,\dots,t\}$. Since the problem \eqref{eq:quadratic_subproblem} is convex and the first-order optimality condition is satisfied at $\hat{x}$, we can conclude that $\hat{x}$ is a minimizer of \eqref{eq:quadratic_subproblem}, which implies \eqref{eq:xMx_hatxMhatx}.
\end{proof}

\begin{proof}[Proof of Theorem~\ref{thm:lcqp-sameWL2obj}]
    Let $G_{\textup{LCQP}}$ and $\hat{G}_{\textup{LCQP}}$ be the LCQP-graphs associated to \eqref{eq:LCQP} and
    \begin{equation}\label{eq:LCQP_hat}
        \min_{x\in\bR^n} ~~ \frac{1}{2}x^\top \hat{Q} x + \hat{c}^\top x,\quad \textup{s.t.} ~~ \hat{A}x\ \hat{\circ}\ \hat{b},\ \hat{l}\leq x\leq \hat{u},
    \end{equation}
    Suppose that there are no collisions of hash functions or their linear combinations when applying the WL test to $G_{\textup{LCQP}}$ and $\hat{G}_{\textup{LCQP}}$ and there are no strict color refinements in the $L$-th iteration.
    Since $G_{\textup{LCQP}}\sim\hat{G}_{\textup{LCQP}}$, after performing some permutation, there exist $\mathcal{I} = \{I_1,I_2,\dots,I_s\}$ and $\mathcal{J} = \{J_1,J_2,\dots,J_t\}$ that are partitions of $\{1,2,\dots,m\}$ and $\{1,2,\dots,n\}$, respectively, such that the followings hold:
    \begin{itemize}
        \item $C_i^{L,V} = C_{i'}^{L,V}$ if and only if $i,i'\in I_p$ for some $p\in\{1,2,\dots,s\}$.
        \item $C_i^{L,V} = \hat{C}_{i'}^{L,V}$ if and only if $i,i'\in I_p$ for some $p\in\{1,2,\dots,s\}$.
        \item $\hat{C}_i^{L,V} = \hat{C}_{i'}^{L,V}$ if and only if $i,i'\in I_p$ for some $p\in\{1,2,\dots,s\}$.
        \item $C_j^{L,W} = C_{j'}^{L,W}$ if and only if $j,j'\in J_q$ for some $q\in\{1,2,\dots,t\}$.
        \item $C_j^{L,W} = \hat{C}_{j'}^{L,W}$ if and only if $j,j'\in J_q$ for some $q\in\{1,2,\dots,t\}$.
        \item $\hat{C}_j^{L,W} = \hat{C}_{j'}^{L,W}$ if and only if $j,j'\in J_q$ for some $q\in\{1,2,\dots,t\}$.
    \end{itemize}
    Since there are no collisions, we have from the vertex color initialization that
    \begin{itemize}
        \item $v_i=(b_i,\circ_i)=\hat{v}_i=(\hat{b}_i,\hat{\circ}_i)$ and is constant over $i\in I_p$ for any $p\in\{1,2,\dots,s\}$.
	    \item $w_j = (c_j,l_j,u_j) = \hat{w}_j = (\hat{c}_j,\hat{l}_j,\hat{u}_j)$ and is constant over $j\in J_q$ for any $q \in \{1,2,\dots,t\}$.
    \end{itemize}
    For any $p\in\{1,2,\dots,s\}$ and any $i,i'\in I_p$, one has
    \begin{align*}
        C_i^{L,V} = C_{i'}^{L,V} & \implies \sum_{j\in W} A_{ij}\text{hash}\left(C_j^{L-1, W}\right) = \sum_{j\in W} A_{i'j}\text{hash}\left(C_j^{L-1, W}\right) \\
        & \implies \sum_{j\in W} A_{ij}\text{hash}\left(C_j^{L, W}\right) = \sum_{j\in W} A_{i'j}\text{hash}\left(C_j^{L, W}\right) \\
        & \implies \sum_{j\in J_q} A_{ij} = \sum_{j\in J_q} A_{i'j},\quad \forall~q\in\{1,2,\dots,t\}.
    \end{align*}
    One can obtain similar conclusions from $C_i^{L,V} = \hat{C}_{i'}^{L,V}$ and $\hat{C}_i^{L,V} = \hat{C}_{i'}^{L,V}$, and hence conclude that
    \begin{itemize}
        \item For any $p\in\{1,2,\dots,s\}$ and $q \in \{1,2,\dots,t\}$, $\sum_{j\in J_q} A_{ij}= \sum_{j\in J_q} \hat{A}_{ij}$ and is constant over $i\in I_p$.
    \end{itemize}
    Similarly, the followings also hold:
    \begin{itemize}
	\item For any $p\in\{1,2,\dots,s\}$ and $q \in \{1,2,\dots,t\}$, $\sum_{i\in I_p} A_{ij} = \sum_{i\in I_p} \hat{A}_{ij}$ and is constant over $j\in J_q$.
        \item For any $q,q'\in\{1,2,\dots,t\}$, $\sum_{j'\in J_{q'}} Q_{jj'} = \sum_{j'\in J_{q'}} \hat{Q}_{jj'}$ and is constant over $j\in J_q$. 
    \end{itemize}

    If $G_{\textup{LCQP}}$ or \eqref{eq:LCQP} is infeasible, then $\Phi_{\textup{obj}}(G_{\textup{LCQP}}) = +\infty$ and clearly $\Phi_{\textup{obj}}(G_{\textup{LCQP}})\geq \Phi_{\textup{obj}}(\hat{G}_{\textup{LCQP}})$. If \eqref{eq:LCQP} is feasible, let $x\in\bR^n$ be any feasible solution to \eqref{eq:LCQP} and define $\hat{x}\in\bR^n$ via $\hat{x}_j = y_q = \frac{1}{|J_q|}\sum_{j'\in J_q} x_{j'}$ for $j\in J_q$. By the proofs of Lemma~B.2 and Lemma~B.3 in \cite{chen2022representing-lp}, we know that $\hat{x}$ is a feasible solution to \eqref{eq:LCQP_hat} and $c^\top x = \hat{c}^\top \hat{x}$. In addition, we have
    \begin{align*}
        \frac{1}{2} x^\top Q x & \overset{\eqref{eq:xMx_hatxMhatx}}{\geq} \frac{1}{2} \hat{x}^\top Q \hat{x} = \frac{1}{2}\sum_{q,q'=1}^t \sum_{j\in J_q} \sum_{j'\in J_{q'}} \hat{x}_j Q_{jj'} \hat{x}_{j'} = \frac{1}{2}\sum_{q,q'=1}^t y_q y_{q'} \sum_{j'\in J_{q'}} Q_{jj'} \\
        & = \frac{1}{2}\sum_{q,q'=1}^t y_q y_{q'} \sum_{j'\in J_{q'}} \hat{Q}_{jj'} = \frac{1}{2}\sum_{q,q'=1}^t \sum_{j\in J_q} \sum_{j'\in J_{q'}} \hat{x}_j \hat{Q}_{jj'} \hat{x}_{j'} = \frac{1}{2}\hat{x}^\top\hat{Q}\hat{x},
    \end{align*}
    which then implies that
    \begin{equation*}
        \frac{1}{2} x^\top Q x + c^\top x\geq \frac{1}{2}\hat{x}^\top\hat{Q}\hat{x} + \hat{c}^\top \hat{x},
    \end{equation*}
    and hence that $\Phi_{\textup{obj}}(G_{\textup{LCQP}})\geq \Phi_{\textup{obj}}(\hat{G}_{\textup{LCQP}})$. Till now we have proved $\Phi_{\textup{obj}}(G_{\textup{LCQP}})\geq \Phi_{\textup{obj}}(\hat{G}_{\textup{LCQP}})$ regardless of the feasibility of $G_{\textup{LCQP}}$. The reverse direction $\Phi_{\textup{obj}}(G_{\textup{LCQP}})\leq \Phi_{\textup{obj}}(\hat{G}_{\textup{LCQP}})$ is also true and we can conclude that $\Phi_{\textup{obj}}(G_{\textup{LCQP}})= \Phi_{\textup{obj}}(\hat{G}_{\textup{LCQP}})$.
\end{proof}

\begin{proof}[Proof of Theorem~\ref{thm:lcqp-sameWL2sol}]
    Under the same setting as in the proof of Theorem~\ref{thm:lcqp-sameWL2obj}, the results can be proved. We present the proof here.
    
    Let $x\in\bR^n$ be the optimal solution to \eqref{eq:LCQP} with the smallest $\ell_2$-norm, and let $\hat{x}\in\bR^n$ be defined as in the proof of Theorem~\ref{thm:lcqp-sameWL2obj}. By the arguments in the proof of Theorem~\ref{thm:lcqp-sameWL2obj}, $\hat{x}$ is an optimal solution to \eqref{eq:LCQP_hat}. In particular, $\hat{x}$ is also an optimal solution to \eqref{eq:LCQP} since one can set $(\hat{A},\hat{b},\hat{c},\hat{Q},\hat{l},\hat{u},\hat{\circ})=(A,b,c,Q,l,u,\circ)$. Therefore, by the minimality of $\|x\|^2$, we have that
    \begin{equation*}
        \|x\|^2 \leq \|\hat{x}\|^2 = \sum_{q=1}^t \sum_{j\in J_q} \hat{x}_j^2 = \sum_{q=1}^t |J_q| \left(\frac{1}{|J_q|}\sum_{j\in J_q} x_j\right)^2 \leq \sum_{q=1}^t \sum_{j\in J_q} x_j^2 = \|x\|^2,
    \end{equation*}
    which implies that $x_j$ is a constant in $j\in J_q$ and $x=\hat{x}$. Thus, $x$ is also an optimal solution to \eqref{eq:LCQP_hat}.
    
    Let $x'\in\bR^n$ be the optimal solution to \eqref{eq:LCQP_hat} with the smallest $\ell_2$-norm. Then $\|x'\|\leq \|\hat{x}\|=\|x\|$ and the reverse direction $\|x\|\leq\|x'\|$ is also true, which implies that $\|x\|=\|x'\|$. Therefore, we have $x=x'$ by the uniqueness of the optimal solution with the smallest $\ell_2$-norm.

    Noticing that the above arguments are made after permuting vertices in $V$ and $W$, we can conclude that $\Phi_{\textup{sol}}(G_{\textup{LCQP}}) =\sigma_W( \Phi_{\textup{sol}}(\hat{G}_{\textup{LCQP}}))$ for some $\sigma_W\in S_n$. Additionally, if $G_{\textup{LCQP}}\ensuremath{\stackrel{W}{\sim}}\hat{G}_{\textup{LCQP}}$, then there is no need to perform the permutation on $W$ and we have $\Phi_{\textup{sol}}(G_{\textup{LCQP}}) = \Phi_{\textup{sol}}(\hat{G}_{\textup{LCQP}})$.
\end{proof}

\begin{corollary}\label{cor:lcqp-sameWL2sol}
    For any feasible and bounded $G_{\textup{LCQP}}\in\calG_{\textup{LCQP}}^{m,n}$ and any $j,j'\in\{1,2,\dots,n\}$, if $C_j^{L,W} = C_{j'}^{L,W}$ holds for all $L\in\mathbb{N}_+$ and all hash functions, then $\Phi_{\textup{sol}}(G_{\textup{LCQP}})_j = \Phi_{\textup{sol}}(G_{\textup{LCQP}})_{j'}$.
\end{corollary}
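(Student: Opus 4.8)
The plan is to exploit the fact that the hypothesis makes $j$ and $j'$ interchangeable, so that swapping them produces a relabelled LCQP-graph that is WL-equivalent to the original; the uniqueness (and hence permutation-equivariance) of the minimal-$\ell_2$-norm optimal solution then forces the two coordinates to agree. Concretely, write $G:=G_{\textup{LCQP}}$, let $\sigma_W\in S_n$ be the transposition exchanging $j$ and $j'$ (fixing every other variable index and every constraint index), and let $\hat G:=\sigma_W\cdot G$ be the LCQP-graph obtained by relabelling the variable nodes of $G$ through $\sigma_W$. Since a permutation of variable indices preserves symmetry and positive semidefiniteness of $Q$ and merely relabels the feasible set, $\hat G\in\calG_{\textup{LCQP}}^{m,n}$ is again feasible and bounded, so $\Phi_{\textup{sol}}(\hat G)$ is well defined.

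First I would verify that $G\stackrel{W}{\sim}\hat G$. The WL test of Algorithm~\ref{alg:WL-lcqp} is permutation-equivariant: relabelling nodes by $\sigma_W$ relabels the colours accordingly, so $\hat C_k^{L,W}(\hat G)=C_{\sigma_W^{-1}(k)}^{L,W}(G)$ for every variable node $k$ and $\hat C_i^{L,V}(\hat G)=C_i^{L,V}(G)$ for every constraint node $i$ (because $\sigma_W$ fixes $V$), for all $L$ and all hash functions. For $k\notin\{j,j'\}$ the first identity gives $\hat C_k^{L,W}(\hat G)=C_k^{L,W}(G)$, while for $k\in\{j,j'\}$ it gives $\hat C_j^{L,W}(\hat G)=C_{j'}^{L,W}(G)$ and $\hat C_{j'}^{L,W}(\hat G)=C_{j}^{L,W}(G)$; the standing hypothesis $C_j^{L,W}=C_{j'}^{L,W}$ then makes these equal to $C_j^{L,W}(G)$ and $C_{j'}^{L,W}(G)$, respectively. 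Thus the variable colours of $G$ and $\hat G$ agree node by node, and the constraint colours agree as well, which is exactly the relation $G\stackrel{W}{\sim}\hat G$.

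Next I would apply the ``furthermore'' part of Theorem~\ref{thm:lcqp-sameWL2sol} to conclude $\Phi_{\textup{sol}}(G)=\Phi_{\textup{sol}}(\hat G)$. On the other hand, because $\hat G$ is exactly $G$ with variables $j$ and $j'$ interchanged, $x$ is optimal for $G$ if and only if $\sigma_W(x)$ is optimal for $\hat G$, and $\|x\|_2=\|\sigma_W(x)\|_2$; since the minimal-$\ell_2$-norm optimal solution is unique, this yields the permutation-equivariance identity $\Phi_{\textup{sol}}(\hat G)=\sigma_W(\Phi_{\textup{sol}}(G))$. Combining the two facts gives $\Phi_{\textup{sol}}(G)=\sigma_W(\Phi_{\textup{sol}}(G))$, and since $\sigma_W$ swaps the $j$-th and $j'$-th coordinates this is precisely $\Phi_{\textup{sol}}(G)_j=\Phi_{\textup{sol}}(G)_{j'}$.

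The only genuinely delicate points are the two equivariance statements. The WL equivariance is essentially bookkeeping, but it must be checked that swapping two equally-coloured variable nodes leaves the constraint-node colours unchanged as well, which I would track through the aggregation step of Algorithm~\ref{alg:WL-lcqp}. The equivariance of $\Phi_{\textup{sol}}$ relies on the uniqueness of the smallest-$\ell_2$-norm minimiser (guaranteed for $Q\succeq0$ as in Definition~\ref{def:target-maps}) together with the permutation-invariance of the Euclidean norm. I expect this second point---turning uniqueness into equivariance---to be the main thing to state carefully, though it becomes routine once uniqueness is in hand.
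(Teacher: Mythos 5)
Your argument is correct and follows essentially the same route as the paper's own proof: relabel $j$ and $j'$ to form $\hat G$, observe $G\stackrel{W}{\sim}\hat G$ from the colour hypothesis, invoke the ``furthermore'' clause of Theorem~\ref{thm:lcqp-sameWL2sol} to get $\Phi_{\textup{sol}}(G)=\Phi_{\textup{sol}}(\hat G)$, and conclude via the permutation-equivariance of $\Phi_{\textup{sol}}$. The paper states this more tersely (leaving the WL-equivariance bookkeeping and the equivariance of $\Phi_{\textup{sol}}$ implicit), but the underlying reasoning is identical.
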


\begin{proof}
    Let $\hat{G}_{\textup{LCQP}}$ be the LCQP-graph obtained from $G_{\textup{LCQP}}$ by relabeling $j$ as $j'$ and relabeling $j'$ as $j$. By Theorem~\ref{thm:lcqp-sameWL2sol}, we have $\Phi_{\textup{sol}}(G_{\textup{LCQP}}) = \Phi_{\textup{sol}}(\hat{G}_{\textup{LCQP}})$, which implies $\Phi_{\textup{sol}}(G_{\textup{LCQP}})_j = \Phi_{\textup{sol}}(\hat{G}_{\textup{LCQP}})_j = \Phi_{\textup{sol}}(G_{\textup{LCQP}})_{j'}$.
\end{proof}

It is well-known from previous literature that the separation power of GNNs is equivalent to that of WL test and that GNNs can universally approximate any continuous function whose separation is not stronger than that of WL test; see e.g. \cites{chen2022representing-lp,xu2019powerful,azizian2020expressive,geerts2022expressiveness}. We have established in Theorem~\ref{thm:lcqp-sameWL2obj}, Theorem~\ref{thm:lcqp-sameWL2sol}, and Corollary~\ref{cor:lcqp-sameWL2sol} that the separation power of $\Phi_{\textup{obj}}$ and $\Phi_{\textup{sol}}$ is upper bounded by the WL test (Algorithm~\ref{alg:WL-lcqp}) that shares the same information aggregation mechanism as the GNNs in $\calF_{\textup{LCQP}}$ and $\calF_{\textup{LCQP}}^W$. 
Theorem~\ref{thm:obj-lcqp} and Theorem~\ref{thm:sol-lcqp} will be proved following this idea.

\begin{proof}[Proof of Theorem~\ref{thm:obj-lcqp}]
    Theorem~\ref{thm:obj-lcqp} can be proved based on Theorem~\ref{thm:lcqp-sameWL2obj}.
    The separation power of GNNs is equivalent to that of the WL test, i.e., for any $G_{\textup{LCQP}},\hat{G}_{\textup{LCQP}}\in\calG_{\textup{LCQP}}^{m,n}$ with $Q,\hat{Q}\succeq 0$,
    \begin{equation}\label{eq:lcqp_GNN_WL} 
        G_{\textup{LCQP}}\sim\hat{G}_{\textup{LCQP}} \Longleftrightarrow F(G_{\textup{LCQP}}) = F(\hat{G}_{\textup{LCQP}}),\ \forall~F\in\calF_{\textup{LCQP}},
    \end{equation}
    which combined with Theorem~\ref{thm:lcqp-sameWL2obj} leads to that
    \begin{equation}\label{eq:lcqp-sameGNN2obj}
        F(G_{\textup{LCQP}}) = F(\hat{G}_{\textup{LCQP}}),\ \forall~F\in\calF_{\textup{LCQP}} \implies \Phi_{\textup{obj}}(G_{\textup{LCQP}}) = \Phi_{\textup{obj}}(\hat{G}_{\textup{LCQP}}),
    \end{equation}
    indicating that the separation power of $\calF_{\textup{LCQP}}$ is upper bounded by that of $\Phi_{\textup{obj}}$.
    
    The indicator function $\mathbb{I}_{\Phi_{\textup{obj}}(\cdot)\in\bR}:\calG_{\textup{LCQP}}^{m,n}\to \{0,1\}\subset \bR$ is measurable, i.e., $\Phi_{\textup{obj}}^{-1}(0)$ and $\Phi_{\textup{obj}}^{-1}(0)$ are both Lebesgue measurable, and hence by Lusin's theorem, there exists a compact and permutation-invariant subspace $X\subset \calG_{\textup{LCQP}}^{m,n}$ such that $\bP[\calG_{\textup{LCQP}}^{m,n}\backslash X]<\epsilon$ and that $\mathbb{I}_{\Phi_{\textup{obj}}(\cdot)\in\bR}$ restricted on $X$ is continuous. Therefore, by the Stone-Weierstrass theorem and \eqref{eq:lcqp-sameGNN2obj}, we have that there exists $F_1\in \calF_{\textup{LCQP}}$ satisfying
    \begin{equation*}
        \sup_{G_{\textup{LCQP}}\in X} \left|F_1(G_{\textup{LCQP}}) - \mathbb{I}_{\Phi_{\textup{obj}}(G_{\textup{LCQP}}) \in \bR}\right| <\frac{1}{2}
    \end{equation*}
    Therefore, it holds that
    \begin{equation*}
        \bP\left[\mathbb{I}_{F_1(G_{\textup{LCQP}})>\frac{1}{2}}\neq \mathbb{I}_{\Phi_{\textup{obj}}(G_{\textup{LCQP}}) \in \bR}\right] \leq \bP\left[\calG_{\textup{LCQP}}^{m,n}\backslash X\right] < \epsilon,
    \end{equation*}
    which proves \eqref{eq1:obj-lcqp}. Additionally, \eqref{eq2:obj-lcqp} can be proved by applying similar arguments to $\Phi_{\textup{obj}} : \Phi_{\textup{obj}}^{-1}(\bR) \to \bR$, where $\Phi_{\textup{obj}}^{-1}(\bR)\subset \calG_{\textup{LCQP}}^{m,n}$ is the collection of feasible and bounded $G_{\textup{LCQP}}\in \calG_{\textup{LCQP}}^{m,n}$.
\end{proof}

\begin{proof}[Proof of Theorem~\ref{thm:sol-lcqp}]
    Theorem~\ref{thm:sol-lcqp} can be proved based on Theorem~\ref{thm:lcqp-sameWL2sol} and Corollary~\ref{cor:lcqp-sameWL2sol}.
    In addition to \eqref{eq:lcqp_GNN_WL}, it can be proved that the separation powers of GNNs and the WL test are equivalent in the following sense:
    \begin{itemize}
        \item For any $G_{\textup{LCQP}},\hat{G}_{\textup{LCQP}}\in\calG_{\textup{LCQP}}^{m,n}$, $G_{\textup{LCQP}}\ensuremath{\stackrel{W}{\sim}}\hat{G}_{\textup{LCQP}}$ if and only if $F_W(G_{\textup{LCQP}}) = F_W(\hat{G}_{\textup{LCQP}})$ for all $F_W\in\calF_{\textup{LCQP}}^W$.
        \item For any $G_{\textup{LCQP}}\in\calG_{\textup{LCQP}}^{m,n}$ and any $j,j'\in W$, $C_j^{L,W}=C_{j'}^{L,W}$ for any $L\in\mathbb{N}$ and any hash function if and only if $F_W(G_{\textup{LCQP}})_j = F_W(G_{\textup{LCQP}})_{j'}$ for all $F_W\in\calF_{\textup{LCQP}}^W$.
    \end{itemize}
    Therefore, with Theorem~\ref{thm:lcqp-sameWL2sol} and Corollary~\ref{cor:lcqp-sameWL2sol}, the separation power of GNNs is upper bounded by that of $\Phi_{\textup{sol}}$ in the following sense that for any $G_{\textup{LCQP}},\hat{G}_{\textup{LCQP}}\in\calG_{\textup{LCQP}}^{m,n}$ with $Q,\hat{Q}\succeq 0$ and any $j,j'\in W$,
    \begin{itemize}
        \item $F(G_{\textup{LCQP}}) = F(\hat{G}_{\textup{LCQP}}),\ \forall~F\in\calF_{\textup{LCQP}}$ implies $\Phi_{\textup{sol}}(G_{\textup{LCQP}}) = \sigma_W(\Phi_{\textup{sol}}(\hat{G}_{\textup{LCQP}}))$ for some $\sigma_W\in S_n$.
        \item $F_W(G_{\textup{LCQP}}) = F_W(\hat{G}_{\textup{LCQP}}),\ \forall~F_W\in\calF_{\textup{LCQP}}^W$ implies $\Phi_{\textup{sol}}(G_{\textup{LCQP}}) = \Phi_{\textup{sol}}(\hat{G}_{\textup{LCQP}})$.
        \item $F_W(G_{\textup{LCQP}})_j = F_W(G_{\textup{LCQP}})_{j'},\ \forall~F_W\in\calF_{\textup{LCQP}}^W$ implies $\Phi_{\textup{sol}}(G_{\textup{LCQP}})_j = \Phi_{\textup{sol}}(G_{\textup{LCQP}})_{j'}$.
    \end{itemize}

    The optimal solution mapping $\Phi_{\textup{sol}}:\Phi_{\textup{obj}}^{-1}(\bR)\to \bR^n$ is measurable, i.e., $\Phi_{\textup{sol}}^{-1}(A)$ is Lebesgue measurable for any Borel measurable $A\subset \bR^n$, and hence by Lusin's theorem, there exists a compact and permutation-invariant subspace $X\subset \Phi_{\textup{obj}}^{-1}(\bR)$ such that $\bP[\Phi_{\textup{obj}}^{-1}(\bR)\backslash X]<\epsilon$ and that $\Phi_{\textup{sol}}$ restricted on $X$ is continuous. Therefore, applying the generalized Stone-Weierstrass theorem for equivariant functions \cite{azizian2020expressive}*{Theorem 22}, we know that there exists $F_W\in \calF_{\textup{LCQP}}^W$ satisfying
    \begin{equation*}
        \sup_{G_{\textup{LCQP}}\in X} \left\|F_W(G_{\textup{LCQP}}) - \Phi_{\textup{sol}}(G_{\textup{LCQP}})\right\| <\delta.
    \end{equation*}
    Therefore, it holds that
    \begin{equation*}
        \bP\left[\|F_W(G_{\textup{LCQP}}) - \Phi_{\textup{sol}}(G_{\textup{LCQP}})\|>\delta\right] \leq \bP\left[\Phi_{\textup{obj}}^{-1}(\bR)\backslash X\right] < \epsilon,
    \end{equation*}
    which completes the proof.
\end{proof}

\section{Proofs for Section~\ref{sec:counter-example}}
\label{sec:pf-counter-example}

The proof of Proposisition~\ref{lem:counter_ex_feas} is directly from \cite{chen2022representing-milp} since adding a quadratic term in the objective function of an MILP problem does not change the feasible region. However, Proposisitions~\ref{lem:counter_ex_obj} and \ref{lem:counter_ex_sol} are not covered in \cite{chen2022representing-milp} and we present their proofs here.

\begin{proof}[Proof of Proposisition~\ref{lem:counter_ex_obj}]
    As discussed in Section~\ref{sec:counter-example}, we consider the following two examples whose optimal objective values are $\frac{9}{2}$ and $6$, respectively.\\
    \begin{minipage}{0.52\textwidth}
 \begin{small}
     \begin{equation*}
	    \begin{split}
		    \min_{x\in\bR^6} &~~ \frac{1}{2}\sum_{i=1}^6 x_i^2 + \sum_{i=1}^6 x_i,\\
		    \text{s.t.} &~~ x_1+x_2\geq 1,\ x_2+x_3 \geq 1,\ x_3+x_4\geq 1,\\ & ~~ x_4+x_5\geq 1,\ x_5+x_6 \geq 1,\ x_6+x_1\geq 1,\\
		    &~~ x_j\in \{0,1\},\ \forall~j\in\{1,2,\dots,6\}.
	    \end{split}
    \end{equation*}
    \begin{equation*}
	    \begin{split}
		    \min_{x\in\bR^6} &~~ \frac{1}{2}\sum_{i=1}^6 x_i^2 + \sum_{i=1}^6 x_i,\\
		    \text{s.t.} &~~ x_1+x_2\geq 1,\ x_2+x_3 \geq 1,\ x_3+x_1\geq 1,\\ 
		    &~~ x_4+x_5\geq 1,\ x_5+x_6 \geq 1,\ x_6+x_4\geq 1,\\
		    &~~ x_j\in \{0,1\},\ \forall~j\in\{1,2,\dots,6\}.\\
      ~
	    \end{split}
    \end{equation*}
 \end{small}
    \end{minipage}
	\begin{minipage}{0.22\textwidth}
		\centering
		\begin{tikzpicture}[
			constraintb/.style={circle, draw = blue, scale = 0.8},
			variablex/.style={circle, draw = red, scale = 0.8},
			]
			
			\draw (-0.8,1.75) node[constraintb] (v1) {$v_1$};
			\draw (-0.8,1.05) node[constraintb] (v2) {$v_2$};
			\draw (-0.8,0.35) node[constraintb] (v3) {$v_3$};
			\draw (-0.8,-0.35) node[constraintb] (v4) {$v_4$};
			\draw (-0.8,-1.05) node[constraintb] (v5) {$v_5$};
			\draw (-0.8,-1.75) node[constraintb] (v6) {$v_6$};
			\draw (0.8,1.75) node[variablex] (w1) {$w_1$};
			\draw (0.8,1.05) node[variablex] (w2) {$w_2$};
			\draw (0.8,0.35) node[variablex] (w3) {$w_3$};
			\draw (0.8,-0.35) node[variablex] (w4) {$w_4$};
			\draw (0.8,-1.05) node[variablex] (w5) {$w_5$};
			\draw (0.8,-1.75) node[variablex] (w6) {$w_6$};
			
			\draw[-] (v1.east) -- (w1.west);
			\draw[-] (v2.east) -- (w2.west);
			\draw[-] (v3.east) -- (w3.west);
			\draw[-] (v4.east) -- (w4.west);
			\draw[-] (v5.east) -- (w5.west);
			\draw[-] (v6.east) -- (w6.west);
			\draw[-] (v1.east) -- (w2.west);
			\draw[-] (v2.east) -- (w3.west);
			\draw[-] (v3.east) -- (w4.west);
			\draw[-] (v4.east) -- (w5.west);
			\draw[-] (v5.east) -- (w6.west);
			\draw[-] (v6.east) -- (w1.west);

            \draw[brown] (w1) to[loop above, out=-30, in=30, looseness=6] (w1);
            \draw[brown] (w2) to[loop above, out=-30, in=30, looseness=6] (w2);
            \draw[brown] (w3) to[loop above, out=-30, in=30, looseness=6] (w3);
            \draw[brown] (w4) to[loop above, out=-30, in=30, looseness=6] (w4);
            \draw[brown] (w5) to[loop above, out=-30, in=30, looseness=6] (w5);
            \draw[brown] (w6) to[loop above, out=-30, in=30, looseness=6] (w6);
			
		\end{tikzpicture}

	\end{minipage}
	\begin{minipage}{0.22\textwidth}
		\centering
		\begin{tikzpicture}[
			constraintb/.style={circle, draw = blue, scale = 0.8},
			variablex/.style={circle, draw = red, scale = 0.8},
			]
			
				\draw (-0.8,1.75) node[constraintb] (v1) {$v_1$};
			\draw (-0.8,1.05) node[constraintb] (v2) {$v_2$};
			\draw (-0.8,0.35) node[constraintb] (v3) {$v_3$};
			\draw (-0.8,-0.35) node[constraintb] (v4) {$v_4$};
			\draw (-0.8,-1.05) node[constraintb] (v5) {$v_5$};
			\draw (-0.8,-1.75) node[constraintb] (v6) {$v_6$};
			\draw (0.8,1.75) node[variablex] (w1) {$w_1$};
			\draw (0.8,1.05) node[variablex] (w2) {$w_2$};
			\draw (0.8,0.35) node[variablex] (w3) {$w_3$};
			\draw (0.8,-0.35) node[variablex] (w4) {$w_4$};
			\draw (0.8,-1.05) node[variablex] (w5) {$w_5$};
			\draw (0.8,-1.75) node[variablex] (w6) {$w_6$};
			
			\draw[-] (v1.east) -- (w1.west);
			\draw[-] (v2.east) -- (w2.west);
			\draw[-] (v3.east) -- (w3.west);
			\draw[-] (v4.east) -- (w4.west);
			\draw[-] (v5.east) -- (w5.west);
			\draw[-] (v6.east) -- (w6.west);
			\draw[-] (v1.east) -- (w2.west);
			\draw[-] (v2.east) -- (w3.west);
			\draw[-] (v3.east) -- (w1.west);
			\draw[-] (v4.east) -- (w5.west);
			\draw[-] (v5.east) -- (w6.west);
			\draw[-] (v6.east) -- (w4.west);

            \draw[brown] (w1) to[loop above, out=-30, in=30, looseness=6] (w1);
            \draw[brown] (w2) to[loop above, out=-30, in=30, looseness=6] (w2);
            \draw[brown] (w3) to[loop above, out=-30, in=30, looseness=6] (w3);
            \draw[brown] (w4) to[loop above, out=-30, in=30, looseness=6] (w4);
            \draw[brown] (w5) to[loop above, out=-30, in=30, looseness=6] (w5);
            \draw[brown] (w6) to[loop above, out=-30, in=30, looseness=6] (w6);
			
		\end{tikzpicture}
	\end{minipage}
    
    Denote $G_{\textup{MI-LCQP}}$ and $\hat{G}_{\textup{MI-LCQP}}$ as the graph representations of the above two MI-LCQP problems.
    Let $s_i^l,t_j^l$ and $\hat{s}_i^l,\hat{t}_j^l$ be the attributes at the $l$-th layer when apply a GNN $F\in\calF_{\textup{MI-LCQP}}$ to $G_{\textup{MI-LCQP}}$ and $\hat{G}_{\textup{MI-LCQP}}$. We will prove by induction that for any $0\leq l\leq L$, the followings hold:
    \begin{enumerate}[(a)]
        \item $s_i^l = \hat{s}_i^l$ and is constant over $i\in \{1,2,\dots,6\}$.
        \item $t_j^l = \hat{t}_j^l$ and is constant over $j\in \{1,2,\dots,6\}$.
    \end{enumerate}
    It is clear that the conditions (a) and (b) are true for $l=0$, since $v_i = \hat{v}_i = (1, \geq)$ is constant in $i\in \{1,2,\dots,6\}$, and $w_j = \hat{w}_j = (1,0,1,1)$ is constant in $j\in \{1,2,\dots,6\}$. Now suppose that the conditions (a) and (b) are true for $l-1$ where $1\leq l\leq L$. We denote that $s^{l-1} = s_i^{l-1} = \Bar{s}_i^{l-1},\ \forall~i\in \{1,2,\dots,6\}$ and $t^{l-1} = t_j^{l-1} = \hat{t}_j^{l-1},\ \forall~j\in \{1,2,\dots,6\}$. It can be computed for any $i\in \{1,2,\dots,6\}$ and $j\in \{1,2,\dots,6\}$ that
    \begin{align*}
        s_i^l & = f_l^V\left(s_i^{l-1},\sum_{j\in \calN_i^W} g_l^W(t_j^{l-1},A_{ij})\right) = f_l^V\left(s^{l-1},2 g_l^W(t^{l-1},1)\right) = \hat{s}_i^l,\\
        t_j^l & = f_l^W\left(t_j^{l-1},\sum_{i\in \calN_j^V} g_l^V(s_i^{l-1},A_{ij}),\sum_{j'\in \calN_j^W}  g_l^Q(t_{j'}^{l-1}, Q_{j j'})\right) \\
        & = f_l^W\left(t^{l-1},2 g_l^V(s^{l-1},1), g_l^Q(t^{l-1}, 1)\right) = \hat{t}_j^l,
    \end{align*}
    which proves (a) and (b) for $l$. Thus, we can conclude that $F(G_{\textup{MI-LCQP}}) =  F(\hat{G}_{\textup{MI-LCQP}}),\ \forall~F\in\calF_{\textup{MI-LCQP}}$.
\end{proof}

We also remark that, although the two MI-LCQP graphs with distinct optimal objective values presented above are indistinguishable by GNNs considered in this paper, they can be distinguished by $3$-hop GNNs \cites{feng2022powerful,chen2025expressive}. Crucially, when we scale these examples to $10$ variables and $10$ constraints (one graph with 20 connected nodes and the other with two 10-node components), $3$-hop GNNs fail while $5$-hop GNNs succeed, confirming that larger $k$ in $k$-hop GNNs reduces GNN-unfriendly cases, though they do not solve all MI-LCQPs. 

\begin{proof}[Proof of Proposition~\ref{lem:counter_ex_sol}]
    Consider the following two MI-LCQPs: \\ 
    \begin{minipage}{0.48\textwidth}
    \begin{small}
    \begin{equation*}
	    \begin{split}
		    \min_{x\in\bR^7} &~~ \frac{1}{2} x^\top \mathbf{1}\mathbf{1}^\top x + \mathbf{1}^\top x,\\
		    \text{s.t.} &~~ x_1-x_2=0,\ x_2-x_1=0,\\
            & ~~ x_3-x_4=0,\ x_4-x_5=0,\\
            & ~~ x_5-x_6=0,\ x_6-x_7=0,\ x_7-x_3=0, \\
            &~~ x_1+x_2+x_3+x_4+x_5+x_6+x_7 = 6 \\
		    &~~ 0\leq x_j\leq 3,\ x_j\in \mathbb{Z},\ \forall~j\in\{1,2,\dots,7\}. 
	    \end{split}
    \end{equation*}
    \end{small}
    \end{minipage}
	\begin{minipage}{0.5\textwidth}
		\centering
		\begin{tikzpicture}[
			constraintb/.style={circle, draw = blue, scale = 0.8},
            constraintb1/.style={circle, draw = orange, scale = 0.8},
			variablex/.style={circle, draw = red, scale = 0.8},
			]
			
			\draw (-1,2.1) node[constraintb] (v1) {$v_1$};
			\draw (-1,1.4) node[constraintb] (v2) {$v_2$};
			\draw (-1,0.7) node[constraintb] (v3) {$v_3$};
			\draw (-1,0) node[constraintb] (v4) {$v_4$};
			\draw (-1,-0.7) node[constraintb] (v5) {$v_5$};
			\draw (-1,-1.4) node[constraintb] (v6) {$v_6$};
            \draw (-1,-2.1) node[constraintb] (v7) {$v_7$};
            \draw (-2,0.3) node[constraintb1] (v8) {$v_8$};
			\draw (1,2.1) node[variablex] (w1) {$w_1$};
			\draw (1,1.4) node[variablex] (w2) {$w_2$};
			\draw (1,0.7) node[variablex] (w3) {$w_3$};
			\draw (1,0) node[variablex] (w4) {$w_4$};
			\draw (1,-0.7) node[variablex] (w5) {$w_5$};
			\draw (1,-1.4) node[variablex] (w6) {$w_6$};
            \draw (1,-2.1) node[variablex] (w7) {$w_7$};
			
			\draw[-] (v1.east) -- (w1.west);
			\draw[-] (v2.east) -- (w2.west);
			\draw[-] (v3.east) -- (w3.west);
			\draw[-] (v4.east) -- (w4.west);
			\draw[-] (v5.east) -- (w5.west);
			\draw[-] (v6.east) -- (w6.west);
            \draw[-] (v7.east) -- (w7.west);
			\draw[dashed] (v1.east) -- (w2.west);
			\draw[dashed] (v2.east) -- (w1.west);
            \draw[dashed] (v3.east) -- (w4.west);
            \draw[dashed] (v4.east) -- (w5.west);
            \draw[dashed] (v5.east) -- (w6.west);
            \draw[dashed] (v6.east) -- (w7.west);
            \draw[dashed] (v7.east) -- (w3.west);
            \draw[green] (v8.east) -- (w1.west);
            \draw[green] (v8.east) -- (w2.west);
            \draw[green] (v8.east) -- (w3.west);
            \draw[green] (v8.east) -- (w4.west);
            \draw[green] (v8.east) -- (w5.west);
            \draw[green] (v8.east) -- (w6.west);
            \draw[green] (v8.east) -- (w7.west);

            \draw[brown] (w1) to[loop above, out=0, in=0, looseness=2.5] (w2);
            \draw[brown] (w1) to[loop above, out=0, in=0, looseness=2.5] (w3);
            \draw[brown] (w1) to[loop above, out=0, in=0, looseness=2.5] (w4);
            \draw[brown] (w1) to[loop above, out=0, in=0, looseness=2.5] (w5);
            \draw[brown] (w1) to[loop above, out=0, in=0, looseness=2.5] (w6);
            \draw[brown] (w1) to[loop above, out=0, in=0, looseness=2.5] (w7);
            \draw[brown] (w2) to[loop above, out=0, in=0, looseness=2.5] (w3);
            \draw[brown] (w2) to[loop above, out=0, in=0, looseness=2.5] (w4);
            \draw[brown] (w2) to[loop above, out=0, in=0, looseness=2.5] (w5);
            \draw[brown] (w2) to[loop above, out=0, in=0, looseness=2.5] (w6);
            \draw[brown] (w2) to[loop above, out=0, in=0, looseness=2.5] (w7);
            \draw[brown] (w3) to[loop above, out=0, in=0, looseness=2.5] (w4);
            \draw[brown] (w3) to[loop above, out=0, in=0, looseness=2.5] (w5);
            \draw[brown] (w3) to[loop above, out=0, in=0, looseness=2.5] (w6);
            \draw[brown] (w3) to[loop above, out=0, in=0, looseness=2.5] (w7);
            \draw[brown] (w4) to[loop above, out=0, in=0, looseness=2.5] (w5);
            \draw[brown] (w4) to[loop above, out=0, in=0, looseness=2.5] (w6);
            \draw[brown] (w4) to[loop above, out=0, in=0, looseness=2.5] (w7);
            \draw[brown] (w5) to[loop above, out=0, in=0, looseness=2.5] (w6);
            \draw[brown] (w5) to[loop above, out=0, in=0, looseness=2.5] (w7);
            \draw[brown] (w6) to[loop above, out=0, in=0, looseness=2.5] (w7);
            \draw[brown] (w1) to[loop above, out=-30, in=30, looseness=4] (w1);
            \draw[brown] (w2) to[loop above, out=-30, in=30, looseness=4] (w2);
            \draw[brown] (w3) to[loop above, out=-30, in=30, looseness=4] (w3);
            \draw[brown] (w4) to[loop above, out=-30, in=30, looseness=4] (w4);
            \draw[brown] (w5) to[loop above, out=-30, in=30, looseness=4] (w5);
            \draw[brown] (w6) to[loop above, out=-30, in=30, looseness=4] (w6);
            \draw[brown] (w7) to[loop above, out=-30, in=30, looseness=4] (w7);
		\end{tikzpicture}
	\end{minipage} \\
    and \\
    \begin{minipage}{0.48\textwidth}
    \begin{small}
    \begin{equation*}
	    \begin{split}
		    \min_{x\in\bR^7} &~~ \frac{1}{2}x^\top \mathbf{1}\mathbf{1}^\top x + \mathbf{1}^\top x,\\
		    \text{s.t.} &~~ x_1-x_2=0,\ x_2-x_3=0,\ x_3-x_1=0,\\
            & ~~ x_4-x_5=0,\ x_5-x_6=0,\\
            & ~~ x_6-x_7=0,\ x_7-x_4=0, \\
            &~~ x_1+x_2+x_3+x_4+x_5+x_6+x_7 = 6 \\
	      &~~ 0\leq x_j\leq 3,\ x_j\in \mathbb{Z},\ \forall~j\in\{1,2,\dots,7\}.
	    \end{split}
    \end{equation*}    
    \end{small}
    \end{minipage}
	\begin{minipage}{0.5\textwidth}
		\centering
		\begin{tikzpicture}[
			constraintb/.style={circle, draw = blue, scale = 0.8},
            constraintb1/.style={circle, draw = orange, scale = 0.8},
			variablex/.style={circle, draw = red, scale = 0.8},
			]
			
			\draw (-1,2.1) node[constraintb] (v1) {$v_1$};
			\draw (-1,1.4) node[constraintb] (v2) {$v_2$};
			\draw (-1,0.7) node[constraintb] (v3) {$v_3$};
			\draw (-1,0) node[constraintb] (v4) {$v_4$};
			\draw (-1,-0.7) node[constraintb] (v5) {$v_5$};
			\draw (-1,-1.4) node[constraintb] (v6) {$v_6$};
            \draw (-1,-2.1) node[constraintb] (v7) {$v_7$};
            \draw (-2,0.3) node[constraintb1] (v8) {$v_8$};
			\draw (1,2.1) node[variablex] (w1) {$w_1$};
			\draw (1,1.4) node[variablex] (w2) {$w_2$};
			\draw (1,0.7) node[variablex] (w3) {$w_3$};
			\draw (1,0) node[variablex] (w4) {$w_4$};
			\draw (1,-0.7) node[variablex] (w5) {$w_5$};
			\draw (1,-1.4) node[variablex] (w6) {$w_6$};
            \draw (1,-2.1) node[variablex] (w7) {$w_7$};
			
			\draw[-] (v1.east) -- (w1.west);
			\draw[-] (v2.east) -- (w2.west);
			\draw[-] (v3.east) -- (w3.west);
			\draw[-] (v4.east) -- (w4.west);
			\draw[-] (v5.east) -- (w5.west);
			\draw[-] (v6.east) -- (w6.west);
            \draw[-] (v7.east) -- (w7.west);
			\draw[dashed] (v1.east) -- (w2.west);
			\draw[dashed] (v2.east) -- (w3.west);
            \draw[dashed] (v3.east) -- (w1.west);
            \draw[dashed] (v4.east) -- (w5.west);
            \draw[dashed] (v5.east) -- (w6.west);
            \draw[dashed] (v6.east) -- (w7.west);
            \draw[dashed] (v7.east) -- (w4.west);
            \draw[green] (v8.east) -- (w1.west);
            \draw[green] (v8.east) -- (w2.west);
            \draw[green] (v8.east) -- (w3.west);
            \draw[green] (v8.east) -- (w4.west);
            \draw[green] (v8.east) -- (w5.west);
            \draw[green] (v8.east) -- (w6.west);
            \draw[green] (v8.east) -- (w7.west);

            \draw[brown] (w1) to[loop above, out=0, in=0, looseness=2.5] (w2);
            \draw[brown] (w1) to[loop above, out=0, in=0, looseness=2.5] (w3);
            \draw[brown] (w1) to[loop above, out=0, in=0, looseness=2.5] (w4);
            \draw[brown] (w1) to[loop above, out=0, in=0, looseness=2.5] (w5);
            \draw[brown] (w1) to[loop above, out=0, in=0, looseness=2.5] (w6);
            \draw[brown] (w1) to[loop above, out=0, in=0, looseness=2.5] (w7);
            \draw[brown] (w2) to[loop above, out=0, in=0, looseness=2.5] (w3);
            \draw[brown] (w2) to[loop above, out=0, in=0, looseness=2.5] (w4);
            \draw[brown] (w2) to[loop above, out=0, in=0, looseness=2.5] (w5);
            \draw[brown] (w2) to[loop above, out=0, in=0, looseness=2.5] (w6);
            \draw[brown] (w2) to[loop above, out=0, in=0, looseness=2.5] (w7);
            \draw[brown] (w3) to[loop above, out=0, in=0, looseness=2.5] (w4);
            \draw[brown] (w3) to[loop above, out=0, in=0, looseness=2.5] (w5);
            \draw[brown] (w3) to[loop above, out=0, in=0, looseness=2.5] (w6);
            \draw[brown] (w3) to[loop above, out=0, in=0, looseness=2.5] (w7);
            \draw[brown] (w4) to[loop above, out=0, in=0, looseness=2.5] (w5);
            \draw[brown] (w4) to[loop above, out=0, in=0, looseness=2.5] (w6);
            \draw[brown] (w4) to[loop above, out=0, in=0, looseness=2.5] (w7);
            \draw[brown] (w5) to[loop above, out=0, in=0, looseness=2.5] (w6);
            \draw[brown] (w5) to[loop above, out=0, in=0, looseness=2.5] (w7);
            \draw[brown] (w6) to[loop above, out=0, in=0, looseness=2.5] (w7);
            \draw[brown] (w1) to[loop above, out=-30, in=30, looseness=4] (w1);
            \draw[brown] (w2) to[loop above, out=-30, in=30, looseness=4] (w2);
            \draw[brown] (w3) to[loop above, out=-30, in=30, looseness=4] (w3);
            \draw[brown] (w4) to[loop above, out=-30, in=30, looseness=4] (w4);
            \draw[brown] (w5) to[loop above, out=-30, in=30, looseness=4] (w5);
            \draw[brown] (w6) to[loop above, out=-30, in=30, looseness=4] (w6);
            \draw[brown] (w7) to[loop above, out=-30, in=30, looseness=4] (w7);
		\end{tikzpicture}
	\end{minipage} \\
    As we mentioned in Section~\ref{sec:counter-example}, both problems are feasible with the same optimal objective value, but have disjoint optimal solution sets.
 
    On the other hand, it can be analyzed using the same argument as in the proof of Proposition~\ref{lem:counter_ex_obj} that for any $0\leq l\leq L$ that
    \begin{enumerate}[(a)]
        \item $s_i^l = \hat{s}_i^l$ is constant over $i\in\{1,2,\dots,7\}$, and $s_8^l = \hat{s}_8^l$.
        \item $t_j^l = \hat{t}_j^l$ is constant over $j\in\{1,2,\dots,7\}$.
    \end{enumerate}
    These two conditions guarantee that $F(G_{\textup{MI-LCQP}}) =  F(\hat{G}_{\textup{MI-LCQP}}),\ \forall~F\in\calF_{\textup{MI-LCQP}}$ and $F_W(G_{\textup{MI-LCQP}}) =  F_W(\hat{G}_{\textup{MI-LCQP}}),\ \forall~F_W\in\calF_{\textup{MI-LCQP}}$.
\end{proof}

\section{Proofs for Section~\ref{sec:solvable}}
\label{sec:pf-mi-lcqp}

This section collects the proofs of Theorems~\ref{thm:feas-mi-lcqp} and \ref{thm:sol-mi-lcqp}. Similar to the LCQP case, the proofs are also based on the WL test (Algorithm~\ref{alg:WL-mi-lcqp}) and its separation power to distinguish MI-LCQP problems with different properties. We define the separation power of Algorithm~\ref{alg:WL-mi-lcqp} as follows.

\begin{definition}
    Let $G_{\textup{MI-LCQP}},\hat{G}_{\textup{MI-LCQP}}\in\calG_{\textup{MI-LCQP}}^{m,n}$ be two MI-LCQP-graphs. Denote by $\{\!\!\{ C_i^{L,V} \}\!\!\}_{i=0}^m, \{\!\!\{ C_j^{L,W} \}\!\!\}_{j=0}^n$ and $\{\!\!\{ \hat{C}_i^{L,V} \}\!\!\}_{i=0}^m, \{\!\!\{ \hat{C}_j^{L,W} \}\!\!\}_{j=0}^n$ the color multisets output by Algorithm~\ref{alg:WL-mi-lcqp} on $G_{\textup{MI-LCQP}}$ and $\hat{G}_{\textup{MI-LCQP}}$. We say:

    \begin{enumerate}
        \item $G_{\textup{MI-LCQP}}\sim\hat{G}_{\textup{MI-LCQP}}$ if $\{\!\!\{ C_i^{L,V} \}\!\!\}_{i=0}^m = \{\!\!\{ \hat{C}_i^{L,V} \}\!\!\}_{i=0}^m$ and $\{\!\!\{ C_j^{L,W} \}\!\!\}_{j=0}^n = \{\!\!\{ \hat{C}_j^{L,W} \}\!\!\}_{j=0}^n$ hold for all $L\in\mathbb{N}$ and all hash functions;
        \item $G_{\textup{MI-LCQP}}\ensuremath{\stackrel{W}{\sim}}\hat{G}_{\textup{MI-LCQP}}$ if $\{\!\!\{ C_i^{L,V} \}\!\!\}_{i=0}^m = \{\!\!\{ \hat{C}_i^{L,V} \}\!\!\}_{i=0}^m$ and $C_j^{L,W} = \hat{C}_j^{L,W},\ \forall~j\in\{1,2,\dots,n\}$, for all $L\in\mathbb{N}$ and all hash functions.
    \end{enumerate}
\end{definition}

The key component in the proof is to show that for GNN-solvable/GNN-analyzable MI-LCQP problems, if they are indistinguishable by the WL test, then they must share some common properties.

\begin{theorem}\label{thm:mi-lcqp-sameWL2feasobj}
    For two GNN-analyzable MI-LCQP-graphs $G_{\textup{MI-LCQP}},\hat{G}_{\textup{MI-LCQP}}\in \calG_{\textup{analyzable}}^{m,n}$, if $G_{\textup{MI-LCQP}}\sim\hat{G}_{\textup{MI-LCQP}}$, then $\Phi_{\textup{feas}}(G_{\textup{MI-LCQP}}) = \Phi_{\textup{feas}}(\hat{G}_{\textup{MI-LCQP}})$ and $\Phi_{\textup{obj}}(G_{\textup{MI-LCQP}}) = \Phi_{\textup{obj}}(\hat{G}_{\textup{MI-LCQP}})$.
\end{theorem}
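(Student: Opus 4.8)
The plan is to reduce both the feasibility and the optimal objective value of an MP-tractable MI-LCQP to a common low-dimensional problem indexed by the color classes, exploiting the fact that, under MP-tractability, every quantity of interest depends on $x$ only through the block sums $S_q := \sum_{j\in J_q} x_j$. This sidesteps the averaging argument of Theorem~\ref{thm:lcqp-sameWL2obj}, which fails here because of integrality.

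First I would extract the shared block data. Following the structural analysis in the proof of Theorem~\ref{thm:lcqp-sameWL2obj}, the relation $G_{\textup{MI-LCQP}}\sim\hat{G}_{\textup{MI-LCQP}}$ yields (after a permutation identifying the color classes, which leaves $\Phi_{\textup{feas}}$ and $\Phi_{\textup{obj}}$ unchanged) a common partition $(\calI,\calJ)$ of $V\cup W$, with $|I_p|=|\hat I_p|$, $|J_q|=|\hat J_q|$, matching node features $v_i=\hat v_i$ (constant on each $I_p$) and $w_j=\hat w_j=(c_q,l_q,u_q,\delta_q)$ (constant on each $J_q$, including the integrality indicator $\delta_q$), and equal block sums $\sum_{j\in J_q}A_{ij}=\sum_{j\in J_q}\hat A_{ij}$ and $\sum_{j'\in J_{q'}}Q_{jj'}=\sum_{j'\in J_{q'}}\hat Q_{jj'}$. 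MP-tractability then upgrades these sum identities to pointwise equalities: writing $A_{ij}=a_{pq}$, $Q_{jj'}=\rho_{qq'}$ and $\hat A_{ij}=\hat a_{pq}$, $\hat Q_{jj'}=\hat\rho_{qq'}$ for $i\in I_p$, $j\in J_q$, $j'\in J_{q'}$, the equality of the block sums together with $|J_q|=|\hat J_q|$ forces $|J_q|\,a_{pq}=|J_q|\,\hat a_{pq}$ and $|J_{q'}|\,\rho_{qq'}=|J_{q'}|\,\hat\rho_{qq'}$, i.e. $a_{pq}=\hat a_{pq}$ and $\rho_{qq'}=\hat\rho_{qq'}$. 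Hence the two instances share all block-level data.

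Second I would record the two algebraic identities produced by the constant-block structure. For a constraint $i\in I_p$, since $A_{ij}=a_{pq}$ on $J_q$ we have $\sum_j A_{ij}x_j=\sum_q a_{pq}S_q$, so all constraints indexed by $I_p$ coincide with the single aggregated constraint $\sum_q a_{pq}S_q\ \circ_p\ b_p$; and for the objective, $c^\top x=\sum_q c_q S_q$ while $\tfrac12 x^\top Q x=\tfrac12\sum_{q,q'}\rho_{qq'}\big(\sum_{j\in J_q}x_j\big)\big(\sum_{j'\in J_{q'}}x_{j'}\big)=\tfrac12\sum_{q,q'}\rho_{qq'}S_qS_{q'}$. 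Thus the linear-constraint feasibility and the objective value depend on $x$ only through $S=(S_1,\dots,S_t)$, and these reduced data are identical for $G_{\textup{MI-LCQP}}$ and $\hat G_{\textup{MI-LCQP}}$. Third, I would argue that the attainable set of block-sum vectors is also shared: for each $q$, the set $\mathcal{S}_q$ of values $S_q=\sum_{j\in J_q}x_j$ realizable with $l_q\le x_j\le u_q$ and $x_j\in\bZ$ whenever $\delta_q=1$ depends only on $(|J_q|,l_q,u_q,\delta_q)$ (an interval in the continuous case and a contiguous block of integers in the integer case), and the global attainable set factors as $\mathcal{S}=\prod_q \mathcal{S}_q$ since the box and integrality constraints decouple across blocks; conversely every $S\in\mathcal{S}$ is realized by some admissible $x$. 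Consequently the feasibility predicate "$\exists\,S\in\mathcal{S}$ with $\sum_q a_{pq}S_q\,\circ_p\,b_p$ for all $p$" and the value $\inf\{\tfrac12\sum_{q,q'}\rho_{qq'}S_qS_{q'}+\sum_q c_qS_q : S\in\mathcal{S},\ \sum_q a_{pq}S_q\,\circ_p\,b_p\ \forall p\}$ (read as $+\infty$ when the set is empty and $-\infty$ when the infimum is not finite) are determined entirely by the shared data, yielding
\begin{equation*}
\Phi_{\textup{feas}}(G_{\textup{MI-LCQP}}) = \Phi_{\textup{feas}}(\hat G_{\textup{MI-LCQP}}), \qquad \Phi_{\textup{obj}}(G_{\textup{MI-LCQP}}) = \Phi_{\textup{obj}}(\hat G_{\textup{MI-LCQP}}).
\end{equation*}

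The main obstacle is integrality, which is exactly why the averaging argument of Theorem~\ref{thm:lcqp-sameWL2obj} does not transfer: the folded point $\hat x_j=\tfrac1{|J_q|}\sum_{j'\in J_q}x_{j'}$ need not be integral, so one cannot reduce to a representative within each color class. The block-sum reduction circumvents this precisely because MP-tractability makes both the objective and the constraints functions of the sums $S_q$ alone, and the attainable range of each $S_q$—unlike the individual values $x_j$—is pinned down by the shared data $(|J_q|,l_q,u_q,\delta_q)$. The one place that needs genuine care is verifying that $\mathcal{S}_q$ is a contiguous set in the integer case (obtained by incrementing one coordinate at a time), so that attainability of a given block-sum vector matches exactly across the two instances; everything else is a routine bookkeeping of the identities above.
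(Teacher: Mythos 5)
Your proof is correct, and its first half coincides with the paper's: both arguments combine WL-indistinguishability with MP-tractability to conclude that, after a permutation aligning the color classes, the two instances have identical blockwise-constant data ($v_i=\hat v_i$, $w_j=\hat w_j$, $A_{ij}=\hat A_{ij}$, $Q_{jj'}=\hat Q_{jj'}$). Where you diverge is in what you do with this fact. The paper simply observes that these equalities mean $G_{\textup{MI-LCQP}}$ and $\hat G_{\textup{MI-LCQP}}$ are isomorphic --- literally the same MI-LCQP after relabeling --- so $\Phi_{\textup{feas}}$ and $\Phi_{\textup{obj}}$ agree trivially, and the proof ends there. You instead build a quotient problem in the block sums $S_q=\sum_{j\in J_q}x_j$ and argue that feasibility and the optimal value are determined by the shared reduced data. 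That reduction is sound (under MP-tractability the objective and the linear constraints do depend on $x$ only through $S$, and the attainable set of each $S_q$ is a function of $(|J_q|,l_q,u_q,\delta_q)$ alone), but it is redundant: once you have written ``the two instances share all block-level data,'' you have already shown they are the same instance up to permutation, and permutation invariance of $\Phi_{\textup{feas}}$ and $\Phi_{\textup{obj}}$ finishes the proof. Two smaller remarks: the contiguity of $\mathcal{S}_q$ in the integer case, which you single out as the delicate point, is never actually needed --- all that matters is that $\mathcal{S}_q$ is determined by shared data, not what shape it has; and your observation that the averaging argument of Theorem~\ref{thm:lcqp-sameWL2obj} breaks down under integrality is precisely the reason the MP-tractability hypothesis is introduced, so you have correctly identified the role it plays. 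Your block-sum reduction would become genuinely necessary only in a hypothetical setting where one had the multiset/sum identities but not the pointwise ones; here MP-tractability collapses that distinction.
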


\begin{proof}
    Let $G_{\textup{MI-LCQP}}$ and $\hat{G}_{\textup{MI-LCQP}}$ be the MI-LCQP-graphs associated to
    \begin{equation}\label{eq:MI-LCQP}
        \min_{x\in\bR^n} ~~ \frac{1}{2}x^\top Q x + c^\top x,\quad \textup{s.t.} ~~ Ax\circ b,\ l\leq x\leq u,\ x_j\in \bZ,\ \forall~j\in I.
    \end{equation}
    and
    \begin{equation}\label{eq:MI-LCQP_hat}
        \min_{x\in\bR^n} ~~ \frac{1}{2}x^\top \hat{Q} x + \hat{c}^\top x,\quad \textup{s.t.} ~~ \hat{A}x\ \hat{\circ}\ \hat{b},\ \hat{l}\leq x\leq \hat{u},\ x_j\in \bZ,\ \forall~j\in \hat{I}.
    \end{equation}
    Suppose that there are no collisions of hash functions or their linear combinations when applying the WL test to $G_{\textup{MI-LCQP}}$ and $\hat{G}_{\textup{MI-LCQP}}$ and there are no strict color refinements in the $L$-th iteration.
    Since $G_{\textup{MI-LCQP}}\sim\hat{G}_{\textup{MI-LCQP}}$ and both of them are GNN-analyzable, after performing some permutation, there exist $\mathcal{I} = \{I_1,I_2,\dots,I_s\}$ and $\mathcal{J} = \{J_1,J_2,\dots,J_t\}$ that are partitions of $\{1,2,\dots,m\}$ and $\{1,2,\dots,n\}$, respectively, such that the followings hold:
    \begin{itemize}
        \item $C_i^{L,V} = C_{i'}^{L,V}$ if and only if $i,i'\in I_p$ for some $p\in\{1,2,\dots,s\}$.
        \item $C_i^{L,V} = \hat{C}_{i'}^{L,V}$ if and only if $i,i'\in I_p$ for some $p\in\{1,2,\dots,s\}$.
        \item $\hat{C}_i^{L,V} = \hat{C}_{i'}^{L,V}$ if and only if $i,i'\in I_p$ for some $p\in\{1,2,\dots,s\}$.
        \item $C_j^{L,W} = C_{j'}^{L,W}$ if and only if $j,j'\in J_q$ for some $q\in\{1,2,\dots,t\}$.
        \item $C_j^{L,W} = \hat{C}_{j'}^{L,W}$ if and only if $j,j'\in J_q$ for some $q\in\{1,2,\dots,t\}$.
        \item $\hat{C}_j^{L,W} = \hat{C}_{j'}^{L,W}$ if and only if $j,j'\in J_q$ for some $q\in\{1,2,\dots,t\}$.
    \end{itemize}
    By similar analysis as in the proof of Theorem~\ref{thm:lcqp-sameWL2obj}, we have
    \begin{enumerate}[(a)]
        \item $v_i=\hat{v}_i$ and is constant over $i\in I_p$ for any $p\in\{1,2,\dots,s\}$.
        \item $w_j=\hat{w}_j$ and is constant over $j\in J_q$ for any $q\in\{1,2,\dots,t\}$.
        \item For any $p\in\{1,2,\dots,s\}$ and any $q\in\{1,2,\dots,t\}$, $\{\!\!\{A_{ij}:j\in J_q\}\!\!\} = \{\!\!\{\hat{A}_{ij}:j\in J_q\}\!\!\}$ and is constant over $i\in I_p$.
        \item For any $p\in\{1,2,\dots,s\}$ and any $q\in\{1,2,\dots,t\}$, $\{\!\!\{A_{ij}:i\in I_p\}\!\!\} = \{\!\!\{\hat{A}_{ij}:i\in I_p\}\!\!\}$ and is constant over $j\in J_q$.
        \item For any $q,q'\in\{1,2,\dots,t\}$, $\{\!\!\{Q_{jj'}:j'\in J_{q'}\}\!\!\} = \{\!\!\{\hat{Q}_{jj'}:j'\in J_{q'}\}\!\!\}$ and is constant over $j\in J_q$.
    \end{enumerate}
    Note that $G_{\textup{MI-LCQP}}$ and $\hat{G}_{\textup{MI-LCQP}}$ are both GNN-analyzable, i.e., all submatrices $(A_{ij})_{i\in I_p,j\in J_q}$, $(\hat{A}_{ij})_{i\in I_p,j\in J_q}$, $(Q_{jj'})_{j\in J_q,j'\in J_{q'}}$, and $(\hat{Q}_{jj'})_{j\in J_q,j'\in J_{q'}}$ have identical entries. The above conditions (c)-(e) suggest that
    \begin{itemize}
        \item[(f)] For any $p\in\{1,2,\dots,s\}$ and any $q\in\{1,2,\dots,t\}$, $A_{ij} = \hat{A}_{ij}$ and is constant over $i\in I_p,j\in J_q$.
        \item[(g)] For any $q,q'\in\{1,2,\dots,t\}$, $Q_{jj'} = \hat{Q}_{jj'}$ and is constant over $j\in J_q,j'\in J_{q'}$.
    \end{itemize}
    Combining conditions (a), (b), (f), and (g), we can conclude that $G_{\textup{MI-LCQP}}$ and $\hat{G}_{\textup{MI-LCQP}}$ are actually identical after applying some permutation, i.e., they are isomorphic, which implies $\Phi_{\textup{feas}}(G_{\textup{MI-LCQP}}) = \Phi_{\textup{feas}}(\hat{G}_{\textup{MI-LCQP}})$ and $\Phi_{\textup{obj}}(G_{\textup{MI-LCQP}}) = \Phi_{\textup{obj}}(\hat{G}_{\textup{MI-LCQP}})$.
\end{proof}

\textbf{MI-LCQP optimal solution mapping $\Phi_{\textup{sol}}$.}
Before stating the next result, we present the definition of the MI-LCQP optimal solution mapping $\Phi_{\textup{sol}}$. Different from the LCQP setting, the optimal solution to an MI-LCQP problem may not exist even if it is feasible and bounded, i.e., $\Phi_{\textup{obj}}(G_{\textup{MI-LCQP}})\in \bR$. Thus, we have to work with $\calG_{\textup{sol}}^{m,n}\subset\Phi_{\textup{obj}}^{-1}(\bR)\subset \calG_{\textup{MI-LCQP}}^{m,n}$ where $\calG_{\textup{sol}}^{m,n}$ is the collection of all MI-LCQP-graphs for which an optimal solution exists. For $G_{\textup{MI-LCQP}}\in \calG_{\textup{sol}}^{m,n}$, it is possible that it admits multiple optimal solution. Moreover, there may even exist multiple optimal solutions with the smallest $\ell_2$-norm due to its non-convexity, which means that we cannot define the optimal solution mapping $\Phi_{\textup{sol}}$ using the same approach as in the LCQP case. If we further assume that $G_{\textup{MI-LCQP}}\in \calG_{\textup{sol}}^{m,n}$ is GNN-solvable, then using the same approach as in \cite{chen2022representing-milp}*{Appendix C}, one can define a total ordering on the optimal solution set and hence define $\Phi_{\textup{sol}}(G_{\textup{MI-LCQP}})$ as the minimal element in the optimal solution set, which is unique and permutation-equivariant, meaning that if one relabels vertices of $G_{\textup{MI-LCQP}}$, then entries of $\Phi_{\textup{sol}}(G_{\textup{MI-LCQP}})$ are relabeled accordingly. In particular, since the WL test applied on $G_{\textup{MI-LCQP}}\in \calG_{\textup{sol}}^{m,n} \cap \calG_{\textup{solvable}}^{m,n}$ yields distinct vertex colors, one can uniquely find a permutation $\sigma\in S_n$ that orders $1,2,\dots,n$ lexicographically. Then $\Phi_{\textup{sol}}(G_{\textup{MI-LCQP}})$ is defined as the vector $x\in\bR^n$ so that $(x_{\sigma(1)},x_{\sigma(2)},\dots, x_{\sigma(n)})$ is minimized in the lexicographic sense among all optimal solutions of $G_{\textup{MI-LCQP}}$.

\begin{theorem}\label{thm:mi-lcqp-sameWL2sol}
    For any two MI-LCQP-graphs $G_{\textup{MI-LCQP}},\hat{G}_{\textup{MI-LCQP}}\in \calG_{\textup{sol}}^{m,n}\cap \calG_{\textup{solvable}}^{m,n}$ that are GNN-solvable with nonempty optimal solution sets, if $G_{\textup{MI-LCQP}}\sim\hat{G}_{\textup{MI-LCQP}}$, then there exists some permutation $\sigma_W\in S_n$ such that $\Phi_{\textup{sol}}(G_{\textup{MI-LCQP}}) =\sigma_W( \Phi_{\textup{sol}}(\hat{G}_{\textup{MI-LCQP}}))$. Furthermore, if $G_{\textup{MI-LCQP}}\ensuremath{\stackrel{W}{\sim}}\hat{G}_{\textup{MI-LCQP}}$, then $\Phi_{\textup{sol}}(G_{\textup{MI-LCQP}}) = \Phi_{\textup{sol}}(\hat{G}_{\textup{MI-LCQP}})$.
\end{theorem}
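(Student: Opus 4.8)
The plan is to reduce everything to the isomorphism statement already extracted in the proof of Theorem~\ref{thm:mi-lcqp-sameWL2feasobj}, and then invoke the permutation-equivariance of $\Phi_{\textup{sol}}$ on unfoldable instances. First I would note that, by Proposition~\ref{prop:unfoldable2MPtractable}, the hypotheses $G_{\textup{MI-LCQP}},\hat{G}_{\textup{MI-LCQP}}\in\calG_{\textup{unfold}}^{m,n}$ imply that both graphs are MP-tractable. Hence the argument in the proof of Theorem~\ref{thm:mi-lcqp-sameWL2feasobj} applies verbatim: from $G_{\textup{MI-LCQP}}\sim\hat{G}_{\textup{MI-LCQP}}$ together with MP-tractability one concludes that, after a suitable relabeling, the two graphs coincide. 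That is, there exist permutations $\sigma_V\in S_m$ and $\sigma_W\in S_n$ such that applying $(\sigma_V,\sigma_W)$ to $\hat{G}_{\textup{MI-LCQP}}$ reproduces $G_{\textup{MI-LCQP}}$ exactly, matching all node features and all entries of $A$ and $Q$.

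Next I would transfer this isomorphism to the level of solutions. Because the two graphs are isomorphic via $(\sigma_V,\sigma_W)$, the two MI-LCQP problems are identical up to the relabeling $\sigma_W$ of the variables, so $\sigma_W$ maps the optimal solution set of $\hat{G}_{\textup{MI-LCQP}}$ bijectively onto that of $G_{\textup{MI-LCQP}}$. Since both graphs lie in $\calG_{\textup{sol}}^{m,n}$, these sets are nonempty, and since both are unfoldable, $\Phi_{\textup{sol}}$ is well-defined as the minimal element under the permutation-equivariant total ordering constructed as in \cite{chen2022representing-milp}*{Appendix C}. Equivariance of that ordering means the minimal element is carried to the minimal element, giving $\Phi_{\textup{sol}}(G_{\textup{MI-LCQP}}) = \sigma_W(\Phi_{\textup{sol}}(\hat{G}_{\textup{MI-LCQP}}))$, which is the first assertion.

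For the stronger conclusion under $G_{\textup{MI-LCQP}}\stackrel{W}{\sim}\hat{G}_{\textup{MI-LCQP}}$, I would pin down $\sigma_W$. Unfoldability says precisely that all variable nodes carry pairwise distinct stable WL colors. The matching above identifies the color $C_j^{L,W}$ of variable $j$ in $G_{\textup{MI-LCQP}}$ with the color $\hat{C}_{\sigma_W^{-1}(j)}^{L,W}$ of variable $\sigma_W^{-1}(j)$ in $\hat{G}_{\textup{MI-LCQP}}$. The hypothesis $C_j^{L,W}=\hat{C}_j^{L,W}$ for every $j$ then yields $\hat{C}_{\sigma_W^{-1}(j)}^{L,W}=\hat{C}_j^{L,W}$, and distinctness of colors forces $\sigma_W=\mathrm{id}$. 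Substituting $\sigma_W=\mathrm{id}$ into the first assertion gives $\Phi_{\textup{sol}}(G_{\textup{MI-LCQP}})=\Phi_{\textup{sol}}(\hat{G}_{\textup{MI-LCQP}})$.

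The step I expect to be the main obstacle is making the permutation-equivariance of $\Phi_{\textup{sol}}$ fully rigorous on the unfoldable class. In the mixed-integer setting the smallest-$\ell_2$-norm optimal solution need not be unique, so $\Phi_{\textup{sol}}$ is only single-valued after fixing a tie-breaking rule, and one must verify that this rule commutes with variable relabeling. Here unfoldability is used essentially: the distinct variable colors provide a canonical ordering of the variables that makes the tie-breaking total order well-defined and equivariant. I would check that the construction of \cite{chen2022representing-milp}*{Appendix C} extends to MI-LCQP without change, the only new ingredient being the quadratic term $Q$, which is invariant under the same relabeling and therefore affects neither the optimal solution set nor the ordering.
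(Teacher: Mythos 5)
Your proposal is correct and follows essentially the same route as the paper: unfoldability implies MP-tractability via Proposition~\ref{prop:unfoldable2MPtractable}, the argument of Theorem~\ref{thm:mi-lcqp-sameWL2feasobj} then yields an isomorphism $(\sigma_V,\sigma_W)$, and equivariance of the total ordering from \cite{chen2022representing-milp}*{Appendix C} transfers this to $\Phi_{\textup{sol}}$, with $\stackrel{W}{\sim}$ plus distinct variable colors forcing $\sigma_W=\mathrm{id}$. Your write-up is in fact more explicit than the paper's (which compresses the last step to ``identical after a permutation on $V$ with the labeling in $W$ unchanged''), and the concern you flag about well-definedness of $\Phi_{\textup{sol}}$ is exactly what the paper's appeal to the unfoldability-based ordering is meant to settle.
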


\begin{proof}
    By Proposition~\ref{prop:solvable2GNNanalyzable}, $G_{\textup{MI-LCQP}}$ and $\hat{G}_{\textup{MI-LCQP}}$ are also GNN-analyzable, and hence. If $G_{\textup{MI-LCQP}}\sim\hat{G}_{\textup{MI-LCQP}}$, then they are isomorphic by the analysis in the proof of Theorem~\ref{thm:mi-lcqp-sameWL2feasobj}, and hence, $\Phi_{\textup{sol}}(G_{\textup{MI-LCQP}}) =\sigma_W( \Phi_{\textup{sol}}(\hat{G}_{\textup{MI-LCQP}}))$ for some permutation $\sigma_W\in S_n$. If $G_{\textup{MI-LCQP}}\ensuremath{\stackrel{W}{\sim}}\hat{G}_{\textup{MI-LCQP}}$, then the same analysis in the proof of Theorem~\ref{thm:mi-lcqp-sameWL2feasobj} applies and these two graphs are identical after applying some permutation on $V$ with the labeling in $W$ unchanged, which guarantees $\Phi_{\textup{sol}}(G_{\textup{MI-LCQP}}) = \Phi_{\textup{sol}}(\hat{G}_{\textup{MI-LCQP}})$.
\end{proof}

As discussed in the main test before Definition~\ref{def:GNN-analyzable}, for GNN-analyzable and GNN-solvable MI-LCQP instances, vertices with essentially different properties will be distinguished by WL test or GNNs. For such GNN-friendly instances, GNNs have provably strong expressive power. In fact, with Theorem~\ref{thm:mi-lcqp-sameWL2feasobj} and Theorem~\ref{thm:mi-lcqp-sameWL2sol}, one can prove Theorem~\ref{thm:feas-mi-lcqp} and Theorem~\ref{thm:sol-mi-lcqp}, via a similar argument when we prove Theorems \ref{thm:obj-lcqp} and \ref{thm:sol-lcqp}.

\begin{proof}[Proof of Theorem~\ref{thm:feas-mi-lcqp}]
    Theorem~\ref{thm:feas-mi-lcqp} can be proved based on Theorem~\ref{thm:mi-lcqp-sameWL2feasobj}. We only present the proof of \eqref{eq:feas-milcqp} since \eqref{eq1:obj-milcqp} and \eqref{eq2:obj-milcqp} can be proved with almost the same lines.
    The separation power of GNNs is equivalent to that of the WL test, i.e., for any $G_{\textup{MI-LCQP}},\hat{G}_{\textup{MI-LCQP}}\in\calG_{\textup{MI-LCQP}}^{m,n}$,
    \begin{equation}\label{eq:milcqp_GNN_WL}
        G_{\textup{MI-LCQP}}\sim\hat{G}_{\textup{MI-LCQP}} \Longleftrightarrow F(G_{\textup{MI-LCQP}}) = F(\hat{G}_{\textup{MI-LCQP}}),\ \forall~F\in\calF_{\textup{MI-LCQP}},
    \end{equation}
    which combined with Theorem~\ref{thm:mi-lcqp-sameWL2feasobj} leads to that for $G_{\textup{MI-LCQP}},\hat{G}_{\textup{MI-LCQP}}\in\calG_{\textup{analyzable}}^{m,n}$,
    \begin{equation}\label{eq:milcqp-sameGNN2feas}
        F(G_{\textup{MI-LCQP}}) = F(\hat{G}_{\textup{MI-LCQP}}),\ \forall~F\in\calF_{\textup{MI-LCQP}} \implies \Phi_{\textup{feas}}(G_{\textup{MI-LCQP}}) = \Phi_{\textup{feas}}(\hat{G}_{\textup{MI-LCQP}}),
    \end{equation}
    indicating that the separation power of $\calF_{\textup{MI-LCQP}}$ is upper bounded by that of $\Phi_{\textup{feas}}$ on $\calG_{\textup{analyzable}}^{m,n}$.
    
    The function $\Phi_{\textup{feas}}:\calG_{\textup{analyzable}}^{m,n}\to \{0,1\}\subset \bR$ is measurable, i.e., $\Phi_{\textup{feas}}^{-1}(0)$ and $\Phi_{\textup{feas}}^{-1}(0)$ are both Lebesgue measurable, and hence by Lusin's theorem, there exists a compact and permutation-invariant subspace $X\subset \calG_{\textup{analyzable}}^{m,n}$ such that $\bP[\calG_{\textup{analyzable}}^{m,n}\backslash X]<\epsilon$ and that $\Phi_{\textup{feas}}$ restricted on $X$ is continuous. Therefore, by the Stone-Weierstrass theorem and \eqref{eq:milcqp-sameGNN2feas}, we have that there exists $F\in \calF_{\textup{MI-LCQP}}$ satisfying
    \begin{equation*}
        \sup_{G_{\textup{MI-LCQP}}\in X} \left|F(G_{\textup{MI-LCQP}}) - \Phi_{\textup{feas}}(G_{\textup{MI-LCQP}})\right| <\frac{1}{2}
    \end{equation*}
    Therefore, it holds that
    \begin{equation*}
        \bP\left[\mathbb{I}_{F(G_{\textup{MI-LCQP}})>\frac{1}{2}}\neq\Phi_{\textup{feas}}(G_{\textup{MI-LCQP}})\right] \leq \bP\left[\calG_{\textup{analyzable}}^{m,n}\backslash X\right] < \epsilon,
    \end{equation*}
    which proves \eqref{eq:feas-milcqp}. 
\end{proof}

\begin{proof}[Proof of Theorem~\ref{thm:sol-mi-lcqp}]
    In addition to \eqref{eq:milcqp_GNN_WL}, it can be proved that the separation powers of GNNs and the WL test are equivalent in the following sense:
    \begin{itemize}
        \item For any $G_{\textup{MI-LCQP}},\hat{G}_{\textup{MI-LCQP}}\in\calG_{\textup{MI-LCQP}}^{m,n}$, $G_{\textup{MI-LCQP}}\ensuremath{\stackrel{W}{\sim}}\hat{G}_{\textup{MI-LCQP}}$ if and only if $F_W(G_{\textup{MI-LCQP}}) = F_W(\hat{G}_{\textup{MI-LCQP}})$ for all $F_W\in\calF_{\textup{MI-LCQP}}^W$.
        \item For any $G_{\textup{MI-LCQP}}\in\calG_{\textup{MI-LCQP}}^{m,n}$ and any $j,j'\in W$, $C_j^{L,W}=C_{j'}^{L,W}$ for any $L\in\mathbb{N}$ and any hash function if and only if $F_W(G_{\textup{MI-LCQP}})_j = F_W(G_{\textup{MI-LCQP}})_{j'}$ for all $F_W\in\calF_{\textup{MI-LCQP}}^W$.
    \end{itemize}
    Therefore, with Theorem~\ref{thm:mi-lcqp-sameWL2sol}, the separation power of GNNs is upper bounded by that of $\Phi_{\textup{sol}}$ on $\calG_{\textup{sol}}^{m,n}\cap \calG_{\textup{solvable}}^{m,n}$ in the following sense: for any $G_{\textup{MI-LCQP}},\hat{G}_{\textup{MI-LCQP}}\in\calG_{\textup{sol}}^{m,n}\cap \calG_{\textup{solvable}}^{m,n}$,
    \begin{itemize}
        \item $F(G_{\textup{MI-LCQP}}) = F(\hat{G}_{\textup{MI-LCQP}})$ for all $F\in\calF_{\textup{MI-LCQP}}$ implies that $\Phi_{\textup{sol}}(G_{\textup{MI-LCQP}}) = \sigma_W(\Phi_{\textup{sol}}(\hat{G}_{\textup{MI-LCQP}}))$ for some $\sigma_W\in S_n$.
        \item $F_W(G_{\textup{MI-LCQP}}) = F_W(\hat{G}_{\textup{MI-LCQP}})$ for all $F_W\in\calF_{\textup{MI-LCQP}}^W$ implies that $\Phi_{\textup{sol}}(G_{\textup{MI-LCQP}}) = \Phi_{\textup{sol}}(\hat{G}_{\textup{MI-LCQP}})$.
        \item $F_W(G_{\textup{MI-LCQP}})_j = F_W(G_{\textup{MI-LCQP}})_{j'}$ for all $F_W\in\calF_{\textup{LCQP}}^W$ implies  that $j=j'$ and $\Phi_{\textup{sol}}(G_{\textup{MI-LCQP}})_j = \Phi_{\textup{sol}}(G_{\textup{MI-LCQP}})_{j'}$.
    \end{itemize}

    The optimal solution mapping $\Phi_{\textup{sol}}:\calG_{\textup{sol}}^{m,n}\cap \calG_{\textup{solvable}}^{m,n}\to \bR^n$ is measurable, i.e., $\Phi_{\textup{sol}}^{-1}(A)$ is Lebesgue measurable for any Borel measurable $A\subset \bR^n$, and hence by Lusin's theorem, there exists a compact and permutation-invariant subspace $X\subset \calG_{\textup{sol}}^{m,n}\cap \calG_{\textup{solvable}}^{m,n}$ such that $\bP[\calG_{\textup{sol}}^{m,n}\cap \calG_{\textup{solvable}}^{m,n}\backslash X]<\epsilon$ and that $\Phi_{\textup{sol}}$ restricted on $X$ is continuous. Therefore, applying the generalized Stone-Weierstrass theorem for equivariant functions \cite{azizian2020expressive}*{Theorem 22}, we know that there exists $F_W\in \calF_{\textup{MI-LCQP}}^W$ satisfying
    \begin{equation*}
        \sup_{G_{\textup{MI-LCQP}}\in X} \left\|F_W(G_{\textup{MI-LCQP}}) - \Phi_{\textup{sol}}(G_{\textup{MI-LCQP}})\right\| <\delta.
    \end{equation*}
    Therefore, it holds that
    \begin{equation*}
        \bP\left[\|F_W(G_{\textup{MI-LCQP}}) - \Phi_{\textup{sol}}(G_{\textup{MI-LCQP}})\|>\delta\right] \leq \bP\left[\calG_{\textup{sol}}^{m,n}\cap \calG_{\textup{solvable}}^{m,n}\backslash X\right] < \epsilon,
    \end{equation*}
    which completes the proof.
\end{proof}

\textbf{Discussions on various GNN architectures:}
In our work we use the sum aggregation, and all results are still valid for the weighted average aggregation. In particular, all our proofs (such as the proof of Theorem \ref{thm:lcqp-sameWL2obj}) hold almost verbatimly for the average aggregation. The attention aggregation \cite{velivckovic2017graph} has stronger separation power, which implies that all universal approximation results still hold. Moreover, all the counter examples for MI-LCQPs work for every aggregation approach, since the color refinement in Algorithm 1 is implemented on multisets, with separation power stronger than or equal to all aggregations of neighboring information.

\section{Characterization of GNN-analyzability and GNN-solvability}
\label{sec:characterization-GNN-solvable}

This section provides a detailed discussion of these conditions for MI-LCQP graphs, as defined in Section~\ref{sec:practical-discussions}.

\subsection{Relationship between GNN-analyzability and GNN-solvability}

We first prove that GNN-solvability implies GNN-analyzability but they are not equivalent.

\begin{proposition}\label{prop:solvable2GNNanalyzable}
    If $G_{\textup{MI-LCQP}}\in \calG_{\textup{MI-LCQP}}^{m,n}$ is GNN-solvable, then it is also GNN-analyzable.
\end{proposition}

\begin{proof}
    Let $(\calI,\calJ)$ be the final stable partition of $V\cup W$ generated by WL test on $G_{\textup{MI-LCQP}}$ without collision, where $\calI=\{I_1,I_2,\dots,I_s\}$ is a partition of $V=\{1,2,\dots,m\}$ and $\calJ=\{J_1,J_2,\dots,J_t\}$ is a partition of $W = \{1,2,\dots,n\}$. Since we assume that $G_{\textup{MI-LCQP}}$ is GNN-insolvable, we have $t=n$ and $|J_1| = |J_2| = \cdots = |J_n| = 1$. Then for any $q,q'\in\{1,2,\dots,t\}$, the submatrix $(Q_{jj'})_{j\in J_q,j'\in J_{q'}}$ is a $1\times 1$ matrix and hence has identical entries.
    
    Consider any $p\in\{1,2,\dots,s\}$ and $q\in \{1,2,\dots,t\}$. Suppose that the color positioning is stabilized at the $L$-th iteration of WL test. Then for any $i,i'\in I_p$, we have
    \begin{align*}
        & C_i^{L,V} = C_{i'}^{L,V} \\
        \implies & \left\{\left\{\text{hash}\left(C_j^{L-1, W}, A_{ij} \right):j\in\calN_i^W\right\}\right\} = \left\{\left\{\text{hash}\left(C_j^{L-1, W}, A_{i'j} \right):j\in\calN_i^W\right\}\right\} \\
        \implies & \left\{\left\{A_{ij} : j\in J_q\right\}\right\} = \left\{\left\{A_{i'j} : j\in J_q\right\}\right\},
    \end{align*}
    which implies that the submatrix $(A_{ij})_{i\in I_p,j\in J_q}$ has identical entries since $|J_q| = 1$. Therefore, $G_{\textup{MI-LCQP}}$ is GNN-analyzable.
\end{proof}

\begin{proposition}\label{prop:solvable-neq-GNNanalyzable}
    There exist GNN-analyzable instances in $\calG_{\textup{MI-LCQP}}^{m,n}$ that are not GNN-solvable.
\end{proposition}

\begin{proof}
\begin{figure}[htb!]
    \centering
    \begin{tikzpicture}[every node/.style={circle, draw, minimum size=8mm}]

\definecolor{modernblue}{RGB}{52, 152, 255}   
\definecolor{modernred}{RGB}{255, 76, 60}     
\definecolor{moderngreen}{RGB}{50, 200, 50}  
\definecolor{moderngray}{RGB}{245,245,245} 

\def\skip{2.8}

    \node[fill=moderngreen, minimum size=0mm] (C1) at (0,1) {$v_1$};
    \node[fill=moderngreen, minimum size=0mm] (C2) at (0,0) {$v_2$};
    
    \node[fill=modernblue, minimum size=0mm] (V1) at (1.2,1.5) {$w_1$};
    \node[fill=modernblue, minimum size=0mm] (V2) at (1.2,0.5) {$w_2$};
    \node[fill=modernblue, minimum size=0mm] (V3) at (1.2,-0.5) {$w_3$};
    
    \draw (C1) -- (V1);
    \draw (C1) -- (V3);
    \draw (C2) -- (V1);
    \draw[dashed] (C2) -- (V2);
    \draw (C2) -- (V3);
    \draw[brown] (V2) to[loop above, out=-50, in=50, looseness=4] (V2);


    \node[fill=moderngreen, minimum size=5mm] (0C1) at (\skip + 0,1) {};
    \node[fill=moderngreen, minimum size=5mm] (0C2) at (\skip + 0,0) {};
    
    \node[fill=modernblue, minimum size=5mm] (0V1) at (\skip + 1,1.5) {};
    \node[fill=modernblue, minimum size=5mm] (0V2) at (\skip + 1,0.7) {};
    \node[fill=modernblue, minimum size=5mm] (0V3) at (\skip + 1,-0.1) {};
    
    \draw (0C1) -- (0V1);
    \draw (0C1) -- (0V3);
    \draw (0C2) -- (0V1);
    \draw[dashed] (0C2) -- (0V2);
    \draw (0C2) -- (0V3);
    \draw[brown] (0V2) to[loop above, out=-50, in=50, looseness=4] (0V2);

    \node[draw=none, rectangle, rounded corners, minimum size=0mm] (l0) at (\skip + 0.5, -0.5) {Initialization};


    \node[fill=yellow, minimum size=5mm] (1C1) at (1.7*\skip + 0,1) {};
    \node[fill=moderngreen, minimum size=5mm] (1C2) at (1.7*\skip + 0,0) {};
    
    \node[fill=modernblue, minimum size=5mm] (1V1) at (1.7*\skip + 1,1.5) {};
    \node[fill=magenta, minimum size=5mm] (1V2) at (1.7*\skip + 1,0.7) {};
    \node[fill=modernblue, minimum size=5mm] (1V3) at (1.7*\skip + 1,-0.1) {};
    
    \draw (1C1) -- (1V1);
    \draw (1C1) -- (1V3);
    \draw (1C2) -- (1V1);
    \draw[dashed] (1C2) -- (1V2);
    \draw (1C2) -- (1V3);
    \draw[brown] (1V2) to[loop above, out=-50, in=50, looseness=4] (1V2);

    \node[draw=none, rectangle, rounded corners, minimum size=0mm] at (1.7*\skip + 0.5, -0.5) {$l=1$};


    \node[fill=yellow, minimum size=5mm] (2C1) at (2.4*\skip + 0,1) {};
    \node[fill=moderngreen, minimum size=5mm] (2C2) at (2.4*\skip + 0,0) {};
    
    \node[fill=modernblue, minimum size=5mm] (2V1) at (2.4*\skip + 1,1.5) {};
    \node[fill=magenta, minimum size=5mm] (2V2) at (2.4*\skip + 1,0.7) {};
    \node[fill=modernblue, minimum size=5mm] (2V3) at (2.4*\skip + 1,-0.1) {};
    
    \draw (2C1) -- (2V1);
    \draw (2C1) -- (2V3);
    \draw (2C2) -- (2V1);
    \draw[dashed] (2C2) -- (2V2);
    \draw (2C2) -- (2V3);
    \draw[brown] (2V2) to[loop above, out=-50, in=50, looseness=4] (2V2);

    \node[draw=none, rectangle, rounded corners] (l2) at (2.4*\skip + 0.5, -0.5) {$l=2$};
    \node[draw=none, rectangle, rounded corners] (corner) at (2.4*\skip + 1.2, -0.5) {};


    \node[fit=(0C1) (0V1) (l0) (corner) (2V3), dashed, draw, rectangle, rounded corners] (algo1) {};
    \node[below, draw=none, rectangle, rounded corners, minimum size=0mm] at (algo1.south) {The WL test (Algorithm \ref{alg:WL-mi-lcqp})};

    \node[draw=none, rectangle, rounded corners] (graphlabel) at (0.6, -1.3) {MI-LCQP-graph};

    \node[align=left, rounded corners, fill=moderngray, dashed, rectangle, rounded corners] (obj) at (-3,0.5) {
            \begin{tabular}{ll}
           $\min$ & $ \frac{1}{2}x_2^2 + x_1 + x_2 + x_3$  \\[3pt] 
          s.t. &  $x_1 + x_3 \leq 1 $ \\[3pt] 
           &  $x_1 - x_2 + x_3\leq 1 $ \\[3pt] 
           &  $0 \leq x_1,x_2,x_3\leq 1$\\[3pt] 
           &  $x_1,x_2,x_3 \in \bZ$
            \end{tabular}
        };

    \node[draw=none, rectangle, rounded corners] (milp) at (-3, -1.3) {MI-LCQP problem};
\end{tikzpicture}
    \caption{Example for proving Proposition~\ref{prop:solvable-neq-GNNanalyzable}}
    \label{fig:wl-test}
\end{figure}
    Consider the example in Figure~\ref{fig:wl-test}, for which the final stable partition is $\calI = \{\!\!\{1\},\{2\}\!\!\}$ and $\calJ=\{\!\!\{1,3\},\{2\}\!\!\}$. It is not GNN-solvable since the class $\{1,3\}$ in $\calJ$ has two elements. However, it is GNN-analyzable since $A_{11} = A_{13} = 1$ and $A_{21} = A_{23} = 1$.
\end{proof}

\subsection{Frequency of GNN-analyzability and GNN-solvability}
\label{sec:frequency-GNN-analyze-solvable}

It can be proved that a generic MI-LCQP-graph in $\calG_{\textup{MI-LCQP}}^{m,n}$ is GNN-solvable almost surely under some mild conditions. Intuitively, if $c\in \bR^n$ is randomly sampled from a continuous distribution with density, then almost surely it holds that $x_j\neq x_{j'}$ for any $j\neq j'$, which implies that the vertices in $W$ have different colors initially and always, if there are no collisions of hash functions.

\begin{proposition}\label{prop:generic-GNN-solvable}
    Let $\bP$ be a probability measure over $\calG_{\textup{MI-LCQP}}$ such that the marginal distribution $\bP_c$ of $c\in\bR^n$ has density. Then $\bP[\calG_{\textup{MI-LCQP}} \in \calG_{\textup{solvable}}^{m,n}] = 1$.
\end{proposition}

\begin{proof}
    Since the marginal distribution $\bP_c$ has density, almost surely we have for any $j\neq j'$ that
    \begin{equation*}
        c_j\neq c_{j'} \implies C_j^{0,W}\neq C_{j'}^{0,W} \implies C_j^{l,W}\neq C_{j'}^{l,W},\quad\forall~l\geq 0,
    \end{equation*}
    where we assumed that no collisions happen in hash functions. Therefore, any $j,j'\in W$ with $j\neq j'$ are not the in same class of the final stable partition $(\calI,\calJ)$, which proves the GNN-solvability.  
\end{proof}

As a direct corollary of Proposition~\ref{prop:solvable2GNNanalyzable} and Proposition~\ref{prop:generic-GNN-solvable}, a generic MI-LCQP-graph in $\calG_{\textup{MI-LCQP}}^{m,n}$ must also be GNN-analyzable.

\begin{corollary}
    Let $\bP$ be a probability measure over $\calG_{\textup{MI-LCQP}}$ such that the marginal distribution $\bP_c$ of $c\in\bR^n$ has density. Then $\bP[\calG_{\textup{MI-LCQP}} \in \calG_{\textup{analyzable}}^{m,n}] = 1$.
\end{corollary}

\textbf{Frequency of GNN-friendly instances in practice.}
While MI-LCQPs are almost surely GNN-solvable under some assumption, however, in practice, particularly with manually designed instances (as opposed to randomly generated ones), hard instances (GNN-insolvable, or even inanalyzable) does occur. We performed experiments on QPLIB (https://qplib.zib.de/). In this dataset, 19 out of 96 binary-variable linear constraint instances are GNN-insolvable. Furthermore, when coefficients are quantized (which increases the level of symmetry in the dataset), the ratio of GNN-insolvable instances increases. (shown in the table below)

Quantization refers to rounding continuous or high-precision values to discrete levels. For example, rounding coefficients of QP instances to the nearest integer reduces precision but can simplify analysis and potentially uncover additional properties. In our study, we examine instances at different quantization step sizes (0.1, 0.5, and 1).

While GNN-insolvable instances are generally challenging for GNNs, a significant proportion of these hard instances are GNN-analyzable. For example, with a quantization step size of 1 (rounding coefficients to integers), 36 out of 63 GNN-insolvable instances are GNN-analyzable. According to our theorems, GNNs can at least predict feasibility and boundedness (objective value) for such instances. Detailed results are as follows: 
\begin{table}[h]
\centering
\caption{Frequency of GNN-analyzability and solvability}
\resizebox{\textwidth}{!}{%
\begin{tabular}{l|c|c|c|c}
\hline
                                & Original data set & Step size 0.1 & Step size 0.5 & Step size 1 \\ \hline\hline
Total                           & 96       & 96            & 96            & 96          \\ \hline
GNN-insolvable                        & 19       & 23            & 42            & 63          \\ \hline
GNN-insolvable but analyzable & 0        & 1             & 19            & 36          \\ \hline
\end{tabular}
}
\end{table}

\textbf{How to handle bad instances?} If a dataset contains a significant proportion of GNN-insolvable or GNN-inanalyzable instances (highly symmetric structures), we suggest two potential approaches: (I) Adding features: Introducing additional features can differentiate nodes in symmetric graphs. For example, adding a random feature to nodes with identical attributes ensures they are no longer symmetric \cite{sato2021random}. (II) Using higher-order GNNs: These models can distinguish nodes that standard message-passing GNNs cannot, enhancing their expressive power \cite{morris2019weisfeiler}. 
In particular, we highlight the potential of $k$-hop GNNs \cite{feng2022powerful}, which have greater expressive power than message-passing GNNs (MP-GNNs). In the proof of Proposition 4.2 (Appendix B), we construct MI-LCQP instances with different optimal objective values that MP-GNNs cannot distinguish. However, 3-hop GNNs are able to distinguish them. When these instances are scaled to graphs with 10 variables and 10 constraints (resulting in one 20-node connected graph vs. two disjoint 10-node graphs), 3-hop GNNs fail while 5-hop GNNs succeed. This observation is investigated in \cite{chen2025expressive} on general graphs and suggests that increasing $k$ reduces the number of indistinguishable cases and may enhance solvability.

\section{Extension to quadratically constrained quadratic programs}
\label{sec:qcqp}

A general quadratically constrained quadratic programming (QCQP) is given by
\begin{equation}\label{eq:QCQP}
	\min_{x\in\bR^n} ~~ \frac{1}{2}x^\top Q x + c^\top x,\quad \textup{s.t.} ~~ \frac{1}{2} x^\top P_i x + a_i^\top x \leq b_i,\ 1\leq i\leq m,\ l\leq x\leq u,
\end{equation}
where $Q, P_i\in\bR^{n\times n}$ are symmetric, $c,a_i\in\bR^n$, $b_i\in\bR$, $l\in(\bR\cup\{-\infty\})^n$, and $u\in(\bR\cup\{+\infty\})^n$. We denote $A = \begin{bmatrix} a_1 & a_2 & \cdots & a_m \end{bmatrix}^\top\in\bR^{m\times n}$ for consistent notation with \eqref{eq:LCQP}.

\subsection{Graph representation and GNNs for QCQPs}

\paragraph{Graph representation for QCQPs} The QCQP-graph for representing \eqref{eq:QCQP} is based on the LCQP-graph introduced in Section~\ref{sec:pre}. More specifically, The QCQP graph can be constructed by incorporating the information from $P=(P_1,P_2,\dots,P_m)$ into the LCQP graph:
\begin{itemize}
    \item The multiset $\{\!\!\{i,j,j'\}\!\!\}$ is viewed as a hyperedge with weight $(H_i)_{jj'}$ for each $i\in V$ and $j,j'\in W$, where $j=j'$ is allowed. 
\end{itemize}
We use $\calG_{\textup{QCQP}}^{m,n}$ to denote the set of all QCQP-graphs with $m$ constraints and $n$ variables.

\paragraph{GNNs for solving QCQP}
Note GNNs on LCQP-graphs that iterate vertex features with message-passing mechanism, which does not naturally adapt to the hyperedges in QCQP graphs. Thus, one idea is to add edge features for each pair $(i,j),\ i\in V,j\in W$. We describe the GNN architecture for QCQP tasks in detail as follows.

The initial layer computes node features $s_i^0,t_j^0$ and edge features $e_{ij}^0$ via embedding:
\begin{itemize}
    \item $s_i^0 = f_0^V(v_i)$ for $i\in V$,
    \item $t_j^0 = f_0^W(w_j)$ for $j\in W$, and
    \item $e_{ij}^0 = f_0^E(A_{ij})$ for $i\in V, j\in W$.
\end{itemize}
The $l$-th message-passing layers ($l = 1,2,\dots, L$) update the node features using neighbors' information:
\begin{itemize}
    \item $s_i^l = f_l^V\big(s_i^{l-1},\sum_{j\in W} g_l^V(t_j^{l-1},e_{ij}^{l-1})\big)$ for $i\in V$,
    \item $t_j^l = f_l^W\big(t_j^{l-1},\sum_{i\in V} g_l^W(s_i^{l-1},e_{ij}^{l-1}),\sum_{j'\in W} Q_{j j'} g_l^Q(t_{j'}^{l-1})\big)$ for $j\in W$, and
    \item $e_{ij}^l = f_l^E\big(e_{ij}^{l-1}, \sum_{j'\in W} (P_i)_{jj'} g_l^E(t_{j'}^{l-1})\big)$ for $i\in V,j\in W$.
\end{itemize}
Finally, there are two types of output layers. The graph-level output computes a single real number for the whole graph
\begin{itemize}
    \item $y = r_1\big(\sum_{i\in V}s_{i}^L, \sum_{j\in W} t_{j}^L\big)\in \bR$,
\end{itemize}
and the node-level output computes a vector $y\in\bR^n$ with the $j$-th entry being
\begin{itemize}
    \item $y_j = r_2\big(\sum_{i\in V}s_{i}^L, \sum_{j\in W} t_{j}^L, t_j^L\big)$.
\end{itemize}
We denote $\calF_{\textup{QCQP}}$ (or $\calF_{\textup{QCQP}}^W$) as the collection of all message-passing GNNs with graph-level (or node-level) outputs constructed by continuous $f_0^V,f_0^W,f_0^E$, $f_l^V,f_l^W,f_l^E,g_l^V,g_l^W,g_l^E,g_l^Q\ (1\leq l\leq L)$, and $r_1$ (or $r_2$).

\subsection{Universal Approximation of GNNs for QCQPs}

For QCQPs, we still consider the three target mappings, i.e., the feasible mapping $\Phi_{\text{feas}}:\calG_{\textup{QCQP}}^{m,n}\to \{0,1\}$, the optimal objective value mapping $\Phi_{\text{obj}}:\calG_{\textup{QCQP}}^{m,n}\to \bR\cup\{\pm\infty\}$, and the optimal solution mapping $\Phi_{\text{obj}}$ that computes the unique optimal solution with the smallest $\ell_2$-norm of feasible and bounded QCQPs with $Q, P_i\succeq 0,\ i=1,2,\dots,m$. The main results that GNNs can universally approximate these three target mappings are stated as follows.

\begin{assumption}\label{asp:prob-qcqp}
	$\bP$ is a Borel regular probability measure on $\calG^{m,n}_{\textup{QCQP}}$\footnote{The space $\calG^{m,n}_{\textup{QCQP}}$ is equipped with the subspace topology induced from the product space $\big\{ (A,b,c,Q,P,l,u,\circ) : A\in \bR^{m\times n}, b\in\bR^m, c\in \bR^n, Q\in\bR^{n\times n},P\in (\bR^{n\times n})^m,l\in (\bR\cup\{-\infty\})^n,u\in (\bR\cup\{+\infty\})^n \big\}$, where all Euclidean spaces have standard Eudlidean topologies, discrete spaces $\{-\infty\}$ and $\{+\infty\}$ have the discrete topologies, and all unions are disjoint unions.}.
\end{assumption}

\begin{theorem}\label{thm:feas-qcqp}
    Let $\bP$ be a probability measure
    satisfying Assumption~\ref{asp:prob-qcqp} and $\bP[Q\succeq 0] = \bP[P_i\succeq 0] = 1,\ i=1,2,\dots,m$. For any $\epsilon>0$, there exists $F\in\calF_{\textup{MI-LCQP}}$ such that
    \begin{equation*}
        \bP\big[\mathbb{I}_{F(G_{\textup{QCQP}})>\frac{1}{2}}\neq\Phi_{\textup{feas}}(G_{\textup{QCQP}})\big] < \epsilon.
    \end{equation*}
\end{theorem}

\begin{theorem}\label{thm:obj-qcqp}
    Let $\bP$ be a probability measure
    satisfying Assumption~\ref{asp:prob-qcqp} and $\bP[Q\succeq 0] = \bP[P_i\succeq 0] = 1,\ i=1,2,\dots,m$. For any $\epsilon>0$, there exists $F_1\in\calF_{\textup{QCQP}}$ such that
    \begin{equation*}
        \bP\big[\mathbb{I}_{F_1(G_{\textup{QCQP}})>\frac{1}{2}}\neq \mathbb{I}_{\Phi_{\textup{obj}}(G_{\textup{QCQP}}) \in \bR}\big] < \epsilon.
    \end{equation*}    
    Additionally, if $\bP[\Phi_{\textup{obj}}(G_{\textup{QCQP}}) \in \bR] = 1$, for any $\epsilon,\delta>0$, there exists $F_2\in\calF_{\textup{QCQP}}$ such that
    \begin{equation*}
        \bP\left[|F_2(G_{\textup{QCQP}}) - \Phi_{\textup{obj}}(G_{\textup{QCQP}})| > \delta\right]<\epsilon.
    \end{equation*}
\end{theorem}

\begin{theorem}\label{thm:sol-qcqp}
    Let $\bP$ be a probability measure
    satisfying Assumption~\ref{asp:prob-qcqp} and $\bP[Q\succeq 0] = \bP[P_i\succeq 0] = 1,\ i=1,2,\dots,m$. For any $\epsilon,\delta>0$, there exists $F_W\in \calF_{\textup{QCQP}}^W$ such that
    \begin{equation*}
        \bP\left[\|F_W(G_{\textup{QCQP}}) - \Phi_{\textup{sol}}(G_{\textup{QCQP}})\|>\delta\right] < \epsilon.
    \end{equation*}
\end{theorem}

Similarly, the proofs of Theorem~\ref{thm:feas-qcqp}, \ref{thm:obj-qcqp}, and \ref{thm:sol-qcqp} are based on showing that the WL test associated with the GNN classes $\calF_{\textup{QCQP}}$ and $\calF_{\textup{QCQP}}^W$ have sufficiently strong separation power to distinguish QCQP problems with different properties. We will present and prove such separation results (Theorem~\ref{thm:qcqp_same_WL2obj}, Theorem~\ref{thm:qcqp-sameWL2sol}, and Corollary~\ref{cor:qcqp-sameWL2sol}) in the rest of this subsection, and do not repeat the same arguments as described in the Proof of Theorem~\ref{thm:obj-lcqp} and Theorem~\ref{thm:sol-lcqp}.

We state in Algorithm~\ref{alg:WL-qcqp} the WL test for QCQPs. For QCQP-graphs $G_{\textup{QCQP}},\hat{G}_{\textup{QCQP}}\in\calG_{\textup{QCQP}}^{m,n}$,
\begin{enumerate}
    \item We say $G_{\textup{QCQP}}\sim\hat{G}_{\textup{QCQP}}$ if $\{\!\!\{ C_i^{L,V} \}\!\!\}_{i=0}^m = \{\!\!\{ \hat{C}_i^{L,V} \}\!\!\}_{i=0}^m$ and $\{\!\!\{ C_j^{L,W} \}\!\!\}_{j=0}^n = \{\!\!\{ \hat{C}_j^{L,W} \}\!\!\}_{j=0}^n$ hold for all $L\in\mathbb{N}$ and all hash functions.
    \item We say $G_{\textup{QCQP}}\ensuremath{\stackrel{W}{\sim}}\hat{G}_{\textup{QCQP}}$ if $\{\!\!\{ C_i^{L,V} \}\!\!\}_{i=0}^m = \{\!\!\{ \hat{C}_i^{L,V} \}\!\!\}_{i=0}^m$ and $C_j^{L,W} = \hat{C}_j^{L,W},\ \forall~j\in\{1,2,\dots,n\}$, for all $L\in\mathbb{N}$ and all hash functions.
\end{enumerate}

\begin{algorithm}[htb!]
	\caption{The WL test for QCQP-Graphs}\label{alg:WL-qcqp}
	\begin{algorithmic}[1]
		\Require A QCQP-graph $G = (V,W,A,Q,P,H_V,H_W)$ and iteration limit $L>0$.
		\State Initialize with
            \begin{equation*}
                C_i^{0,V} = \text{hash}(v_i),\ C_j^{0,W} = \text{hash}(w_j),\ C_{ij}^{0,E} = \text{hash}(A_{ij}).
            \end{equation*}
		\For{$l=1,2,\cdots,L$}
		\State Refine the color
            \begin{align*}
                C_i^{l,V} & = \text{hash} \left(C_i^{l-1,V}, \sum_{j\in W} \text{hash}\left(C_j^{l-1,W}, C_{ij}^{l-1,E}\right)\right), \\
                C_j^{l,W} & = \text{hash} \left(C_j^{l-1,W}, \sum_{i\in V}\text{hash}\left(C_i^{l-1,V},C_{ij}^{l-1,E}\right),\sum_{j'\in W}Q_{j j'}\text{hash}(C_{j'}^{l-1,W})\right), \\
                C_{ij}^{l,E}& = \text{hash}\left(C_{ij}^{l-1,E}, \sum_{j'\in W}(P_i)_{jj'}\text{hash}(C_{j'}^{l-1,W})\right).
            \end{align*}
		\EndFor
		\State \textbf{return} The multisets containing all vertex colors $\{\!\!\{ C_i^{L,V} \}\!\!\}_{i=0}^m, \{\!\!\{ C_j^{L,W} \}\!\!\}_{j=0}^n$.
	\end{algorithmic}
\end{algorithm}

\begin{theorem}\label{thm:qcqp_same_WL2obj}
    Given $G_{\textup{QCQP}},\hat{G}_{\textup{QCQP}}\in\calG_{\textup{QCQP}}^{m,n}$ with $Q,\hat{Q},P_i,\hat{P}_i\succeq 0$ for all $i\in \{1,2,\dots,m\}$, if $G_{\textup{QCQP}}\sim \hat{G}_{\textup{QCQP}}$, then $\Phi_{\textup{feas}}(G_{\textup{QCQP}}) = \Phi_{\textup{feas}}(\hat{G}_{\textup{QCQP}})$ and $\Phi_{\textup{obj}}(G_{\textup{QCQP}}) = \Phi_{\textup{obj}}(\hat{G}_{\textup{QCQP}})$.
\end{theorem}

\begin{proof}
    We only prove $\Phi_{\textup{obj}}(G_{\textup{QCQP}}) = \Phi_{\textup{obj}}(\hat{G}_{\textup{QCQP}})$, and $\Phi_{\textup{feas}}(G_{\textup{QCQP}}) = \Phi_{\textup{feas}}(\hat{G}_{\textup{QCQP}})$ will be a direct corollary.
    
    Let $G_{\textup{QCQP}}$ and $\hat{G}_{\textup{QCQP}}$ be the QCQP-graph associated to \eqref{eq:QCQP} and
    \begin{equation}\label{eq:QCQP_hat}
        \min_{x\in\bR^n} ~~ \frac{1}{2}x^\top \hat{Q} x + \hat{c}^\top x,\quad \textup{s.t.} ~~ \frac{1}{2} x^\top \hat{P}_i x + \hat{a}_i^\top x \leq \hat{b}_i,\ 1\leq i\leq m,\ \hat{l}\leq x\leq \hat{u},
    \end{equation}
    Suppose that there are no collisions of hash functions or their linear combinations when applying the WL test to $G$ and $\hat{G}$ and there are no strict color refinements in the $L$-th iteration.
    Since $G$ and $\hat{G}$ are indistinguishable by the WL test, after performing some permutation, there exist $\mathcal{I} = \{I_1,I_2,\dots,I_s\}$ and $\mathcal{J} = \{J_1,J_2,\dots,J_t\}$ that are partitions of $\{1,2,\dots,m\}$ and $\{1,2,\dots,n\}$, respectively, such that the followings hold:
    \begin{itemize}
        \item $C_i^{L,V} = C_{i'}^{L,V}$ if and only if $i,i'\in I_p$ for some $p\in\{1,2,\dots,s\}$.
        \item $C_i^{L,V} = \hat{C}_{i'}^{L,V}$ if and only if $i,i'\in I_p$ for some $p\in\{1,2,\dots,s\}$.
        \item $\hat{C}_i^{L,V} = \hat{C}_{i'}^{L,V}$ if and only if $i,i'\in I_p$ for some $p\in\{1,2,\dots,s\}$.
        \item $C_j^{L,W} = C_{j'}^{L,W}$ if and only if $j,j'\in J_q$ for some $q\in\{1,2,\dots,t\}$.
        \item $C_j^{L,W} = \hat{C}_{j'}^{L,W}$ if and only if $j,j'\in J_q$ for some $q\in\{1,2,\dots,t\}$.
        \item $\hat{C}_j^{L,W} = \hat{C}_{j'}^{L,W}$ if and only if $j,j'\in J_q$ for some $q\in\{1,2,\dots,t\}$.
    \end{itemize}
    The followings hold by the same arguments as in the proof of Theorem~\ref{thm:lcqp-sameWL2obj}:
    \begin{itemize}
		\item $b_i=\hat{b}_i$ and is constant over $i\in I_p$, for any $p\in\{1,2,\dots,s\}$.
		\item $(c_j,l_j,u_j) = (\hat{c}_j,\hat{l}_j,\hat{u}_j)$ and is constant over $j\in J_q$ for any $q \in \{1,2,\dots,t\}$.
		\item For any $p\in\{1,2,\dots,s\}$ and $q \in \{1,2,\dots,t\}$, $\sum_{j\in J_q} A_{ij}= \sum_{j\in J_q} \hat{A}_{ij}$ and is constant over $i\in I_p$.
		\item For any $p\in\{1,2,\dots,s\}$ and $q \in \{1,2,\dots,t\}$, $\sum_{i\in I_p} A_{ij} = \sum_{i\in I_p} \hat{A}_{ij}$ and is constant over $j\in J_q$.
        \item For any $q,q'\in\{1,2,\dots,t\}$, $\sum_{j'\in J_{q'}} Q_{jj'} = \sum_{j'\in J_{q'}} \hat{Q}_{jj'}$ and is constant over $j\in J_q$. 
	\end{itemize}
    Fix $p\in \{1,2,\dots,s\}$ and $q,q'\in\{1,2,\dots,t\}$. For any $j,j'\in J_q$, we have
    \begin{align*}
        & C_j^{L,W} = C_{j'}^{L,W} \\
        \implies & \sum_{i\in V}\text{hash}\left(C_i^{L-1,V},C_{ij}^{L-1,E}\right) = \sum_{i\in V}\text{hash}\left(C_i^{L-1,V},C_{ij'}^{L-1,E}\right) \\
        \implies & \left\{\left\{C_{ij}^{L,E}:i\in I_p\right\}\right\} = \left\{\left\{C_{ij'}^{L,E}:i\in I_p\right\}\right\} \\
        \implies & \left\{\left\{\sum_{j''\in W}(P_i)_{jj''}\text{hash}(C_{j''}^{L-1,W}) : i\in I_p\right\}\right\} \\
        &\qquad\qquad = \left\{\left\{\sum_{j''\in W}(P_i)_{j'j''}\text{hash}(C_{j''}^{L-1,W}) : i\in I_p\right\}\right\} \\
        \implies &  \left\{\left\{\sum_{j''\in J_{q'}} (P_i)_{jj''}:i\in I_p\right\}\right\} = \left\{\left\{\sum_{j''\in J_{q'}}(P_i)_{j' j''}:i\in I_p\right\}\right\} \\
        \implies & \sum_{j''\in J_{q'}}\sum_{i\in I_p} (P_i)_{jj''} = \sum_{j''\in J_{q'}}\sum_{i\in I_p} (P_i)_{j'j''}.
    \end{align*}
    One can do a similar analysis for $C_j^{L,W} = \hat{C}_{j'}^{L,W}$ and $\hat{C}_j^{L,W} = \hat{C}_{j'}^{L,W}$ where $j,j'\in J_q$. This concludes that
    \begin{equation*}
        \sum_{j'\in J_{q'}}\sum_{i\in I_p} (P_i)_{jj'} = \sum_{j'\in J_{q'}}\sum_{i\in I_p} (\hat{P}_i)_{jj'}
    \end{equation*}
    is constant over $j\in J_q$.

    Let $x\in\bR^n$ be any feasible solution to \eqref{eq:QCQP} and define $\hat{x}\in\bR^n$ via $\hat{x}_j = y_q = \frac{1}{|J_q|}\sum_{j'\in J_q} x_{j'}$ for $j\in J_q$. For any $p\in\{1,2,\dots,s\}$, it follows from
    \begin{equation*}
        \frac{1}{2} x^\top P_i x + a_i^\top x \leq b_i,\quad i\in I_p,
    \end{equation*}
    and Lemma~\ref{lem:xMx_hatxMhatx} that
    \begin{align*}
        & \frac{1}{I_p}\sum_{i\in I_p} \hat{b}_i = \frac{1}{I_p}\sum_{i\in I_p} b_i \geq \frac{1}{2} x^\top \left(\frac{1}{|I_p|}\sum_{i\in I_p} P_i\right) x + \left(\frac{1}{I_p}\sum_{i\in I_p} a_i\right)^\top x \\
        \geq & \frac{1}{2} \hat{x}^\top \left(\frac{1}{|I_p|}\sum_{i\in I_p} P_i\right) \hat{x} + \left(\frac{1}{I_p}\sum_{i\in I_p} a_i\right)^\top \hat{x} = \frac{1}{2} \hat{x}^\top \left(\frac{1}{|I_p|}\sum_{i\in I_p} \hat{P}_i\right) \hat{x} + \left(\frac{1}{I_p}\sum_{i\in I_p} \hat{a}_i\right)^\top \hat{x}.
    \end{align*}

    Note that for any $i,i'\in I_p$ and any $q,q'\in\{1,2,\dots,t\}$, we have
    \begin{align*}
        & \hat{C}_i^{L,V} = \hat{C}_{i'}^{L,V} \\
        \implies & \sum_{j\in W} \text{hash}\left(\hat{C}_j^{L-1,W}, \hat{C}_{ij}^{L-1,E}\right) = \sum_{j\in W} \text{hash}\left(\hat{C}_j^{L-1,W}, \hat{C}_{i'j}^{L-1,E}\right) \\ 
        \implies & \left\{\left\{\hat{C}_{ij}^{L,E}:j\in J_q\right\}\right\} = \left\{\left\{\hat{C}_{i'j}^{L,E}:j\in J_q\right\}\right\} \\
        \implies & \left\{\left\{\sum_{j'\in W}(\hat{P}_i)_{jj'}\text{hash}(\hat{C}_{j'}^{L-1,W}):j\in J_q\right\}\right\} \\
        &\qquad\qquad = \left\{\left\{\sum_{j'\in W}(\hat{P}_{i'})_{jj'}\text{hash}(\hat{C}_{j'}^{L-1,W}):j\in J_q\right\}\right\} \\
        \implies & \left\{\left\{\sum_{j'\in J_{q'}}(\hat{P}_i)_{jj'} : j\in J_q\right\}\right\} = \left\{\left\{\sum_{j'\in J_{q'}}(\hat{P}_{i'})_{jj'} : j\in J_q\right\}\right\} \\
        \implies & \sum_{j\in J_q} \sum_{j'\in J_{q'}} (\hat{P}_i)_{jj'} = \sum_{j\in J_q} \sum_{j'\in J_{q'}} (\hat{P}_{i'})_{jj'}.
    \end{align*}
    Therefore, it holds that
    \begin{equation*}
        \frac{1}{2} \hat{x}^\top \left(\frac{1}{|I_p|}\sum_{i'\in I_p} \hat{P}_{i'}\right) \hat{x} = \frac{1}{2} \hat{x}^\top \hat{P}_i \hat{x},\quad \forall~i\in I_p,
    \end{equation*}
    and hence that
    \begin{equation*}
        \frac{1}{2} \hat{x}^\top P_i \hat{x} + \hat{a}_i^\top x \leq \hat{b}_i,\quad \forall~i\in I_p.
    \end{equation*}
    We thus know that $\hat{x}$ is a feasible solution to \eqref{eq:LCQP_hat}. In addition, we have
    \begin{equation*}
        \frac{1}{2} x^\top Q x + c^\top x \geq \frac{1}{2} \hat{x}^\top Q \hat{x} + c^\top \hat{x} = \frac{1}{2}\hat{x}^\top\hat{Q}\hat{x} + \hat{c}^\top \hat{x},
    \end{equation*}
    which implies that $\Phi_{\textup{obj}}(G_{\textup{QCQP}})\geq \Phi_{\textup{obj}}(\hat{G}_{\textup{QCQP}})$. The reverse direction $\Phi_{\textup{obj}}(G_{\textup{QCQP}})\leq \Phi_{\textup{obj}}(\hat{G}_{\textup{QCQP}})$ is also true and we can conclude that $\Phi_{\textup{obj}}(G_{\textup{QCQP}})= \Phi_{\textup{obj}}(\hat{G}_{\textup{QCQP}})$.
\end{proof}

\begin{theorem}\label{thm:qcqp-sameWL2sol}
    For any $G_{\textup{QCQP}},\hat{G}_{\textup{QCQP}}\in \calG_{\textup{QCQP}}^{m,n}$ with $Q,\hat{Q},P_i,\hat{P}_i\succeq 0,\ i\in\{1,2,\dots,m\}$ that are feasible and bounded, if $G_{\textup{QCQP}}\sim\hat{G}_{\textup{QCQP}}$, then there exists some permutation $\sigma_W\in S_n$ such that $\Phi_{\textup{sol}}(G_{\textup{QCQP}}) =\sigma_W( \Phi_{\textup{sol}}(\hat{G}_{\textup{QCQP}}))$. Furthermore, if $G_{\textup{QCQP}}\ensuremath{\stackrel{W}{\sim}}\hat{G}_{\textup{QCQP}}$, then $\Phi_{\textup{sol}}(G_{\textup{QCQP}}) = \Phi_{\textup{sol}}(\hat{G}_{\textup{QCQP}})$.
\end{theorem}

\begin{proof}
    Based on Theorem~\ref{thm:qcqp_same_WL2obj}, Theorem~\ref{thm:qcqp-sameWL2sol} can be proved by the same arguments as in the proof of Lemma B.4 and Corollary B.7 in \cite{chen2022representing-lp}, which is included in the proof of Theorem~\ref{thm:lcqp-sameWL2obj}. 
\end{proof}

\begin{corollary}\label{cor:qcqp-sameWL2sol}
    For any $G_{\textup{QCQP}}\in\calG_{\textup{QCQP}}^{m,n}$ that is feasible and bounded and any $j,j'\in\{1,2,\dots,n\}$, if $C_j^{L,W} = C_{j'}^{L,W}$ holds for all $L\in\mathbb{N}_+$ and all hash functions, then $\Phi_{\textup{sol}}(G_{\textup{QCQP}})_j = \Phi_{\textup{sol}}(G_{\textup{QCQP}})_{j'}$.
\end{corollary}

\begin{proof}
    Let $\hat{G}_{\textup{QCQP}}$ be the QCQP-graph obtained from $G_{\textup{QCQP}}$ by relabeling $j$ as $j'$ and relabeling $j'$ as $j$. By Theorem~\ref{thm:qcqp-sameWL2sol}, we have $\Phi_{\textup{sol}}(G_{\textup{QCQP}}) = \Phi_{\textup{sol}}(\hat{G}_{\textup{QCQP}})$, which implies $\Phi_{\textup{sol}}(G_{\textup{QCQP}})_j = \Phi_{\textup{sol}}(\hat{G}_{\textup{QCQP}})_j = \Phi_{\textup{sol}}(G_{\textup{QCQP}})_{j'}$.
\end{proof}

\section{Potential extension to polynomial optimization}

Beyond QCQPs, the hyperedge-based approach may extend to more general polynomial optimization. We outline a high-level idea: consider a monomial term $F_{j_1,j_2,\dots,j_k} x_{j_1} x_{j_2} \cdots x_{j_k}$ in a polynomial objective or constraint. If the term appears in the objective, we model it as a hyperedge $\{\!\!\{w_{j_1},w_{j_2},\dots,w_{j_k}\}\!\!\}$; if it appears in the $i$-th constraint, we model it as $\{\!\!\{v_i,w_{j_1},w_{j_2},\dots,w_{j_k}\}\!\!\}$. We conjecture that GNNs operating on such hypergraphs can approximate key properties of convex polynomial optimization. As supporting evidence, our QCQP analysis in Appendix~\ref{sec:qcqp} shows that second-order hypergraph GNNs already achieve universal approximation for core properties, suggesting promise for broader generalizations.

\section{Implementation details and additional numerical results}
\label{sec:app-exp}

In this section, we explain how we formulate the optimization problems used in the numerical experiments and how to randomly generate problem instances. We mainly follow the settings of OSQP~\cite{osqp} with slight modifications.

\subsection{Random LCQP and MI-LCQP instance generation}
\label{sec:app-instance}

\textbf{Generic LCQP and MI-LCQP generation.} For all instances generated and used in our numerical experiments, we set $m=10$ and $n=50$, which means each instance contains 10 constraints and 50 variables. The sampling schemes of problem components are described below.
\begin{itemize}
    \item Matrix $Q$ in the objective function. We sample sparse, symmetric and positive semidefinite $Q$ using the \verb|make_sparse_spd_matrix| function provided by the \verb|scikit-learn| Python package, which imposes sparsity on the Cholesky factor. We set the \verb|alpha| value to 0.95 so that there will be around $10\%$ non-zero elements in the resulting $Q$ matrix.
    \item The coefficients $c$ in the objective function: $c_j\sim\mathcal{N}(0, 0.1^2)$.
    \item The non-zero elements in the coefficient matrix: $A_{ij}\sim\mathcal{N}(0,1)$. The coefficient matrix $A$ contains 100 non-zero elements. The positions are sampled randomly.
    \item The right hand side $b$ of the linear constraints: $b_i\sim\mathcal{N}(0,1)$.
    \item The constraint types $\circ$. We first sample equality constraints following the Bernoulli distribution $Bernoulli(0.3)$. Then other constraints takes the type $\leq$. Note that this is equivalent to sampling $\leq$ and $\geq$ constraints separately with equal probability, because the elements in $A$ and $b$ are sampled from symmetric distributions.
    \item The lower and upper bounds of variables: $l_j, u_j \sim \mathcal{N}(0,10^2)$. We swap their values if $l_j > u_j$ after sampling.
    \item (MI-LCQP only) The variable types are randomly sampled. Each type (\textit{continuous} or \textit{integer}) occurs with equal probability.
\end{itemize}

After instance generation is done, we collect labels, i.e., the optimal objective function values and optimal solutions, using one of the commercial solvers.

\textbf{LCQP instance generation for generalization experiments.} In this setting, we only sample different coefficients $c$ for different LCQP instances. We sample other components only once, i.e., $Q$, $A$, $b$, $l$, $u$ and $\circ$ in~\eqref{eq:LCQP}, and keep them constant and shared by all instances. We also slightly adjust the distributions from which these components are sampled as described below.

\begin{itemize}
    \item Matrix $Q$. We follow the same sampling scheme as above.
    \item The coefficients $c$ in the objective function: $c_j\sim\mathcal{N}(0, 1/n)$.
    \item The non-zero elements in the coefficient matrix: $A_{ij}\sim\mathcal{N}(0,1/n)$. The coefficient matrix $A$ contains 100 non-zero elements. The positions are sampled randomly.
    \item The right hand side $b$ of the linear constraints: $b_i\sim\mathcal{N}(0,1/n)$.
    \item The constraint types $\circ$. We follow the same sampling scheme as above.
    \item The lower and upper bounds of variables: $l_j, u_j \sim \mathcal{N}(0,1)$. We swap their values if $l_j > u_j$ after sampling.
\end{itemize}

For the generalization experiments, we first generate 25,000 LCQP instances for training, and then take the first 100/500/25,00/5,000/10,000 instances to form the smaller training sets. This ensures that the smaller training sets are subsets of the larger sets. The test set contains 1,000 instances that are generated separately.

\textbf{Portfolio optimization formulation and instance generation.} The portfolio optimization problems are formulated as below.
\begin{align}
    \min_{x,y}  & \quad \frac{1}{2}x^\top Dx + \frac{1}{2} y^\top y - \mu^\top x \\
    \text{s.t.} & \quad y = Fx, \quad \mathbf{1}^\top x = 1, \quad x \geq 0 \nonumber
\end{align}
Here $x\in\bR^s$ and $y\in\bR^t$ are the optimization variables, $D\in\bR^{s\times s}$ is a diagonal matrix with non-negative diagonal elements, $F\in\bR^{t\times s}$ is the factor modeling matrix. We generate portfolio optimization instances following the scheme below.
\begin{itemize}
    \item We set $s=50$ and $t=5$, resulting in LCQP instances with $m=6$ constraints and $n=55$ variables.
    \item The diagonal elements of $D$ are independently sampled from uniform distribution: $D_{ii}\sim U(0,\sqrt{t})$. $D$ is then used to form the matrix $Q=\begin{pmatrix}D & \\ & I_t\end{pmatrix}$.
    \item The coefficients $\mu$ in the objective function: $\mu_j\sim\mathcal{N}(0, 1)$.
    \item The non-zero elements in the factor modeling matrix $F$: $F_{ij}\sim\mathcal{N}(0,1)$. The coefficient matrix $F$ contains 25 non-zero elements. The positions are sampled randomly.
\end{itemize}

\textbf{SVM optimization formulation and instance generation.} The support vector machine optimization problems are formulated as below.
\begin{align}
    \min_{x,t}  & \quad \frac{1}{2}x^\top x + \lambda \mathbf{1}^\top t \\
    \text{s.t.} & \quad t \geq \mathrm{diag}(y) Dx + \mathbf{1}, \quad  t \geq 0 \nonumber
\end{align}
Here $x\in\bR^s$ and $t\in\bR^t$ are the optimization variables, $D\in\bR^{t\times s}$ is the data matrix, $y\in\bR^t$ is the binary label vector, and $\lambda$ is a hyperparameter which we set to $1/2$. We generate SVM optimization instances following the scheme below.
\begin{itemize}
    \item We set $s=5$ and $t=50$.
    \item The non-zero elements in the data matrix $D$: $D_{ij}\sim\mathcal{N}(-0.1, 0.1)$ for $i\leq t/2$; $D_{ij}\sim\mathcal{N}(0.1, 0.1)$ otherwise. The coefficient matrix $D$ contains 100 non-zero elements. The positions are sampled randomly.
    \item The binary label vector $y$: $y_i = -1$ for $i\leq t/2$; $y_i = 1$ otherwise.
\end{itemize}

\subsection{Details of GNN implementation}
We implement GNN with Python~3.9 and TensorFlow~2.16.1~\cite{abadi2016tensorflow}. Our implementation is built by extending the GNN implementation in~\cite{gasse2019exact}.\footnote{See \url{https://github.com/ds4dm/learn2branch}.}
The embedding mappings $f_0^V,f_0^W$ are parameterized as linear layers followed by a non-linear activation function; $\{f_l^V,f_l^W,g_l^V,g_l^W,g_l^Q\}_{l=1}^L$ and the output mappings $r_1,r_2$ are parameterized as 2-layer multi-layer perceptrons (MLPs) with respective learnable parameters. The parameters of all linear layers are initialized as orthogonal matrices. We use ReLU as the activation function.

In our experiments, we train GNNs with embedding sizes of 64, 128, 256, 512 and 1,024. We show in Table~\ref{tab:num-params} the number of learnable parameters in the resulting network with each embedding size.

\begin{table}[h]
\caption{Number of learnable parameters in GNN with different embedding sizes.}
\label{tab:num-params}
\centering
\begin{tabular}{@{}cc@{}}
\toprule
Embedding size & Number of parameters \\ \midrule
64             & 112,320              \\
128            & 445,824              \\
256            & 1,776,384            \\
512            & 7,091,712            \\
1,024          & 30,436,352           \\ \bottomrule
\end{tabular}
\end{table}

\subsection{Details of GNN training}

We adopt Adam~\cite{kingma2014adam} to optimize the learnable parameters during training. We use an initial learning rate of $5\times10^{-4}$ for all networks. We set the batch size to 2,500 or the size of the training set, whichever is the smaller. In each mini-batch, we combine the graphs into one large graph to accelerate training. All experiments are conducted on a single NVIDIA Tesla V100 GPU.

We use mean squared relative error as the loss function, which is defined as
\begin{align}
    L_\mathcal{G}(F_W) = \mathbb{E}_{G\sim\mathcal{G}} \left[
        \frac{\| F_W(G) - \Phi(G) \|_2^2}{\max(\|\Phi(G)\|, 1)^2}
    \right],
\end{align}
where $F_W$ is the GNN, $\mathcal{G}$ is a mini-batch sampled from the whole training set, $G$ is a problem instance in the mini-batch $\mathcal{G}$, and $\Phi(G)$ is the label of instance $G$.
During training, we monitor the average training error in each epoch. If the training loss does not improve for 50 epochs, we will half the learning rate and reset the parameters of the GNN to those that yield the lowest training error so far. We observe that this helps to stabilize the training process significantly and can also improve the final loss achieved.

\subsection{Generalization results on LCQP}

Figure~\ref{fig:app-lcqp-generalization} shows the variations of training and test errors when training GNNs of an embedding size of 512 on different numbers of LCQP problem instances. We observe similar trends for both prediction tasks, that the generalization gap decreases and the generalization ability improves as more instances are used for training. This result implies the potential of applying trained GNNs to solve QP problems that are unseen during training but are sampled from the same distribution, as long as enough training instances are accessible and the instance distribution is specific enough (in contrast to the generic instances used in experiments of Figure~\ref{fig:lcqp-100} and \ref{fig:exp-emb}).

\begin{figure}[t]
  \centering
    \centering
    \begin{subfigure}[b]{0.48\textwidth}
      \centering
      \includegraphics[width=\textwidth]{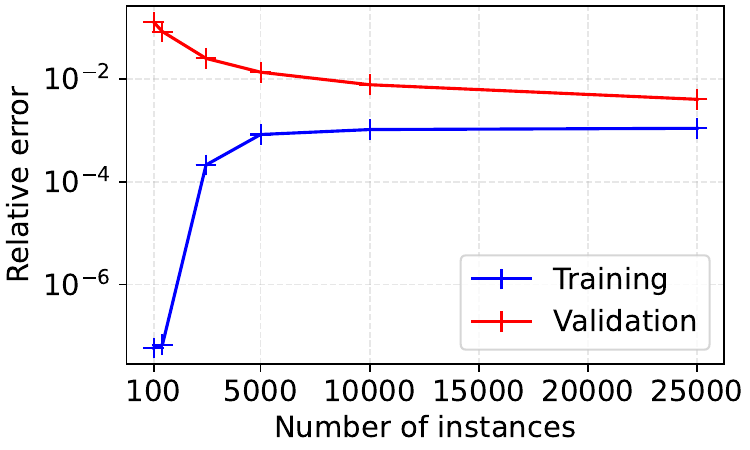}
      \caption{Fit $\Phi_{\textup{obj}}$ for LCQP}
      \label{fig:app-lcqp-obj-generalization}
    \end{subfigure}
    \centering
    \begin{subfigure}[b]{0.48\textwidth}
      \centering
      \includegraphics[width=\textwidth]{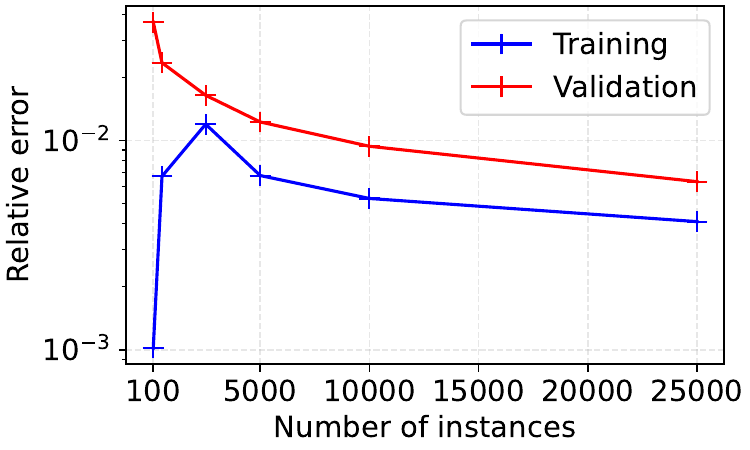}
      \caption{Fit $\Phi_{\textup{sol}}$ for LCQP}
      \label{fig:app-lcqp-sol-generalization}
    \end{subfigure}
    \caption{Training and test errors when training GNNs with an embedding size of 512 on different numbers of LCQP instances to fit $\Phi_{\textup{obj}}$ and $\Phi_{\textup{sol}}$.}
    \label{fig:app-lcqp-generalization}
\end{figure}

\subsection{Details for the Maros-Meszaros test set}

To show the fitting ability of GNNs on more realistic QP problems, we train GNNs on the Maros and Meszaros Convex Quadratic Programming Test Problem Set~\cite{maros1999repository}, which contains 138 quadratic programs that are designed to be challenging. We apply equilibrium scaling to each problem and also scale the objective function so that the $Q$ matrix will not contain too large elements. We collect the optimal solutions and objective values of the test instances using an open-sourced QP solver called PIQP~\cite{schwan2023piqp}, which is benchmarked to achieve best performances on the Maros Meszaros test set among many other solvers~\cite{qpbenchmark2024}. PIQP solves 136 problem instances successfully, which are then used to train four GNNs with embedding size of $64,128,256,512$. The training protocol follows the experiments using synthesized QP instances in Section~\ref{sec:numerical-exps}.

\newpage

\end{document}